\definecolor{yg}{RGB}{245, 75, 66}
\definecolor{cm}{RGB}{50, 168, 115}
\definecolor{yy}{RGB}{200,0,200}
\definecolor{zm}{RGB}{200,125,40}
\theoremstyle{plain}
\newmdtheoremenv[style=myenvs]{theorem}{Theorem}
\newmdtheoremenv[style=myenvs]{proposition}{Proposition}
\newmdtheoremenv[style=myenvs]{lemma}{Lemma}
\newmdtheoremenv[style=myenvs]{example}{Example}
\newmdtheoremenv[style=myenvs]{corollary}{Corollary}
\theoremstyle{definition}
\newmdtheoremenv[style=myenvs]{definition}{Definition}
\newmdtheoremenv[style=myenvs]{assumption}{Assumption}
\theoremstyle{remark}
\newcommand{\ind}{\mathbbm{1}}
\newcommand{\indep}{\rotatebox[origin=c]{90}{$\models$}}
\def\##1\#{\begin{align}#1\end{align}}
\def\$#1\${\begin{align*}#1\end{align*}}
\definecolor{myblue}{rgb}{.8, .8, 1}
\definecolor{mathblue}{rgb}{0.2472, 0.24, 0.6} 
\definecolor{mathred}{rgb}{0.6, 0.24, 0.442893}
\definecolor{mathyellow}{rgb}{0.6, 0.547014, 0.24}
\newcommand{\tB}{{\tilde{B}}}
\newcommand{\calA}{{\mathcal{A}}}
\newcommand{\calB}{{\mathcal{B}}}
\newcommand{\calC}{{\mathcal{C}}}
\newcommand{\calD}{{\mathcal{D}}}
\newcommand{\calE}{{\mathcal{E}}}
\newcommand{\calF}{{\mathcal{F}}}
\newcommand{\calH}{{\mathcal{H}}}
\newcommand{\calL}{{\mathcal{L}}}
\newcommand{\calM}{{\mathcal{M}}}
\newcommand{\calN}{{\mathcal{N}}}
\newcommand{\calO}{{\mathcal{O}}}
\newcommand{\calS}{{\mathcal{S}}}
\newcommand{\calT}{{\mathcal{T}}}
\newcommand{\calU}{{\mathcal{U}}}
\newcommand{\calV}{{\mathcal{V}}}
\newcommand{\calW}{{\mathcal{W}}}
\newcommand{\bfm}[1]{\ensuremath{\mathbf{#1}}}
 \def\bA{\bfm A}
  \def\EE{\mathbb{E}}
 \def\bI{\bfm I}
  \def\NN{\mathbb{N}}
  \def\PP{\mathbb{P}}
  \def\RR{\mathbb{R}}
\def\vareps{\varepsilon}
\def\hat{\widehat}
\def\tilde{\widetilde}
\def\tr{\mathrm{train}}
\def\te{\mathrm{test}}
\DeclareTextFontCommand{\texttt}{\ttfamily\upshape}
\DeclareMathOperator*{\esssup}{ess\,sup}
\def\orderleq{\preceq_{\scaleto{\calM}{4pt}}}
\def\orderle{\prec_{\scaleto{\calM}{4pt}}}
\def\ordereq{=_{\scaleto{\calM}{4pt}}}
\def\neworderleq{\overset{*}{\preceq}_{\scaleto{\calM}{4pt}}}
\def\neworderle{\overset{*}{\prec}_{\scaleto{\calM}{4pt}}}
\def\newordereq{\overset{*}{=}_{\scaleto{\calM}{4pt}}}
\def\id{\texttt{ID}}
\def\idmap{\texttt{id}}
\def\tB{\tilde B_d}
\def\rB{\RR^{d}}
\title{Multi-modal contrastive learning adapts to\\ intrinsic dimensions of shared latent variables}
\date{\today}
\author[1]{Yu Gui}
\author[1]{Cong Ma}
\affil[1]{Department of Statistics, University of Chicago}
\author[2]{Zongming Ma}
\affil[2]{Department of Statistics and Data Science, Yale University}
\begin{document}

\maketitle

\begin{abstract}

Multi-modal contrastive learning as a self-supervised representation learning technique has achieved great success in foundation model training, such as CLIP~\citep{radford2021learning}. 
In this paper, we study the theoretical properties of the learned representations from multi-modal contrastive learning beyond linear representations and specific data distributions. 
Our analysis reveals that, enabled by temperature optimization, multi-modal contrastive learning not only maximizes mutual information between modalities but also adapts to intrinsic dimensions of data, which can be much lower than user-specified dimensions for representation vectors. 
Experiments on both synthetic and real-world datasets demonstrate the ability of contrastive learning to learn low-dimensional and informative representations, bridging theoretical insights and practical performance.

\end{abstract}

\addtocontents{toc}{\protect\setcounter{tocdepth}{0}}

\section{Introduction}
The growing availability of data sources has enabled interdisciplinary research to leverage \emph{multi-modal} data, which measures each unit from different aspects with various types of information. For example, the availability of data types including images, text, and audio fostered the advancement of cross-modal foundation models in recent years \citep{li2019visualbert,lu2019vilbert,tan2019lxmert,radford2021learning}. Another notable example is single-cell multi-omics technologies \citep{teichmann2020method} which utilize multi-modal measurements of individual cells for more informative scientific discovery \citep{argelaguet2020mofa+,kim2020citefuse,hao2021integrated,nandy2024multimodal}. 
This emerging phenomenon raises a key question: \emph{How can one efficiently integrate data from multi-modalities?}

Contrastive language-image pre-training (CLIP) was recently proposed in \cite{radford2021learning}, which utilizes the self-supervised contrastive learning technique to train a vision-language model with unlabeled data and to obtain representations of data from multi-modalities. 
Mathematically, given observations from two modalities: continuous random vectors $X \in \RR^{d_1}$ and $Y \in \RR^{d_2}$, the goal of multi-modal contrastive learning is to learn representation maps $f \in \calH_X: \RR^{d_1} \rightarrow \rB$ and $g \in \calH_Y: \RR^{d_2} \rightarrow \rB$
that map data to representations supported on $\rB$, where $d$ is the user-specified output dimension.

With a training set $\{(X_i,Y_i)\}_{i \in [N]}$, CLIP~\cite{radford2021learning} minimizes the infoNCE contrastive loss: 
\begin{align}\label{eq:cl-loss-N}
    \min\limits_{(f,g,\tau) \in \calH_X \times \calH_Y \times \RR_+}\;\calL^N(f, g, \tau) = & -\frac{1}{N}\sum_{i\in [N]}\log\frac{\exp\left(\frac{\sigma( f(X_i), g(Y_i) )}{\tau}\right)}{N^{-1} \sum_{j \in [N]} \exp\left(\frac{\sigma( f(X_i), g(Y_j) )}{\tau}\right)}\\ & -\frac{1}{N}\sum_{i\in [N]}\log\frac{\exp\left(\frac{\sigma( f(X_i), g(Y_i) )}{\tau}\right)}{N^{-1} \sum_{j \in [N]} \exp\left(\frac{\sigma( f(X_j), g(Y_i) )}{\tau}\right)},
\end{align}
where $\sigma(\cdot,\cdot)$ is a bivariate similarity measure, with larger values of $\sigma(f(X),g(Y))$ indicating higher similarity between embeddings $f(X)$ and $g(Y)$.
Here we take into account the optimization over temperature $\tau$ to align with the practice of CLIP \cite{radford2021learning}.
In Figure~\ref{fig:flow}, we plot the pipeline of CLIP~\citep{radford2021learning} using a bone marrow single-cell CITE-seq dataset \citep{hao2021integrated} for illustration. 
\begin{figure}[tb]
    \centering
    \includegraphics[width=0.7\linewidth]{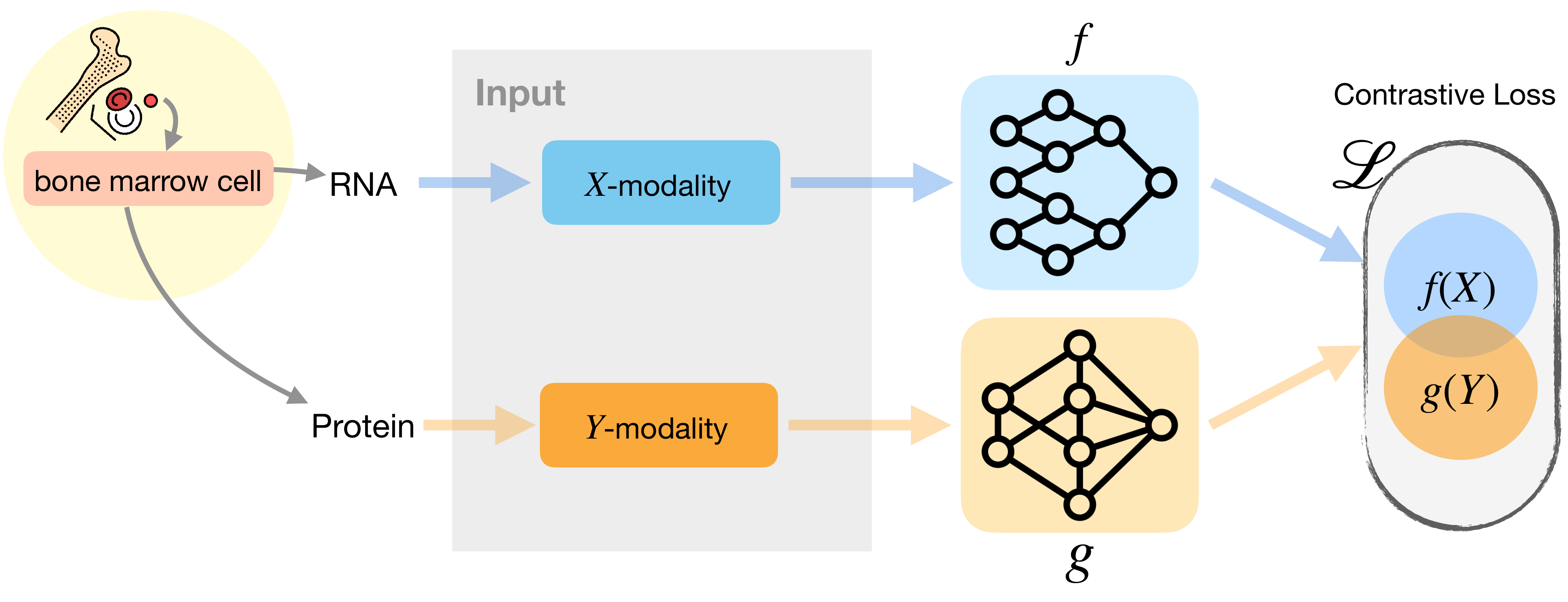}
    \caption{Multi-modal contrastive learning applied to the bone marrow single-cell CITE-seq data.}
    \label{fig:flow}
\end{figure}

CLIP has achieved outstanding performance in zero-shot accuracy for downstream tasks and has inspired a number of follow-up works across domains \citep{galatolo2021generating,li2021supervision,xu2023multimodal,shi2023towards}.
The theoretical analysis of CLIP was later initialized in \cite{ren2023importance}, \cite{nakada2023understanding}, and \cite{chen2023understanding}, where the properties of the learned representation and its effect on the downstream accuracy are of interest. However, the distribution of multi-modal data is mainly restricted to factor models with shared latent variables, and the representation map is commonly assumed to be linear, neither of which is close to the actual practice of CLIP.
In this paper, we aim to theoretically study the properties of multi-modal contrastive learning with data from general distributions and representation mappings in general function classes.

\subsection{A motivating example: CLIP adapts to intrinsic dimension}\label{sec:empirical_example}

We begin with describing several interesting phenomena arising from using CLIP on a synthetic data. 
The phenomena motivate our later theoretic studies. 

Consider the synthetic setting where $X$ and $Y$ are generated from the following distribution with $k^* < \min\{d_1, d_2\}$:
$Y_i \overset{i.i.d.}{\sim} \calN(0,\bI_{d_2})$, $\xi_i \overset{i.i.d.}{\sim} \calN(0,\bI_{d_1 - k^*})$, and $X_i = (Y_{i1},\cdots,Y_{ik^*},\xi^\top_i)^\top$.
We set $k^* = 2$, $d_1 = d_2 = 20$, and the output dimension of $f(X)$ and $g(Y)$ is set to be $d=3$. 
We set the function class to be $5$-layer ReLU neural networks with all middle-layer widths fixed at $50$. 
We use a training set of size $12000$ and a separate test set of size $2000$.  
In the remaining part of this paper, we adopt the inner product with population-level normalization as the similarity measure\footnote{In Appendix~\ref{sec:compar}, 
we show that it is comparable to cosine similarity in terms of both representation intrinsic dimensions and downstream task performances.}, i.e., $\sigma(f(X), g(\tilde Y)) = \frac{\langle f(X), g(\tilde Y) \rangle}{\EE\|f(X)\| \cdot \EE\|g(\tilde Y)\|}$. 
In experiments, we leave out a fixed subset of size $2000$ of the training set to estimate expected norms in each iteration. 
In Figure~\ref{fig:synthetic-3d}, we plot the \emph{out-of-sample} similarities between positive and negative pairs, the change of estimated intrinsic dimension along training (see  Section~\ref{sec:experim} for further details), and temperature $\tau$ along training. 
We note the following phenomena from the results: 
\begin{enumerate}
\item For positive pairs, $\sigma(f(X_i),g(Y_i))$ tends to concentrate around one, while similarities between negative pairs are capped by one;
\item Although the output dimension is $3$, representations with intrinsic dimension $2$ instead of $3$ are preferred by infoNCE loss;
\item The optimized temperature converges to $\emph{zero}$.
\end{enumerate}

\begin{figure*}[t]
    \centering
    \includegraphics[width=\linewidth]{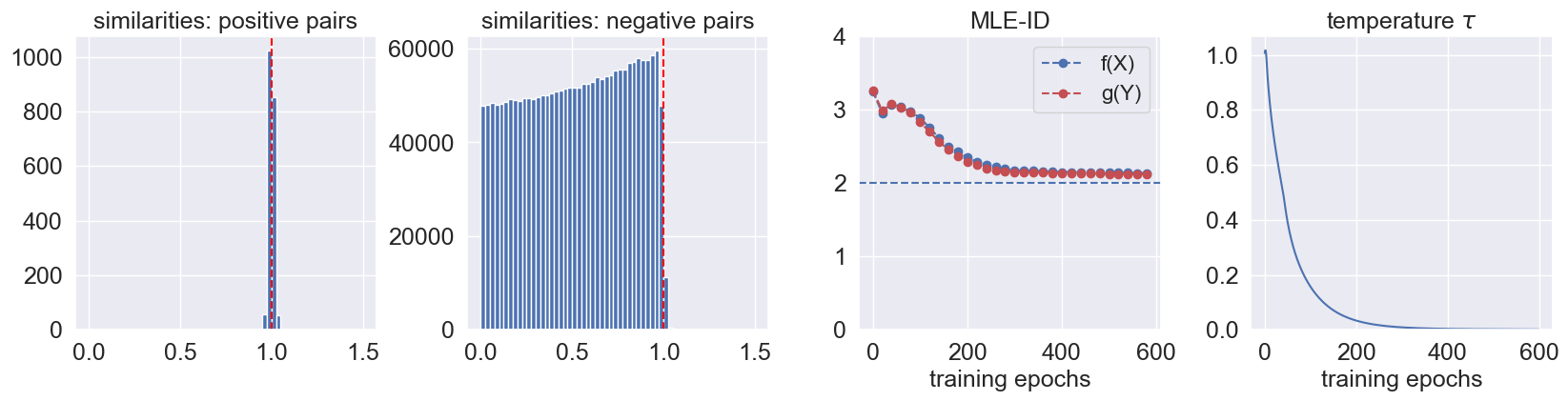}
    \caption{Histograms of out-of-sample similarities, change of intrinsic dimension, and convergence of temperature (linear setting: $k^* = 2$, $d = 3$).}
    \label{fig:synthetic-3d}
\end{figure*}

The observed phenomena, especially the intrinsic dimension selection and the convergence of temperature to zero, motivate us to understand
multi-modal contrastive learning via CLIP from the theoretical perspective in this work.

\subsection{Why existing understanding of multi-modal contrastive learning is insufficient?}\label{sec:failure}

Existing work explains the representations learned by contrastive learning from two different perspectives.  
Here, we elaborate on why neither can fully explain the phenomena observed in Section~\ref{sec:empirical_example}. 

\paragraph{Perspective from alignment and uniformity} \cite{wang2020understanding} decomposes the population infoNCE loss $\calL$ into alignment and uniformity terms such that $\calL(f, g, \tau) = \calL_{\mathsf{align}}(f, g, \tau) + \calL_{\mathsf{unif}}(f, g, \tau)$.
The first term $\calL_{\mathsf{align}}$ favors alignment as it is minimized when $f(X) = g(Y)$ almost surely. 
In addition, the result in~\cite{wang2020understanding} implies that among all aligned representations, the second term $\calL_{\mathsf{unif}}$ favors uniformity as it is minimized when $f(X)$ is uniformly distributed over the entire output space, which is restricted to be the unit hypersphere {with \emph{a user-specified dimension}}
in~\cite{wang2020understanding}.
A crucial limitation of this result is that one needs the correct specification of a \emph{true dimension} of data. In other words, if the aligned representation has a smaller dimension compared to the ambient output dimension (which arguably is always the case in practice) or the function class is not expressive enough to produce uniform representations on the entire output space, 
the theory established in~\cite{wang2020understanding} is not applicable. 

\paragraph{Perspective from mutual information maximization}
Since the infoNCE loss can be viewed as a variational upper bound of the negative mutual information \citep{oord2018representation,poole2019variational} (see also Lemma~\ref{lem:decomp} below), another line of research understands the global minimizer of infoNCE loss as the mutual information maximizer \citep{uesaka2024understanding,hjelm2018learning,tschannen2019mutual,tian2020contrastive,zimmermann2021contrastive,li2021self}.
Although existing works connect infoNCE loss to its mutual information bound, as is pointed out in \cite{tschannen2019mutual}, solely maximizing the mutual information may result in undesirable representations. 
For instance, in
the example in Section~\ref{sec:empirical_example}, as long as $f(X) \approx g(Y)$ almost surely,\footnote{Throughout this paper, ``almost surely'' refers to almost sure events with respect to the joint distribution of $(X,Y)$, unless otherwise specified. Likewise, ``measurable'' refers to measurability with respect to the Lebesgue measure, unless stated otherwise.} the mutual information between representations is infinite. {Hence, this theory} fails to explain why, in the motivating example, $f(X)$ and $g(Y)$ are supported on a $2$-dimensional manifold as opposed to a curve or a number of standalone points, since the latter two can also maximize mutual information between representations at infinity. 
This indicates that infoNCE loss minimization is not simply mutual information maximization and requires a more fine-grained analysis.

\subsection{Preview of our results and contributions}

We now provide a preview of our main theoretical findings that allow us to explain the three phenomena observed in Section~\ref{sec:empirical_example}.
A formal version will be presented in Section~\ref{sec:theory}.

\begin{theorem}[Informal]
Suppose there exist aligned representations that maximize the shared information between $X$ and $Y$. Then, any ``minimizer'' $(f,g,\tau)$ of the infoNCE loss satisfies 
\begin{enumerate}
    \item $\sigma(f(X),g(Y)) = \text{constant}$ almost surely, which caps similarities between negative pairs;
\item the intrinsic dimension {of shared latent variables} in the multi-modal data {is} exactly captured by $(f(X),g(Y))$.
\item $\tau=0^{+}$. 
\end{enumerate}

\end{theorem}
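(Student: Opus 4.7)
The plan is to combine the mutual-information decomposition from Lemma~\ref{lem:decomp} with a Laplace-style analysis of the log-partition function as $\tau \to 0^+$. Introduce the per-sample gap
$$h(X,\tau) = \tau \log \EE_{\tilde Y}\exp\bigl(\sigma(f(X),g(\tilde Y))/\tau\bigr) - \sigma(f(X),g(Y)),$$
so that the first term of the population loss equals $\EE[h(X,\tau)]/\tau$, and similarly for the symmetric term. Standard small-$\tau$ asymptotics give $h(X,\tau)\to \esssup_{\tilde Y}\sigma(f(X),g(\tilde Y)) - \sigma(f(X),g(Y)) \geq 0$ almost surely. Hence for any candidate ``minimizer'' the loss can remain finite as $\tau\to 0^+$ only if $\sigma(f(X),g(Y)) = \esssup_{\tilde Y}\sigma(f(X),g(\tilde Y))$ a.s. Running the same argument on the symmetric term and noting that the two essential suprema depend only on $X$ and on $Y$ respectively while being a.s.\ equal to each other, I conclude that $\sigma(f(X),g(Y))$ is a.s.\ a single deterministic constant $c$, with $\sigma(f(X),g(\tilde Y)),\sigma(f(\tilde X),g(Y))\leq c$ a.s.\ on the product of marginals. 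This delivers claim (1) and the capping of negative pairs.

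For claim (3), substitute the constant $c$ and the one-sided inequalities back into the loss. Each per-sample term reduces to $-\log\EE_{\tilde Y}\exp\bigl((\sigma(f(X),g(\tilde Y))-c)/\tau\bigr)$, with a non-positive exponent almost surely. Monotone convergence makes this quantity strictly monotone in $\tau$, with $\tau \to 0^+$ limit equal to $-\log\PP(\sigma(f(X),g(\tilde Y))=c\mid X)$. Combined with the variational bound in Lemma~\ref{lem:decomp}, whose infimum $-I(X;Y)$ is attained only in this limit, any minimizer must drive $\tau$ to $0^+$, establishing (3).

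The intrinsic-dimension claim (2) is the delicate step. The lower bound ``intrinsic dimension $\geq k^*$'' comes from the MI-maximizing hypothesis: $I(f(X);g(Y)) = I(X;Y)$, whose information dimension equals $k^*$, forces the joint law of $(f(X),g(Y))$ to be non-degenerate along $k^*$ independent directions. The matching upper bound is where the main obstacle lies. My plan is to exploit claim~(1): the a.s.\ equality $\sigma(f(X),g(Y))=c$, together with $\sigma(f(X),g(\tilde Y))\leq c$ on the product of marginals, pins $(f(X),g(Y))$ to a level set that is essentially parameterized by the G\'acs--K\"orner common information of $(X,Y)$; any coordinate of $f(X)$ varying outside this common information either breaks constancy or creates a strictly larger cross-similarity, contradicting~(1). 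Concretely, one extracts a measurable map from $(X,Y)$ to the shared latent and shows that both $f$ and $g$ factor through it, so that their image has intrinsic dimension exactly $k^*$.

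The hard part is this last translation from the pointwise geometric constraints to a global manifold-dimension statement: it requires either a G\'acs--K\"orner-type common-information argument in the continuous setting, or a careful measure-theoretic analysis of the level-set structure of $\sigma(f(\cdot),g(\cdot))$ on the support of the joint distribution. I expect the formal version in Section~\ref{sec:theory} to define ``intrinsic dimension'' so as to make this translation clean, e.g.\ via the Hausdorff or information dimension of the law of $(f(X),g(Y))$, allowing the qualitative picture above to be made rigorous without imposing ad hoc smoothness assumptions on $f$ and $g$.
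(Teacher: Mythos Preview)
Your treatment of claims (1) and (3) is broadly reasonable and parallels the paper's approach, though via a more direct Laplace analysis rather than the KL-divergence decomposition of Lemma~\ref{lem:decomp}. One imprecision: you write that ``the loss can remain finite as $\tau\to 0^+$ only if\ldots'', but for the minimizer the loss goes to $-\infty$, not stays finite; the paper handles this by discretizing and comparing $\calL(f_M,g_M,\tau)$ against the benchmark $-2I_M^*(\calH)$, so that differences remain finite and the comparison is meaningful. Your $h(X,\tau)/\tau$ argument can be made to work once rephrased as ``bounded above'' rather than ``finite'', but the paper's route through the two KL terms $\Delta(P;Q_\tau,\tilde Q_\tau)$ is what makes the limit $\tau\to 0^+$ and the discretization interact cleanly.

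The real gap is claim (2), and specifically the upper bound $\id(f)\le k^*$. Your proposed route through level-set geometry and a continuous G\'acs--K\"orner common-information argument is not what the paper does, and you yourself flag it as incomplete. The paper's argument is quite different and does not rely on any structural analysis of the level set $\{\sigma(f(\cdot),g(\cdot))=c\}$. Instead, it first proves a sufficiency property (Lemma~\ref{lem:VH}(2)): for any ideal $(f^*,g^*)\in\calV(\calH)$ and any $(f,g)$ with larger intrinsic dimension, one has $f(X)\indep g(Y)\mid f^*(X)$. Given this conditional independence, the paper defines $f_*(X)=\EE[f(X)\mid f^*(X)]$ and uses Jensen's inequality on the convex maps $x\mapsto\log\EE_{\tilde Y}\exp(\langle x,g(\tilde Y)\rangle/\tau)$ and $x\mapsto\exp(\langle x,g(\tilde Y)\rangle/\tau)$ to show that replacing $f$ by $f_*$ strictly decreases the loss (after an appropriate temperature rescaling) unless $f$ is already $f^*$-measurable. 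This contradicts minimality of $(f,g)$ and forces $f$ to factor through $f^*$, whence $\id(f)\le k^*$ by Lemma~\ref{lem:id}. Your level-set idea, even if it could be completed, would need an entirely separate measure-theoretic development; the paper's convexity argument sidesteps this.

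For the lower bound $\id(f)\ge k^*$, your sketch (``information dimension equals $k^*$, forces non-degeneracy along $k^*$ directions'') is too vague and conflates the paper's Definition~\ref{def:dim} with information/Hausdorff dimension. The paper's argument is constructive: if $\id(f)=k<k^*$, one appends a scalar coordinate $\Phi_1(f(X),f^*(X))$ to $f$ (and symmetrically to $g$) chosen so that the conditional mutual information given $f(X)$ is strictly positive, producing a new pair $(\tilde F,\tilde G)$ with strictly larger mutual information, contradicting $(f,g)\in\calW(\calH)$.
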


In addition, we summarize our contributions in this paper as follows.
\begin{itemize}
\item We investigate the theoretical properties of \emph{learned representations} from multi-modal contrastive learning and take into account \emph{temperature optimization} in theoretical analysis.
\item Based on a precise (variational) decomposition of the infoNCE loss in Lemma~\ref{lem:decomp}, we theoretically show that {temperature optimization enables}
multi-modal contrastive learning, which encourages multi-modal representations to be sufficient and to maximize the similarity measure,
{to also adapt} to the intrinsic dimension of data. To be more concrete, different from \cite{wang2020understanding}, our theory does not require the existence of aligned representations that are uniform over the entire output space, i.e., no prior knowledge about the true intrinsic dimension is required.
\item Moreover, the practical relevance of our theory is supported by empirical findings in real-world datasets, such as single-cell multi-omics and image-text datasets, which demonstrate that in many cases, there are a relatively small number of effective shared features that affect downstream tasks.
\end{itemize}

\paragraph{Organization of paper.} 
The rest of this paper is organized as follows. 
The ideal properties of representations and the intrinsic dimension are formally defined in Section~\ref{sec:ideal}.
Main theoretical results are shown in Section~\ref{sec:theory}, which includes a formal statement of our theorem in Section~\ref{sec:adapt}.
Additional related works are summarized in Section~\ref{sec:related}.
Experimental results for both synthetic and \texttt{CITE-seq} single-cell datasets are presented in Section~\ref{sec:experim}.
Technical proofs, extensions (e.g., connection to sufficient dimension reduction in Appendix~\ref{sec:sdr}), and additional numerical experiments are deferred to the appendix.

\section{Ideal representations and intrinsic dimension}\label{sec:ideal}

In Section~\ref{sec:properties_of_representations}, we define two key properties of the ideal representations $(f^\star(X), g^\star(Y))$ when learning from the paired data $(X,Y)$---alignment and maximal mutual information, and in Section~\ref{sec:ID}, we define the intrinsic dimension. Throughout the paper, we define $\calH = \calH_X \times \calH_Y$.

\subsection{Alignment and maximal mutual information}
\label{sec:properties_of_representations}

\paragraph{Alignment and similarity maximization.}
Inspired by the seminal work \citep{wang2020understanding} in single-modality contrastive learning, we propose the following notion of \emph{alignment} for learning representations from multiple modalities.

\begin{definition}
    We define the set of representation maps that realize alignment and similarity maximization as 
    \begin{equation*}
        \calA(\calH) = \left\{(f,g) \in \calH:\;\frac{f(X)}{\EE\|f(X)\|} = \frac{g(Y)}{\EE\|g(Y)\|},\quad  \sigma(f(X), g(Y)) = m_{\sigma}(f,g) \;\textnormal{ almost surely}\right\}.
    \end{equation*}
Here and after, $m_{\sigma}(f,g) \coloneqq {\rm ess\;sup}_{X \indep \tilde Y}\;\sigma(f(X),g(\tilde Y))$ for any $(f,g) \in \calH$. 
\end{definition}

{In our motivating example in Section~\ref{sec:empirical_example}, there exist representations $f(X) = (X_1,X_2)/\sqrt{X_1^2+X_2^2} = (Y_1, Y_2)/\sqrt{Y_1^2+Y_2^2} = g(Y)$ satisfying $(f,g) \in \calA(\calH)$. In this case, $\sigma(f(X),g(Y)) = 1 = m_{\sigma}(f,g)$ almost surely.}

\paragraph{Maximal mutual information.}

A classical notion to measure statistical dependence is mutual information: 
$I(X;Y) = D_{\rm KL}\left(P_{X,Y} ~||~ P_X \otimes P_Y\right)$.
It is tempting to ask for representations $(f,g)$ such that the mutual information $I(f(X);g(Y))$  is maximized. This is known as the infoMax principle \citep{tschannen2019mutual,bachman2019learning}. 
However, as we have argued in Section~\ref{sec:failure}, this vanilla definition of mutual information is not fine-grained enough to compare aligned representations: whenever continuous representations $f(X) = g(Y)$ almost surely, we have $I(f(X);g(Y)) = +\infty$ \citep{laughlin1981simple,tishby2015deep}. 

To mitigate the deficiency, we adopt a fine-grained \emph{order} for mutual information. 
For each integer $M \geq 1$, we consider a discretized version\footnote{See  Section~\ref{sec:disc} for precise definitions of the discretizations. Related results on approximating mutual information based on discretization and binning can also be found in \cite{kraskov2004estimating,paninski2003estimation,cover1999elements}.
} $(f_M, g_M)$ of the representations $(f,g)$. 
As $(f_M, g_M)$ are supported on finitely many points, the mutual information $I(f_M(X);g_M(Y))$ is always finite, albeit $\lim_{M \to +\infty} I(f_M(X);g_M(Y))$ is possibly infinite. 

With the discretization in place, we can define the set of representations with maximal mutual information as follows.

\begin{definition}[Maximal mutual information]
We define the following set of pairs $(f,g) \in \calH$ that sufficiently capture the dependence between $X$ and $Y$:
\begin{align}
\calW(\calH) = \{(f,g) \in \calH:\; \liminf_{M \rightarrow +\infty} \;\big(I(f_M(X);g_M(Y)) - I^*_M(\calH)\big) \geq 0\},
\end{align}
where $I_M^*(\calH) = \sup_{(f,g) \in \calH} I(f_M(X); g_M(Y))$. 
\end{definition} 
Roughly speaking, representations $(f,g)$ have maximal mutual information if at every discretization level $M$, the mutual information $I(f_M(X);g_M(Y))$ is comparable to the maximal discrete mutual information $I_M^*(\calH)$ {achievable by the function class}.
In our motivating example in Section~\ref{sec:empirical_example}, where the shared latent variables between $X$ and $Y$ are two-dimensional, the learned representations can maximize mutual information only if they can capture all the latent features and have the intrinsic dimension of $2$.

In the end, we define $\calV(\calH) = \calA(\calH) \cap \calW(\calH)$.
Throughout the paper, we assume $\calV(\calH) \neq \varnothing$, i.e., there exist aligned representations with maximal mutual information. 

\subsection{Intrinsic dimension}\label{sec:ID}

We now move on to define the intrinsic dimension of a representation function $f$. 
To begin with, we define the range of a function $f$ to be $R(f) = \{f(x):x \in \RR^{d_1}\}$, and define the usual linear dimension as ${\rm dim}(R(f)) = {\rm dim}({\rm span}(R(f)))$. 
However, this vanilla dimension is not able to capture the nonlinearity in $f$, and the possible manifold structure in the range of $f$. 
With this in mind, we adopt the following notion of intrinsic dimension, which is closely related to the dimension of manifolds \citep{lee2003smooth}. 
\begin{definition}[Intrinsic dimension.]\label{def:dim}
We define the intrinsic dimension of $f \in \calH_X$, denoted by $\id(f)$, as the smallest integer $k$ such that there exist a measurable function $h: \RR^{d_1} \rightarrow \rB$ with ${\rm dim}(R(h)) = k$ and an injective measurable function $\phi: R(h) \rightarrow \rB$  such that $f(x) = (\phi \circ h)(x)$ almost everywhere. 
\end{definition}

As an example, with a full-rank matrix $\bA \in \RR^{k \times d_Z}$ and an injective map 
${\zeta}: \RR^k \rightarrow \RR^d$, the representation 
$f(Z) = {\zeta}(\bA Z)$, 
which is usually known as the multi-index model \citep{xia2008multiple,li1991sliced}, has intrinsic dimension exactly $k$ where we can choose $h(Z) = \bA Z$ {and $\phi = \zeta$}.

\begin{proposition}\label{prop:VH}
    Suppose $\calV(\calH) \neq \varnothing$. Then all ideal representations $(f,g) \in \calV(\calH)$ have the same intrinsic dimension $k^\star$, that is, for all $(f,g) \in \calV(\calH)$, we have  $\id(f) = \id(g) = k^*$.
\end{proposition}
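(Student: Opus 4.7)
The plan is to split the proposition into two claims: (i) for each fixed pair $(f,g) \in \calV(\calH)$ the intrinsic dimensions agree, $\id(f) = \id(g)$; and (ii) this common value is the same for all pairs in $\calV(\calH)$. Claim (i) is forced by alignment alone, while claim (ii) is what the maximal mutual information condition is for. The common value is then taken as $k^\star$.

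For (i), I would fix $(f,g) \in \calV(\calH)$ and set $c = \EE\|f(X)\|/\EE\|g(Y)\|$, so that alignment gives $f(X) = c\,g(Y)$ almost surely. Choose an intrinsic factorization $f = \phi_f \circ h_f$ with $\dim(R(h_f)) = \id(f)$ and $\phi_f$ injective. On the event $E = \{y : g(y) \in c^{-1} R(\phi_f)\}$, which has $P_Y$-mass one by alignment, define $h_g(y) = (c^{-1}\phi_f)^{-1}(g(y))$, and extend $h_g$ off $E$ by a fixed point of $R(h_f)$. This yields $g = (c^{-1}\phi_f) \circ h_g$ almost surely, with $c^{-1}\phi_f$ injective on its domain and $R(h_g) \subseteq R(h_f)$, hence $\id(g) \leq \id(f)$. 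Swapping the roles of $f$ and $g$ closes the equality.

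For (ii), take $(f_1, g_1), (f_2, g_2) \in \calV(\calH)$. The target is to produce a measurable injection $\beta$ with $f_2(X) = \beta(f_1(X))$ almost surely: composing with the intrinsic factorization of $f_1$ then yields $f_2 = \beta \circ \phi_{f_1} \circ h_{f_1}$ almost surely with $\beta \circ \phi_{f_1}$ injective, so $\id(f_2) \leq \id(f_1)$, and symmetry finishes. To construct $\beta$, I would combine alignment with maximal mutual information: by alignment each $f_i(X)$ is almost surely a function of $Y$ and hence measurable with respect to the shared sub-$\sigma$-algebra $\calS$ of $\sigma(X) \cap \sigma(Y)$ modulo null sets; the $\calW(\calH)$ condition then forces $\sigma(f_{i,M}(X))$ to asymptotically exhaust $\calS$ as $M \to \infty$, so that $\sigma(f_1(X))$ and $\sigma(f_2(X))$ coincide modulo null sets, and a Polish-space measurable-selection / disintegration argument produces the required $\beta$.

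The hard part will be claim (ii): the discretization levels $M$ in the definition of $\calW(\calH)$ act marginally on each representation rather than jointly, so transferring the asymptotic equality $I(f_{i,M}(X); g_{i,M}(Y)) - I^*_M(\calH) \to 0$ into identification of the generated sub-$\sigma$-algebras demands a careful limiting argument. I would expect to apply the data processing inequality to the concatenated representation $(f_1(X), f_2(X))$, possibly after temporarily enlarging $\calH$ so that such concatenations sit inside the optimization class, and then to invoke monotone convergence of discretized entropies under partition refinement in order to extract the measurable bijection $\beta$ from equality of conditional distributions.
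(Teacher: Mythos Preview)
Your argument for claim (i) is clean and in fact makes explicit something the paper's own proof leaves implicit: alignment alone forces $\id(f)=\id(g)$, so one may work with a single dimension per pair. (Minor point: you ask for an \emph{injective} $\beta$ later in claim (ii), but this is unnecessary---Lemma~\ref{lem:id} already gives $\id(\psi\circ f)\le \id(f)$ for any measurable $\psi$, so a measurable function in each direction would suffice.)

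The gap is in claim (ii). The assertion ``the $\calW(\calH)$ condition then forces $\sigma(f_{i,M}(X))$ to asymptotically exhaust $\calS$'' is the entire content of the proposition, and you do not prove it; in fact, as stated it may be false, since nothing guarantees that the shared sub-$\sigma$-algebra $\calS$ is capturable by $d$-dimensional maps at all. The paper's proof avoids this $\sigma$-algebra formulation altogether. It fixes a pair $(f^*,g^*)\in\calV(\calH)$ of maximal intrinsic dimension $\tilde k$ and, for any $(f,g)\in\calV(\calH)$ with $\id(f)=k<\tilde k$, argues that $f^*(X)$ cannot be $\sigma(f(X))$-measurable (else $\id(f^*)\le k$), so some scalar functional $\Psi(f(X),f^*(X))$ has strictly positive conditional entropy given $f(X)$. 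The decisive trick you are missing is to append this single coordinate to the $k$-dimensional reductions $F,G$ of $f,g$: because \emph{both} pairs are aligned, the augmented pair $\bar F(X)=\bar G(Y)$ remains aligned, so its discretized mutual information equals its discretized entropy, which is strictly larger than that of $F$---contradicting $(f,g)\in\calW(\calH)$. Your final paragraph drifts toward concatenation and data processing, but without invoking alignment of \emph{both} pairs to turn the augmented mutual information into an entropy, the strict improvement (and hence the contradiction) does not go through.
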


To further understand the definition of $k^*$, we recall the example in Section~\ref{sec:empirical_example}. 
As the shared feature between $X$ and $Y$ has dimension $2$, any aligned representations supported on a curve cannot maximize the mutual information with a certain function class. Also, any representations with intrinsic dimension larger than $2$ will have additional randomness conditioning on the $2$-dimensional shared feature, and thus cannot align.

\section{Global minimizers of InfoNCE and dimension adaptation}\label{sec:theory}
In this section, we analyze the global minimizers of the population infoNCE loss and prove that CLIP adapts to the intrinsic dimension $k^*$ of the ideal representations.

\subsection{InfoNCE loss and its minimizers}\label{sec:info-mini}
Similar to prior work~\citep{wang2020understanding,liu2024revealing}, throughout the paper, we consider the population infoNCE loss 
\begin{align}\label{eq:cl-loss}
    \calL(f, g, \tau) = -\EE_{X,Y}\left[\log\frac{\exp\left(\frac{\sigma( f(X), g(Y) )}{\tau}\right)}{\EE_{\tilde Y}\exp\left(\frac{\sigma( f(X), g(\tilde Y) )}{\tau}\right)}\right] -\EE_{X,Y}\left[ \log\frac{\exp\left(\frac{\sigma( f(X), g(Y) )}{\tau}\right)}{\EE_{\tilde X}\exp\left(\frac{\sigma( f(\tilde X), g(Y) )}{\tau}\right)}\right],
\end{align}
where $\tilde Y \overset{d}{=} Y$, $\tilde X \overset{d}{=} X$, and $(\tilde X, \tilde Y) ~\indep~ (X,Y)$. 
Based on \cite{wang2020understanding} (Theorem 1) and \cite{liu2024revealing} (Theorem 2.1), the population loss~\eqref{eq:cl-loss} is indeed the large-sample limit of the empirical loss~\eqref{eq:cl-loss-N} as 
$\lim_{N \rightarrow +\infty} | \calL^N(f,g,\tau) - \calL(f,g,\tau) | = 0$, for any fixed $f$, $g$, and $\tau > 0$.

\paragraph{Temperature optimization.} 
We consider the regime where the temperature $\tau \geq 0$ is also optimized in the training process, which aligns with the practice \citep{radford2021learning}. In previous theoretical works on CLIP, the temperature parameter was either treated as fixed \citep{wang2020understanding} or not taken into account \citep{nakada2023understanding,haochen2021provable}. However, it has been revealed in both empirical and theoretical studies that different choices of $\tau$ can lead to extremely different properties of the learned representations \citep{wang2020understanding,wang2021understanding,geiping2023cookbook,gui2023unraveling}. 
Thus, to theoretically understand properties of representations learned by CLIP in practice, we take temperature optimization into account and write $\tau$ as an argument of the loss function.

\paragraph{Minimizers of the infoNCE loss.} 
Challenges arise when defining the tuple $(f,g,\tau)$ that minimizes the infoNCE loss in~\eqref{eq:cl-loss}. 
Consider two representations $(f_1,g_1)$ and $(f_2,g_2)$.
Both are aligned and continuous. 
It can be shown that $\lim_{\tau \to 0^+} \calL(f_1,g_1, \tau) = -\infty = \lim_{\tau \to 0^+} \calL(f_2,g_2, \tau)$. 
In other words, the infoNCE loss cannot differentiate among aligned continuous representations, due to the unboundedness of mutual information for aligned continuous representations. 

To address this issue, we use the same discretization idea in Section~\ref{sec:properties_of_representations}. 
Precisely, for each slackness $\eta > 0$, 
we define the set of near-minimizers to be 
\begin{align}\label{eq:O-set}
O_{\calL,\eta}(\calH) = \left\{(f,g) \in \calH:\;\exists \;\tau \geq \vareps(\eta),\;\limsup_{M \rightarrow +\infty}\;\big(\calL(f_M,g_M,\tau) + 2I^*_M(\calH)\big) \leq 2\eta\right\},
\end{align}
where $\vareps(\eta) > 0$ is nondecreasing\footnote{See Appendix~\ref{sec:property-mini} for the exact definition of $\vareps(\eta)$.} in $\eta$ with $\vareps(0) = 0$.
Correspondingly, we define the set of minimizers to be $\cap_{\eta \geq 0} \calO_{\calL,\eta}(\calH)$. 
A few remarks are in order. 
\begin{itemize}
    \item First, instead of looking at $\calL(f,g,\tau)$ which could be $-\infty$, we compare its discretized version $\calL(f_M,g_M,\tau)$ against a benchmark $-2 I^*_M(\calH)$. Both quantities are finite. 
    \item Second, the lower bound $\vareps(\eta)$ on the temperature measures the sensitivity of the solution $(f,g)$ with respect to the change of temperature $\tau$. Then, with any tolerance $\eta > 0$, the set $O_{\calL,\eta}(\calH)$ only contains representation maps that achieve small loss with temperature no less than the threshold $\vareps(\eta)$ and hence rules out the representations that can only be optimal in the singular case when the temperature is always zero, which is in line with the fact that the temperature can only decrease to zero at a certain rate in the actual training process.
\end{itemize}

\subsection{CLIP automatically adapts to intrinsic dimensions}\label{sec:adapt}
Recall the definition of $k^*$ in Proposition~\ref{prop:VH}.
We now present the main result, assuming that $\mathcal{H}$ includes all measurable functions of $X$ and $Y$. 

\begin{theorem}\label{thm:thm1}
Assume that $\calV(\calH) \neq \varnothing$.
We have $\bigcap_{\eta \geq 0} \calO_{\calL,\eta}(\calH) \neq \varnothing$. In addition, for any $(f,g) \in \bigcap_{\eta \geq 0} \calO_{\calL,\eta}(\calH)$, we have
\begin{enumerate}
    \item (\textbf{similarity maximization}) $\sigma(f(X),g(Y)) = m_{\sigma}(f,g)$ almost surely;
    \item (\textbf{intrinsic dimension adaptation}) $\id(f) = \id(g) = k^*$;
    \item (\textbf{monotonicity in temperature}) infoNCE loss $\calL(f,g,\tau)$ is increasing in $\tau$;
    \item (\textbf{mutual information maximization}) $(f,g) \in \calW(\calH)$.
\end{enumerate}
\end{theorem}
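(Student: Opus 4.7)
The plan is to take the variational decomposition of Lemma~\ref{lem:decomp} as the backbone: along the standard infoNCE--mutual information route, it should express the discretized loss in the form
\[
\calL(f_M, g_M, \tau) \;=\; -2\,I(f_M(X); g_M(Y)) + R_M(f, g, \tau),
\]
with a non-negative residual $R_M$ of KL-type, comparing the true conditional distribution of $(f_M(X), g_M(Y))$ against the softmax distribution induced by the similarity weights $e^{\sigma/\tau}$. The residual then measures exactly the gap contrastive learning pays for using the log-ratio surrogate rather than the mutual information itself, and it is designed to vanish in the joint limit $M \to \infty$, $\tau \to 0^+$ precisely when (i) $\sigma(f(X),g(Y))$ saturates its essential supremum on positive pairs almost surely and (ii) the softmax concentrates on the positive pair at the correct rate. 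This decomposition is what allows us to read off the four conclusions of the theorem as different consequences of the two non-negative pieces both being pushed to zero.

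Given the decomposition, non-emptiness of $\bigcap_{\eta \geq 0}\calO_{\calL,\eta}(\calH)$ is constructive: take $(f^\star, g^\star) \in \calV(\calH)$, which is non-empty by assumption, so alignment and similarity maximization are built-in; then show that choosing $\tau$ just above the floor $\vareps(\eta)$ drives $R_M$ below $2\eta$ uniformly in large $M$, while the $-2 I^*_M(\calH)$ benchmark is met because $(f^\star, g^\star) \in \calW(\calH)$. Conversely, for any $(f,g) \in \bigcap_{\eta \geq 0}\calO_{\calL,\eta}(\calH)$, letting $\eta \to 0$ squeezes both non-negative contributions --- the mutual information deficit $I^*_M(\calH) - I(f_M(X); g_M(Y))$ and the residual $R_M$ --- to zero in the limit. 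The first squeeze yields $(f,g) \in \calW(\calH)$ (claim 4); the second yields $\sigma(f(X),g(Y)) = m_\sigma(f,g)$ almost surely (claim 1). Combining claim 1 with the inner-product form of $\sigma$ and the population normalizations $\EE\|f(X)\|, \EE\|g(Y)\|$ promotes similarity maximization to the full alignment identity $f(X)/\EE\|f(X)\| = g(Y)/\EE\|g(Y)\|$ a.s., placing $(f,g) \in \calA(\calH) \cap \calW(\calH) = \calV(\calH)$; Proposition~\ref{prop:VH} then delivers $\id(f) = \id(g) = k^*$ (claim 2). Claim 3 is read off the same decomposition on the solution set: once $\sigma(f(X),g(Y))$ is almost surely constant and equal to $m_\sigma(f,g)$, the contrastive loss reduces to a log-moment generating function of the negative similarities normalized by $\tau$, which a short convexity argument shows is increasing in $\tau$, while the discrete mutual information term is $\tau$-free.

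The main obstacle I anticipate is rigorously executing the simultaneous limits $M \to \infty$ and $\tau \to 0^+$: because $I(f_M(X); g_M(Y))$ diverges along any aligned continuous sequence, the rate at which $\vareps(\eta) \to 0$ must be calibrated against the discretization level $M$ to keep the decomposition informative, and this is precisely the role of the temperature floor in the definition~\eqref{eq:O-set} of the near-minimizer set. A secondary technical point is lifting the almost-sure similarity-maximization conclusion of claim 1 to the normalized alignment identity required for membership in $\calA(\calH)$; this should follow from mild non-degeneracy of $\sigma$ (our inner-product choice), but it needs a separate short argument to rule out sign-ambiguous configurations on the support of $(f(X), g(Y))$ before Proposition~\ref{prop:VH} can be invoked to pin down the intrinsic dimension.
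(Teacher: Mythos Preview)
Your plan for non-emptiness and for claims 1, 3, 4 is essentially the paper's: the decomposition of Lemma~\ref{lem:decomp} plus squeezing both non-negative pieces to zero is exactly how the paper establishes Lemma~\ref{lem:exist} and Lemma~\ref{lem:prop}.

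The gap is in claim 2. You want to promote ``$\sigma(f(X),g(Y)) = m_\sigma(f,g)$ a.s.'' to the full alignment identity $f(X)/\EE\|f(X)\| = g(Y)/\EE\|g(Y)\|$ a.s., so that $(f,g)\in\calA(\calH)\cap\calW(\calH)=\calV(\calH)$ and Proposition~\ref{prop:VH} finishes. This implication is not a short argument and, as stated, is not true: nothing forces $m_\sigma(f,g)=1$, nor $\|f(X)\|/\EE\|f(X)\|$ to be constant, so Cauchy--Schwarz gives you nothing. Concretely, with a one-dimensional shared latent $Z$ uniform on $S^1$ and $d=3$, take $f(X)=(\cos Z,\sin Z,1)/\sqrt{2}$ and $g(Y)=(\cos Z,\sin Z,-1)/\sqrt{2}$. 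Then $\sigma(f(X),g(Y))=0=m_\sigma(f,g)$ a.s.\ and $(f,g)\in\calW(\calH)$ (each is a bijection of $Z$), yet $f(X)\neq g(Y)$ everywhere, so $(f,g)\notin\calA(\calH)$. The obstruction is structural, not a sign ambiguity. Whether this particular pair survives the temperature floor in~\eqref{eq:O-set} is a separate (and delicate) question, but it already shows that your route through $\calV(\calH)$ and Proposition~\ref{prop:VH} cannot be completed by the ``mild non-degeneracy'' you invoke.

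The paper does not attempt this promotion at all. Instead it proves $\id(f)=\id(g)=k^*$ directly in two steps. The lower bound $\id\geq k^*$ uses only $(f,g)\in\calW(\calH)$: if $\id(f)<k^*$, one can append a coordinate carrying residual dependence from a fixed $(f^*,g^*)\in\calV(\calH)$ and strictly increase mutual information, contradicting maximality. The upper bound $\id\leq k^*$ is the substantive part and uses ingredients you have not mentioned: Lemma~\ref{lem:VH}(2) gives the conditional independence $f(X)\indep g(Y)\mid f^*(X)$; one then replaces $f$ by $f_*(X)=\EE[f(X)\mid f^*(X)]$ and uses Jensen's inequality on the log-moment terms, together with the monotonicity in $\tau$ from claim~3, to show that $(f,g)$ would be strictly beaten by $(f_*,g)$ at the floor temperature $\vareps(\eta)$ unless $f$ is already a measurable function of $f^*$, forcing $\id(f)\leq k^*$ via Lemma~\ref{lem:id}. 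You should replace your alignment-lifting step with this conditional-expectation/Jensen argument.
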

Theorem~\ref{thm:thm1} implies that (approximate) minimizers of the InfoNCE loss have an intrinsic dimension exactly equal to $k^*$.
We note that the existence of aligned and uniformly distributed representations over the entire output space, as required in \cite{wang2020understanding}, provides a sufficient condition for $\calV(\calH) \neq \varnothing$. In contrast, our result shows that even without requiring representations to be aligned and uniform over the entire output space, CLIP can still adapt to the intrinsic dimension of multi-modal data.
As a corollary, in Appendix~\ref{sec:global}, we further show that when the output dimension $d$ is correctly specified, the set of minimizers $\bigcap_{\eta \geq 0} \calO_{\calL,\eta}(\calH)$ coincides with the set of aligned and uniform representations that maximize mutual information. We also connect our findings to sufficient dimension reduction in Appendix~\ref{sec:sdr}.

We note that, in the concurrent work \citep{oko2025statistical,lin2025statistical}, sufficiency (Corollary 1) is obtained when there exists a pair of encoders that has infoNCE loss coinciding with the minimum over all similarity measures. In contrast, we consider the family of similarity measures taking the form $S_{\tau}(U,V) = \tau^{-1} \sigma(U,V)$ adopted in CLIP and take into account the optimization of temperature, which is relevant to the result in \cite{oord2018representation,lin2025statistical} and \cite{oko2025statistical}, but also reveals and explains new phenomena in practice, such as the convergence of temperature and the adaptation to intrinsic dimension. More importantly, our paper puts more emphasis on the exact statistical properties of learned representations, such as sufficiency and low-dimensionality that are shown in Theorem~\ref{thm:thm1}, while \cite{oko2025statistical,lin2025statistical}, through the sufficiency measure, focus on the bounds for downstream accuracy and the learnability (via excess infoNCE loss) of architectures (e.g., Transformers) under certain structured models.

\section{Numerical experiments}\label{sec:experim}
In this section, we further justify the theoretical findings with both synthetic and real-world datasets. Starting with a synthetic dataset in Section~\ref{sec:synthetic}, we further consider real datasets: a \texttt{CITE-seq} dataset \citep{stoeckius2017simultaneous,stuart2019comprehensive} in Section~\ref{sec:citeseq}, \texttt{ImageNetV2} dataset \citep{recht2019imagenet} in Appendix~\ref{sec:imagenet}, and \texttt{YFCC} dataset \citep{thomee2016yfcc100m} in Appendix~\ref{sec:yfcc}. 
Throughout this section, we fit the representation map with a $5$-layer ReLU neural network with middle-layer widths all fixed at $50$ and the output space $\rB$. This function class is denoted by $\calF^{p,d}_{\rm NN}$, where $p$ is the input dimension adjusted for each setting. We estimate the global intrinsic dimension of data using the MLE-based approach proposed in \cite{levina2004maximum}, which is implemented in the \texttt{skdim.id} package.\footnote{\url{https://scikit-dimension.readthedocs.io/en/latest/skdim.id.MLE.html}.}

\subsection{Results with synthetic data}\label{sec:synthetic}
We start with two synthetic datasets with $k^* < \min\{d_1, d_2\}$:
\begin{enumerate}
\item Linear setting: consider $N$ i.i.d. draws from the same distribution shown in Section~\ref{sec:empirical_example}.
\item Nonlinear setting: consider $N$ i.i.d. draws from the following distribution: 
$Y_i \overset{i.i.d.}{\sim} \calN(0,\bI_{d_2})$, $\xi_i \overset{i.i.d.}{\sim} \calN(0,\bI_{d_1 - k^*})$, and $X_i = (0.2 Y_{i1}^3,\sin(Y_{i2}Y_{i2}),\log(Y^2_{i3}),\cdots,\log(Y^2_{ik^*}),\xi^\top_i)^\top$.
\end{enumerate}
\begin{figure*}[tb]
\begin{subfigure}[t]{0.48\textwidth}
    \centering
    \includegraphics[width=\linewidth]{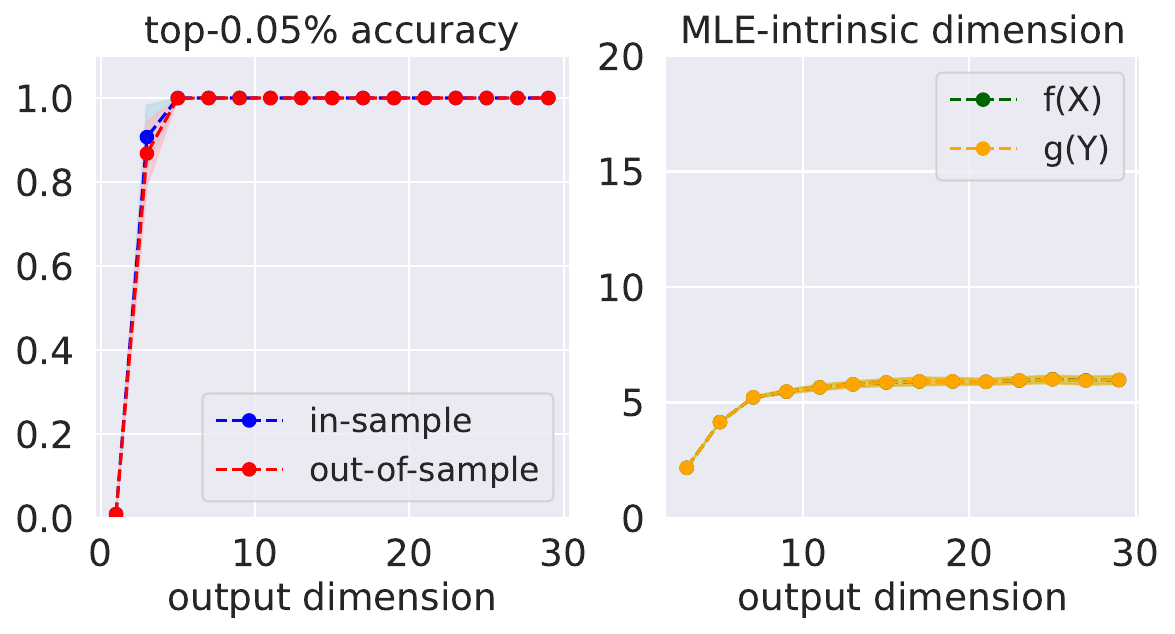}
    \caption{Accuracy and intrinsic dimension with varying $d$.}
    \label{fig:linear-result-new}
\end{subfigure}
~
\begin{subfigure}[t]{0.46\textwidth}
    \centering
    \includegraphics[width=\linewidth]{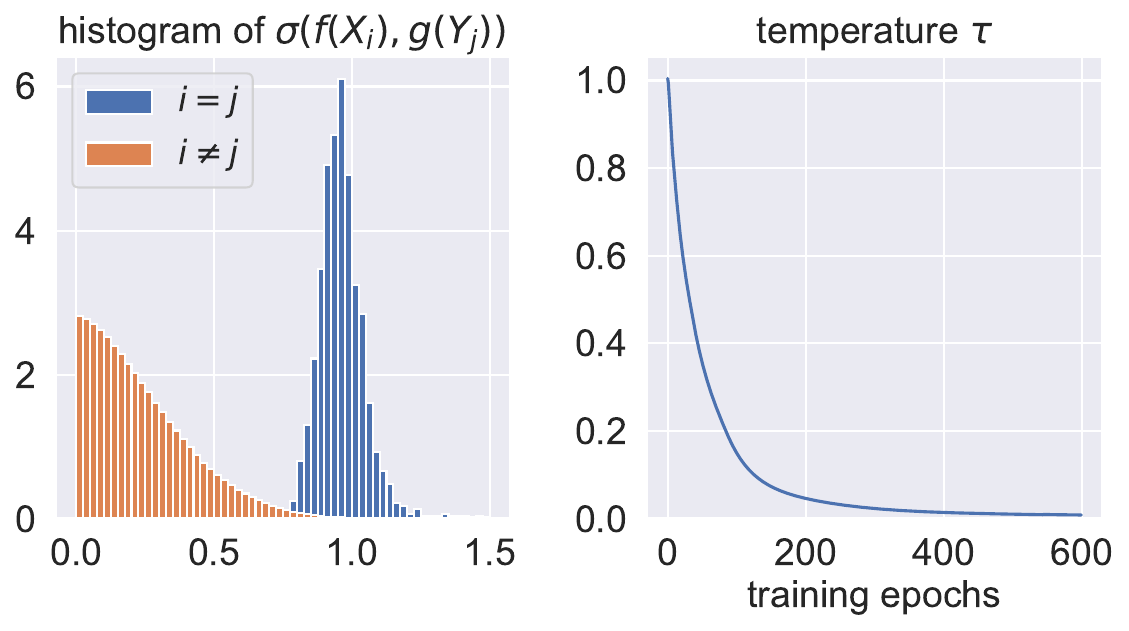}
    \caption{Similarities and convergence of $\tau$ ($d=20$).}
    \label{fig:linear-result-sim}
\end{subfigure}
\caption{Results with synthetic data: linear setting.}
\label{fig:linear-result}
\end{figure*}

\begin{figure*}[tb]
\begin{subfigure}[t]{0.48\textwidth}
    \centering
    \includegraphics[width=\linewidth]{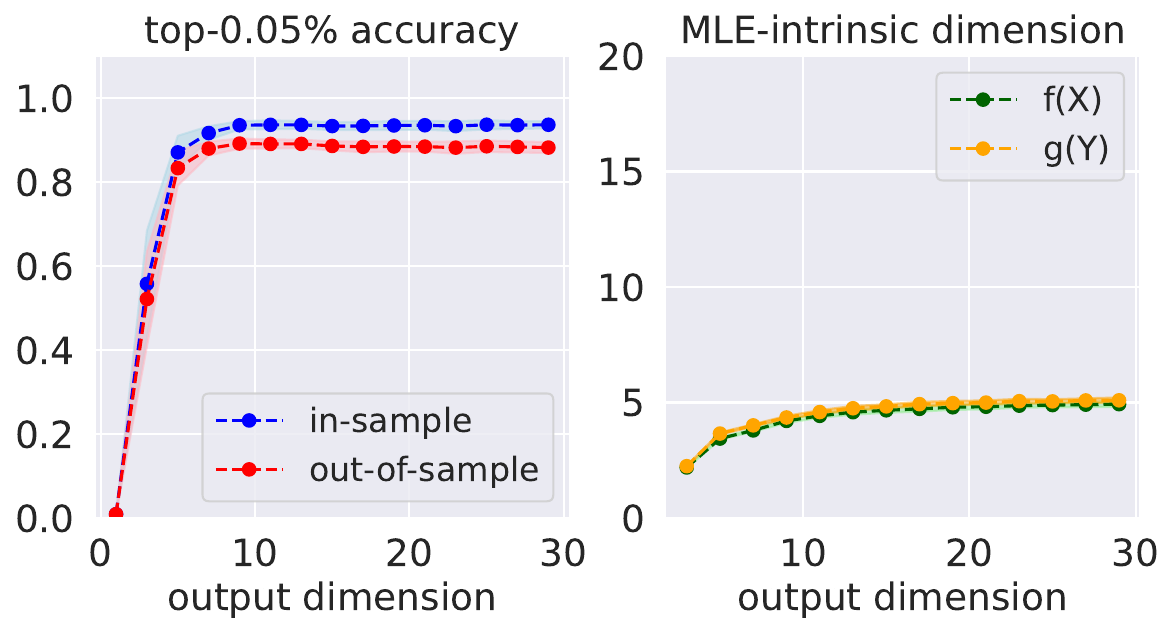}
    \caption{Accuracy and intrinsic dimension with varying $d$.}
    \label{fig:nonlinear-result-new}
\end{subfigure}
~
\begin{subfigure}[t]{0.46\textwidth}
    \centering
    \includegraphics[width=\linewidth]{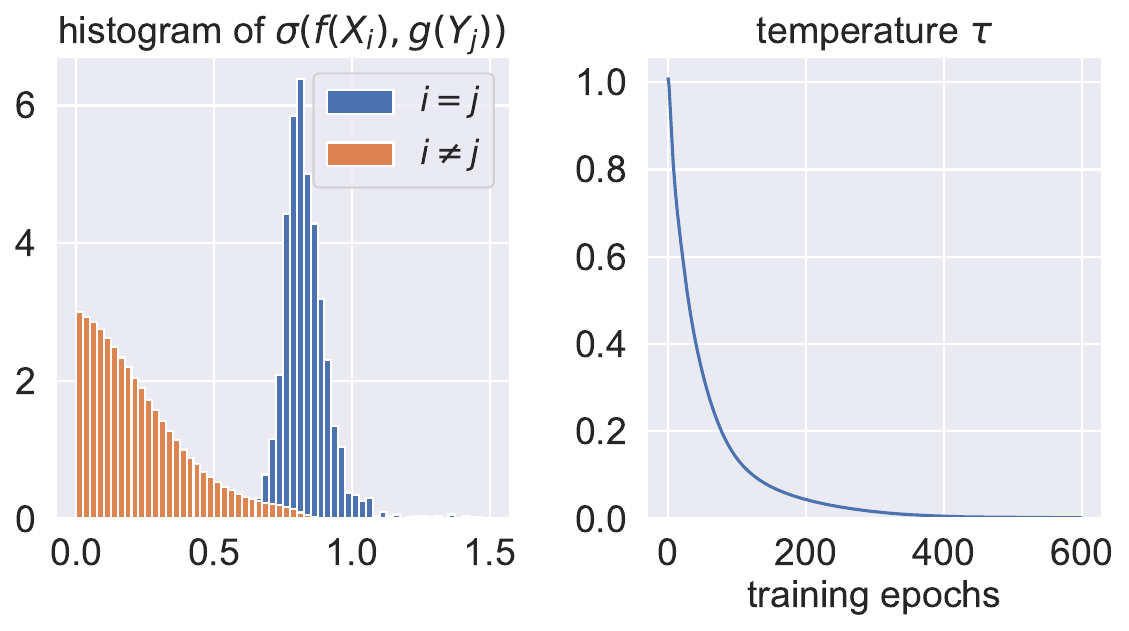}
    \caption{Similarities and convergence of $\tau$ ($d=20$).}
    \label{fig:nonlinear-result-sim}
\end{subfigure}
\caption{Results with synthetic data: nonlinear setting.}
\label{fig:nonlinear-result}
\end{figure*}
Here we set $d_1 = d_2 = 20$ and $k^* = 5$. The total $N=14000$ data points are partitioned into a training set $\calD_{\tr}$ with $|\calD_{\tr}|=10000$, a test set $\calD_{\te}$ with $|\calD_{\te}|=2000$, and a separate set with size $2000$ for estimating the expected norm at each epoch.
With representation maps $\hat f \in \calF^{d_1,d}_{\rm NN}$ and $\hat g \in \calF^{d_2,d}_{\rm NN}$ learned with $\calD_{\tr}$, we consider the downstream task as the top-$\alpha\%$ matching of representations. More concretely, with $\calD = \calD_{\tr}$ or $\calD_{\te}$, for any $i \in \calD$, denote  
$\calN_{\alpha}(i; \calD)$ as all the indices $j \in \calD$ such that $\|\hat f(X_i)-\hat g(Y_j)\|$ is the $\lceil \alpha |\calD| \rceil$-smallest among $\|\hat f(X_i)-\hat g(Y_k)\|, k\in \calD$,
and accordingly, 
\begin{align}.
\texttt{Acc}_{\alpha}(\calD) = \frac{1}{|\calD|} \sum_{i \in \calD} \ind\{i = N_{\alpha}(i; \calD)\}.
\end{align}
Then, we can define the in-sample and out-of-sample accuracy by $\texttt{Acc}_{\alpha}(\calD_{\tr})$ and $\texttt{Acc}_{\alpha}(\calD_{\te})$, respectively. Particularly, with synthetic data, we choose $\alpha\% = 1/|\calD_\te| = 0.05\%$, which refers to the top-$1$ matching accuracy for out-of-sample matching.

Figures~\ref{fig:linear-result} and \ref{fig:nonlinear-result} report the results averaged over $50$ repetitions in linear and nonlinear settings, respectively. We can see that in both settings, when the output dimension exceeds $5$, both the in-sample and out-of-sample accuracy, and the MLE-based estimated intrinsic dimensions tend to saturate. Specifically, the estimated intrinsic dimensions approach the true value $k^* = 5$, which validates that minimizing the multi-modal contrastive loss automatically adapts to the underlying intrinsic dimension of data when the $d \geq k^*$. In addition, in each setting, we fix $d=20$ and present the histogram of similarities as well as the convergence of $\tau$ in training in Figure~\ref{fig:linear-result-sim} and Figure~\ref{fig:nonlinear-result-sim}, which validates the theoretical prediction that $\tau$ converges to zero and that the similarity measure between positive pairs will concentrate at a constant that dominates the similarity measure between negative pairs.

\subsection{Results with \texttt{CITE-seq} dataset}\label{sec:citeseq}
The \texttt{CITE-seq} dataset contains simultaneous measurements of transcriptomes and cell-surface proteins from the same cell, and we get access to \texttt{CITE-seq} dataset via \texttt{Seurat}\footnote{\url{https://satijalab.org/seurat/}.} \citep{hao2021integrated} in R. We focus on the \texttt{CITE-seq} healthy human bone marrow cells (BMCs) data, consisting of $30672$ measured scRNA-seq profiles from bone marrow \citep{stuart2019comprehensive}, each with an additional panel of $25$ 
antibodies. Following the preprocessing procedure in \texttt{Seurat}, we obtain a two-modal dataset with $24$-dimensional protein data and $200$-dimensional RNA data.
More details of data preprocessing are presented in Appendix~\ref{sec:add_experim}. 

\begin{figure*}[htb]
    \centering
    \includegraphics[width=0.95\linewidth]{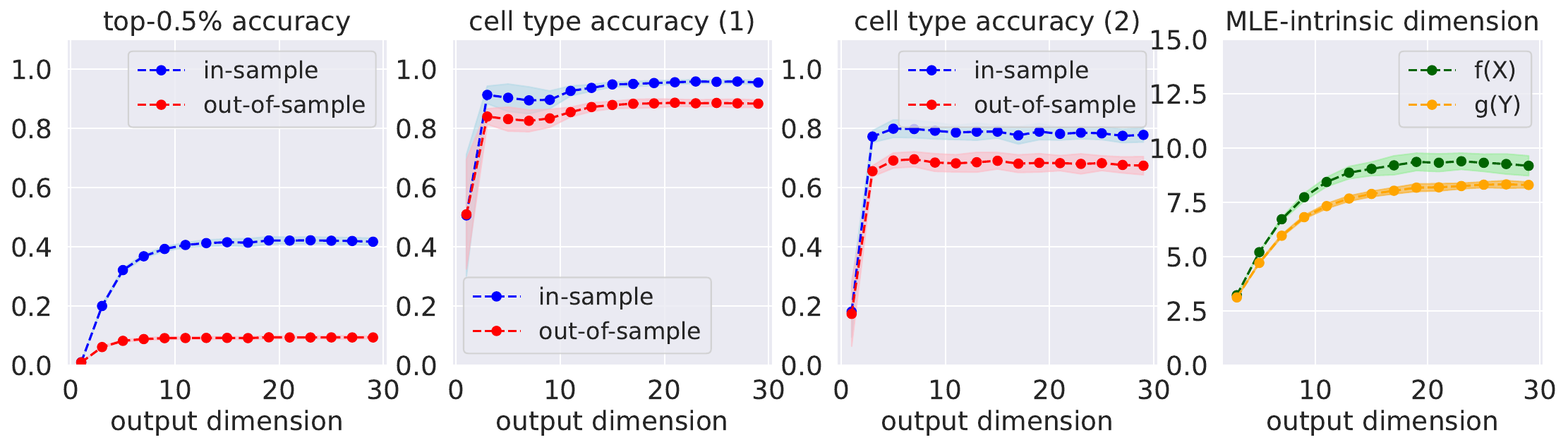}
    \caption{Results with \texttt{CITE-seq} dataset.}
    \label{fig:citeseq-result-new}
\end{figure*}
In each repetition of the experiments, we randomly sample $20000$ rows without replacement from the preprocessed dataset and randomly split the subset into a training set $\calD_{\tr}$ with $|\calD_{\tr}|=10000$, a test set $\calD_{\te}$ with $|\calD_{\te}|=2000$, and a separate set with size $8000$ for estimating the expected norm at each epoch. Representation maps $\hat f, \hat g \in \calF^{d_2,d}_{\rm NN}$ are learned with $\calD_{\tr}$ via contrastive loss and the quality of representations is evaluated on $\calD_{\te}$ in terms of the classification accuracy of two groups of cell types and the top-$\alpha\%$ matching accuracy ($\alpha\% = 0.5\%$). 
We vary the output dimension $d$ of the neural network architecture from $1$ to $29$, and results averaged after $50$ repetitions with varying $d$ are presented in Figure~\ref{fig:citeseq-result-new}, in which,
both in-sample and out-of-sample accuracy tend to saturate when the output dimension exceeds $10$. In addition, in the right panel of Figure~\ref{fig:citeseq-result-new}, the MLE-based estimated intrinsic dimension of in-sample representations does not exceed $10$ regardless of the choice of output dimension. This demonstrates that the multi-modal contrastive loss effectively extracts intrinsically low-dimensional features from the \texttt{CITE-seq} dataset, which successfully capture the information to differentiate two levels of cell types.
Moreover, as is shown in Appendix~\ref{sec:convergence_of_temperature}, with different choices of output dimension $d$, the temperature tends to converge to zero in training, which offers evidence for the existence of nonempty $\calV(\calH)$ when $\calH$ is a multi-layer ReLU network.

 \section{Related work}\label{sec:related}
The integration of multi-modal data has been studied for decades in 
machine learning. The simplest practice is to concatenate features from all modalities, which is known as early fusion \citep{atrey2010multimodal}, but the direct concatenation can lead to redundancy and high dimensionality \citep{nagrani2021attention}. 
As a dimension reduction technique, canonical correlation analysis (CCA) \citep{harold1936relations,anderson1958introduction,hotelling1992relations,kettenring1971canonical,horst1961generalized} is widely adopted to learn maximally correlated (linear) projections of modalities as the shared representation, which is further extended to nonlinear projections by kernel CCA \citep{akaho2006kernel} and deep CCA \citep{andrew2013deep}. 
With the initial purpose of integrating acoustic and visual speech signals \citep{yuhas1989integration,partan1999communication}, multi-modal learning has witnessed consistent progress in machine learning \citep{baltruvsaitis2018multimodal}.
One line of research proposed fine-grained neural network architectures that can output joint representations of the multi-modal input \citep{mroueh2015deep,ngiam2011multimodal,silberer2014learning}. 
Another line of research adopts probabilistic graphical models with latent variables to characterize multi-modal data, such as multi-modal deep belief networks \citep{srivastava2012learning,kim2013deep} and deep Boltzmann machines \citep{srivastava2012multimodal}. Different architectures for modalities are also explored in the field, where certain (constrained) measures of cross-modal similarities are proposed for representation learning \citep{kiros2014unifying,zhang2014large,pan2016jointly}. 

Despite the wide applications of multi-modal learning, the theoretical understanding of its empirical success has drawn attention from the community only in recent years. Earlier works mainly interpret the advantage of multi-modalities from the information-theoretical perspective \citep{sridharan2008information,federici2020learning,liang2024quantifying}. A recent work \cite{huang2021makes} compares the generalization error with different subsets of modalities theoretically, and the result is further discussed in \cite{lu2023theory}. 

Similar to our scope, more recent papers include \cite{uesaka2024understanding}, which focuses on the mutual information of multi-modal features as the upper bound of the additive inverse of contrastive loss, and \cite{liu2024revealing}, which analyzes the solution of CLIP via the lens of independent component analysis (ICA) and depends on specific data generating process to ensure the identifiability. In addition, concurrent papers \citep{oko2025statistical,lin2025statistical} introduce the notion of approximate sufficiency and show that this property is satisfied by learned representations by CLIP and also connect the sufficiency measure to the accuracy of downstream tasks with a goal similar to \cite{chen2023understanding}.

\section{Discussion}\label{sec:discussion}
In this paper, we have characterized the statistical properties of learned representations in multi-modal contrastive learning, and specifically, have shown that the solution can adapt to the intrinsic dimension of data in the setting with $\calV(\calH) \neq \varnothing$. 
This property is also relevant to sufficient dimension reduction as is demonstrated in Appendix~\ref{sec:sdr}.
The theoretical result is also justified by both synthetic and real datasets, where the estimated intrinsic dimension as well as the downstream accuracy in various tasks tend to saturate regardless of the output dimension {(once it exceeds the intrinsic dimension)}, which implies that the underlying information shared by modalities can indeed be captured by a low-dimensional structure.
It is also worth noting that our theory suggests  a two-stage fitting strategy when using CLIP. 
In the first stage, one selects a large output dimension so that the function class is expressive.
In the second stage, one can use the intrinsic dimension discovered from the first stage and post-process the representation to a lower dimension. 
This could potentially accelerate the inference speed. 

Our findings also suggest several future directions. Although $\calV(\calH) \neq \varnothing$ relaxes the condition in \cite{wang2020understanding}, it would be of theoretical interest for future work to characterize the precise limit of optimized temperature in general settings beyond this regime, which may also guide the practical choice of $\tau$.
In addition, it is interesting to study the finite-sample behavior of the infoNCE loss as well as its minimizers.

\section*{Acknowledgements}
C.M. was partially supported by the National Science Foundation via grant DMS-2311127. Z.M. was partially supported by the National Science Foundation via grants DMS-2345215 and DMS-2245575.

\bibliography{main}
\bibliographystyle{apalike}

\newpage
\appendix

\tableofcontents

\addtocontents{toc}{\protect\setcounter{tocdepth}{3}}

\onecolumn

\section{Preliminaries}\label{sec:proof-prelim}
We start with preliminaries, including notations and the discretization of $\tB$ introduced in Section~\ref{sec:properties_of_representations}. 

\subsection{Notations}
We start by summarizing important notations in Table~\ref{tab:notations}.
\begin{table}[h]
\renewcommand{\arraystretch}{1.5}
    \centering
    \begin{tabular}{l l}
    \toprule
        $\rB$ & $d$-dimensional Eudlidean space \\
        $\tB$ & $\{x \in \rB:\;\|x\| \leq \sqrt{\Omega}\}$ \\
        $\id(f)$ & intrinsic dimension of $f$ (Definition~\ref{def:dim})\\
        $C_M = \{c_{M,i}:\;i \in [M]\}$ & a disjoint partition of $\tB$\\ 
        $\hat \calB_{d,M} = \{z_{M,i} \in c_{M,i}:\;i \in [M]\}$ & discretization of $\tilde \calB_d$ under $C_M$\\
        $f_M$, $g_M$ & discretizations of $(f,g) \in \calH$\\
        $\calH_M$ & $\{(f_M,g_M):\; (f,g) \in \calH\}$\\
        $m_{\sigma}(f,g)$ & ${\rm ess\;sup}_{X \indep \tilde Y}\;\sigma(f(X),g(\tilde Y))$\\
        $\calA(\calH')$ & $\{(f,g) \in \calH':\;\frac{f(X)}{\EE\|f(X)\|} = \frac{g(Y)}{\EE\|g(Y)\|} \textnormal{ a.s. and }m_{\sigma}(f,g)=1\}$\\
        $I^*_M(\calH)$ & $\sup_{(f,g) \calH} I(f_M(X), g_M(Y))$\\
        $\calW(\calH')$ & $\{(f,g) \in \calH':\;\liminf_{M \rightarrow +\infty} (I(f_M(X);g_M(Y)) - I^*_M(\calH)) \geq 0\}$\\
        $\calV(\calH^\prime)$ & $\calA(\calH') \cap \calW(\calH')$\\
        $Q_{f(X),g(Y),\tau}$, $\tilde Q_{f(X),g(Y),\tau}$ & smoothed distributions defined in \eqref{eq:smooth-Q}\\
        $Q^M_{f(X),g(Y),\tau} = Q^M_{\tau}$, $\tilde Q^M_{f(X),g(Y),\tau} = \tilde Q^M_{\tau}$ & discretized distributions defined with $(f_M,g_M)$\\
        $\vareps(\eta)$ & temperature threshold with tolerance $\eta$ defined in \eqref{eq:eps-eta}\\
        $O_{\calL,\vareps,\eta}(\calH)$ \eqref{eq:O-eps-set} & Set of minimizers up to tolerance $\tau$ with temperature threshold $\vareps$\\
        $O_{\calL,\eta}(\calH)$ \eqref{eq:O-set} & Set of minimizers up to tolerance $\tau$ with temperature threshold $\vareps(\eta)$ \\
        \bottomrule
    \end{tabular}
    \caption{Table of notations}
    \label{tab:notations}
\end{table}

\subsection{Discretization of \texorpdfstring{$\tB$}{tB}}\label{sec:disc}
With the defined similarity measure $\sigma(\cdot, \cdot)$, to analyze the mutual information between $f(X)$ and $g(Y)$, it is equivalent to considering mutual information between $\tilde f(X) = f(X)/\EE\|f(X)\|$ and $\tilde g(Y) = g(Y)/\EE\|g(Y)\|$, which satisfies $\EE\|\tilde f(X)\| = \EE\|\tilde g(Y)\| = 1$, $\|\tilde f(X)\|, \|\tilde g(Y)\| \leq \sqrt{\Omega} < \infty$ almost surely. Then, we denote the range of normalized representations by $\tB$, which is compact.

For any fineness $M \in \calM \subseteq \NN$, we consider a discretization $C_M = \{c_{M,i}\}_{i \in [M]}$ of $\tB$ such that the compact set $\tB$ is divided into $M$ connected portions, and representatives from each portion $c_{M,i}$ form the set $\hat \calB_{d,M} = \{z_{M,i}\}_{i \in [M]}$. 
We also denote $\calC_M$ as the set of all discretizations $C_M$ of $\tB$ such that
\begin{align}
    \max_{i \in [M]} {\rm diam}(c_{M,i}) \rightarrow 0 \qquad \text{as }\quad M \rightarrow +\infty.
\end{align}
Particularly, we consider the discretization $C_M \in \calC_M$ such that each portion $c_{M,i}$ has the same Lebesgue measure $\Delta_M$, which satisfies $\lim_{M\rightarrow +\infty} \Delta_M=0$. 
In addition, we assume the sequence of discretizations $\{C_M\}_{M \in \calM}$ is nested such that $C_{M'} \subseteq C_M$ for any $M' \leq M$.

In the rest of the proof, we write $M \rightarrow +\infty$ to denote $\calM \ni M \rightarrow +\infty$.
Accordingly, for any function $f$, we define 
\[f_M(x) = z_{M,i} \qquad \textnormal{if } x \in f^{-1}(c_{M,i}),
\] 
and the function class $\calH_M = \{(f_M,g_M): (f,g) \in \calH\}$, which is also nested in the sense that $\calH_M \subseteq \calH_{\tilde M}$ for all $M \leq \tilde M$. We also notice that $\calH_M$ can be viewed as the class of piecewise constant functions. 
For any $1 \leq k \leq d$, the set $\calH^k_M$ is defined similarly. 

Abusing notations, for any random vector $U$ supported on $\tB$, we write $U_M = U(\hat \calB_{d,M})$ to denote the discretization of $U$, for which 
\begin{align}
    \PP(U_M = z_{M,i}) = \PP(U \in c_{M,i}) \qquad \text{and} \qquad \PP(U_M \notin \hat \calB_{d,M}) = 0.
\end{align}
If $U$ is supported on $\hat \calB_{d,M}$, we have $U_M = U$ almost surely and, particularly, $f_M(X) = (f(X))_M$ almost surely.

\subsubsection{A fine-grained order}
For any pairs of random vectors $U = (U_1,\cdots, U_l), \tilde U = (\tilde U_1, \cdots, \tilde U_l) \in (\tB)^l$ and any functional $G$ of random vectors, 
we define the order $\orderleq$ as follows.
\begin{align}
G(U) \orderleq G(\tilde U) ~\Longleftrightarrow ~
\limsup_{M \rightarrow +\infty} \;\left\{G(U_M) - G(\tilde U_M)\right\} \leq 0,
\end{align}
and moreover, the inequality is strict, i.e. $G(U) \orderle G(\tilde U)$, if and only if 
\begin{align}
\limsup_{M \rightarrow +\infty} \;\left\{G(U_M) - G(\tilde U_M)\right\} < 0.
\end{align} 
We also write $G(U) \ordereq G(\tilde U)$ if $\lim_{M \rightarrow +\infty}\left\{G(U_M) - G(\tilde U_M)\right\}$ exists and is zero.

\subsubsection{Approximation by discretization}
In this section, we study the properties of the aforementioned discretization. 
To start with, for a discrete random vector $U$ with the probability mass function $p_U$, we define the entropy of $U$ by $H(U) = -\EE[\log p_U(U)]$. For a continuous random vector $U$ with probability density function $p_U$, we define the differential entropy of $U$ by $\tilde H(U) = -\EE[\log p_U(U)]$.

\paragraph{Approximation of entropy and relative entropy.}
To start with, consider the random vector $U \in \tB$ with probability density function $p$ and the discretized random variable $U_M \in \tB$ such that $p_i = \PP(U_M = z_{M,i}) = \PP(U \in c_{M,i})$. 
As $\int_{c_{M,i}} p(u) \mathsf{d} u \approx p(z_{M,i}) \Delta_M$, we have the following result that connects the Shannon entropy of $U_M$ and the differential entropy of $U$.
\begin{lemma}{\cite[Theorem 8.3.1]{cover1999elements}}\label{lem:entropy-approx}
As $M \rightarrow +\infty$ and ${\rm diam}(c_{M,i})\rightarrow 0$, it holds that
\begin{align}
    \lim_{M \rightarrow +\infty} \;\left(H(U_M) + \log \Delta_M\right) = \tilde H(U).
\end{align}
\end{lemma}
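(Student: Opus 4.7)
The plan is to verify this classical discretization formula by a direct mean value theorem computation combined with a Riemann-sum limit. Let $p$ denote the density of $U$ on $\tB$ and write $p_{M,i} = \PP(U \in c_{M,i}) = \int_{c_{M,i}} p(u)\,\diff u$, so by definition $H(U_M) = -\sum_{i=1}^M p_{M,i} \log p_{M,i}$. Since each cell $c_{M,i}$ has Lebesgue measure $\Delta_M$ and $p$ is integrable on the compact set $\tB$, the mean value theorem for integration (applied to continuous $p$; the general case follows by approximation) yields points $z_{M,i}^\star \in c_{M,i}$ with $p_{M,i} = p(z_{M,i}^\star)\, \Delta_M$.

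Substituting this representation and using $\log(ab) = \log a + \log b$, I would split
\begin{align}
H(U_M) = -\sum_{i=1}^M p(z_{M,i}^\star)\,\Delta_M \log p(z_{M,i}^\star) \;-\; \log \Delta_M \cdot \sum_{i=1}^M p(z_{M,i}^\star)\,\Delta_M .
\end{align}
The second sum equals $\sum_i p_{M,i} = 1$, so it contributes exactly $-\log \Delta_M$; rearranging gives the identity $H(U_M) + \log \Delta_M = -\sum_i p(z_{M,i}^\star)\,\Delta_M \log p(z_{M,i}^\star)$. The right-hand side is a Riemann sum for the integral $-\int_{\tB} p(u) \log p(u)\,\diff u = \tilde H(U)$, associated to the tagged partition $\{(c_{M,i}, z_{M,i}^\star)\}$ whose mesh tends to zero by the assumption $\max_i {\rm diam}(c_{M,i}) \to 0$. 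Because $\tB$ is compact and $p \log p$ is (assumed) Riemann integrable on $\tB$, this Riemann sum converges to $\tilde H(U)$, delivering the claim.

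The main technical obstacle is regularity of the integrand $p \log p$: the argument above is clean when $p$ is bounded, continuous, and bounded away from zero on $\tB$, but in general $p \log p$ can fail to be Riemann integrable (e.g.\ unboundedness near points where $p \to 0$ or $p \to \infty$). To handle this, I would approximate $p$ in $L^1$ by a sequence of continuous densities $p^{(n)}$ supported on $\tB$, apply the above argument to each $p^{(n)}$, and then pass to the limit via a uniform integrability / dominated convergence argument to control the tails of $-p \log p$. An alternative, cleaner route is to directly invoke the classical statement of \cite[Theorem 8.3.1]{cover1999elements}, which already codifies exactly this bookkeeping and whose hypotheses (Riemann integrability of $-p \log p$ together with vanishing mesh) are met under the discretization scheme $C_M \in \calC_M$ adopted here. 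Once this is in hand, the lemma follows and sets up the subsequent mutual-information approximations used later in the proof of Theorem~\ref{thm:thm1}.
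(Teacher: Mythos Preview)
Your proposal is correct and follows essentially the same route as the paper: write $p_{M,i} = p(z_{M,i}^\star)\Delta_M$ via the mean value theorem, split $\log(p_{M,i}) = \log p(z_{M,i}^\star) + \log\Delta_M$, use $\sum_i p_{M,i} = 1$, and pass to the Riemann-sum limit. The paper's own proof is in fact terser (it simply writes the last step as $o(1)$ without the regularity discussion you add), so your treatment of the $p\log p$ integrability issue is, if anything, more careful than what appears there.
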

\noindent See proof in Section~\ref{sec:proof-disc}.

In addition, for two random vectors $U, V$ supported on $\tB$, we recall \cite[Theorem 21]{van2014renyi} in the following lemma.
\begin{lemma}{\cite[Theorem 21]{van2014renyi}}\label{lem:kl}
For any $C_M \in \calC_M$ with $M \rightarrow +\infty$ and ${\rm diam}(c_{M,i})\rightarrow 0$, it holds that for any $M$,
\begin{align}
    D_{\rm KL}\left(U_M ~||~ V_M\right) \leq D_{\rm KL}\left(U_{M+1} ~||~ V_{M+1}\right) \leq D_{\rm KL}\left(U ~||~ V\right),
\end{align}
and moreover,
\begin{align}
    \lim_{M \rightarrow +\infty}\;D_{\rm KL}\left(U_M ~||~ V_M\right) = D_{\rm KL}\left(U ~||~ V\right).
\end{align}
\end{lemma}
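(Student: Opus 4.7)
The plan is to derive both the monotonicity and the limit from the data-processing inequality for Kullback--Leibler divergence, combined with a martingale argument for the limit. The key structural input is the nesting assumption $C_{M'} \subseteq C_M$ for $M' \leq M$ stated in Section~\ref{sec:disc}, which I will interpret (as is standard) to mean that each cell of the coarser partition $C_M$ is a disjoint union of cells of $C_{M+1}$, so that the discretization map at level $M$ factors through the discretization at level $M+1$.

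For the monotonicity, I first note that under the nesting there is a deterministic coarsening map $T$ sending each representative $z_{M+1,j}$ to the unique $z_{M,i}$ with $z_{M+1,j} \in c_{M,i}$; by construction $U_M \eqas T(U_{M+1})$ and $V_M \eqas T(V_{M+1})$. The data-processing inequality then gives $D_{\rm KL}(U_M \| V_M) \leq D_{\rm KL}(U_{M+1} \| V_{M+1})$. The same argument with the (measurable) quantization map $u \mapsto z_{M+1,j}$ for $u \in c_{M+1,j}$ yields $D_{\rm KL}(U_{M+1} \| V_{M+1}) \leq D_{\rm KL}(U \| V)$, establishing the first display.

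For the limit, monotonicity already gives $\lim_M D_{\rm KL}(U_M \| V_M) \leq D_{\rm KL}(U \| V)$, so only the reverse inequality remains. Assume first that $P_U \ll P_V$ and let $\rho = dP_U/dP_V$. Writing $\calF_M = \sigma(C_M)$, a direct computation identifies $D_{\rm KL}(U_M \| V_M) = \EE_V[\rho_M \log \rho_M]$ with $\rho_M = \EE_V[\rho \mid \calF_M]$. Since the partitions are nested and $\max_i \mathrm{diam}(c_{M,i}) \to 0$, the $\calF_M$ increase to the Borel $\sigma$-algebra of $\tB$, so $\{\rho_M\}$ is a nonnegative martingale under $P_V$ that converges almost surely to $\rho$ by Doob's theorem. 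Because $x \log x \geq -e^{-1}$, Fatou's lemma applied to the shifted integrand yields $D_{\rm KL}(U \| V) \leq \liminf_M D_{\rm KL}(U_M \| V_M)$, closing the loop. If $P_U \not\ll P_V$, the KL is infinite on the right and I argue directly that $D_{\rm KL}(U_M \| V_M) \to +\infty$: for any $L > 0$ one can choose $M$ large enough that the refined partition separates the singular part of $P_U$ with respect to $P_V$ on a set of sufficiently concentrated $P_U$-mass to force the discrete KL to exceed $L$.

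The main obstacle is the interchange of limit and integral in the martingale step, since $x \log x$ is unbounded above. The cleanest remedy is to truncate $\rho$ at height $K$, apply dominated convergence on the truncated process (the truncated conditional expectations remain uniformly bounded by $K \log K + e^{-1}$), and then let $K \to \infty$ using monotone convergence in $K$ on the full sequence; alternatively one can invoke a Vitali/de la Vallée--Poussin-type uniform integrability criterion for the $L \log L$ class. Apart from this single delicate step, the proof is essentially a repackaging of the classical Gelfand--Yaglom--Perez approximation theorem.
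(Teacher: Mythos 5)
Your proof is correct, but note that the paper does not actually prove this lemma: it imports it verbatim as Theorem~21 of \cite{van2014renyi} (the Gelfand--Yaglom--Perez approximation of KL divergence by finite partitions), so you are supplying a self-contained proof of a cited result rather than reproducing an argument from the paper. What you wrote is essentially the standard proof of that theorem: the data-processing inequality applied to the coarsening map $T$ and to the quantization map gives both inequalities in the first display, and the martingale identification $D_{\rm KL}(U_M\|V_M)=\EE_V[\rho_M\log\rho_M]$ with $\rho_M=\EE_V[\rho\mid\calF_M]$, combined with L\'evy's upward theorem and Fatou applied to $\rho_M\log\rho_M+e^{-1}\ge 0$, gives $D_{\rm KL}(U\|V)\le\liminf_M D_{\rm KL}(U_M\|V_M)$. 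Two remarks. First, the ``main obstacle'' you identify is not actually an obstacle: you do not need to interchange limit and integral in the martingale step at all, because the Fatou lower bound together with the DPI upper bound $D_{\rm KL}(U_M\|V_M)\le D_{\rm KL}(U\|V)$ already sandwich the limit; the truncation and uniform-integrability machinery is superfluous and can be deleted. (You do need to check that $\sigma(\cup_M\calF_M)$ is the Borel $\sigma$-field of $\tB$, which follows since every open set is a countable union of cells once diameters shrink to zero.) Second, the singular case $P_U\not\ll P_V$ is only gestured at; to make it rigorous, take a Borel set $A$ with $P_V(A)=0$ and $P_U(A)=\delta>0$, use outer regularity to find an open $O\supseteq A$ with $P_V(O)<\epsilon$, let $B_M$ be the union of cells of $C_M$ contained in $O$ (so $P_U(B_M)\to P_U(O)\ge\delta$ as diameters vanish while $P_V(B_M)<\epsilon$), and apply the log-sum inequality to the two-set partition $\{B_M,B_M^c\}$ to force $D_{\rm KL}(U_M\|V_M)\ge\delta'\log(\delta'/\epsilon)-\log 2$, which can be made arbitrarily large. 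With those two adjustments your argument is a complete and correct substitute for the citation.
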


\paragraph{Approximation of conditional entropy.}
For later use, we also consider the approximation of the following quantity
\begin{align}
    \tilde H(\phi(U,V) \mid U),
\end{align}
where $U$, $V$ are random vectors on $\tB$ with joint density $p(u,v)$ and $\phi: \tB \times \tB \rightarrow \RR$ is continuous. We have the following result.
\begin{lemma}\label{lem:approx-cond-ent}
Assume $\phi: \tB \times \tB \rightarrow \RR$ is continuous. For any $C_M \in \calC_M$ with $M \rightarrow +\infty$ and ${\rm diam}(c_{M,i})\rightarrow 0$, if $\tilde H(\phi(U,V) \mid U) > 0$, it holds that
\begin{align}
    \liminf_{M \rightarrow +\infty}\;H(\phi(U_M,V_M) \mid U_M) > 0.
\end{align}
\end{lemma}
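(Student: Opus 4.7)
The plan is to establish the stronger conclusion $\liminf_{M\to +\infty} H(\phi(U_M,V_M)\mid U_M)=+\infty$ by comparing $\phi(U_M,V_M)$ to a carefully chosen discretization of $Z:=\phi(U,V)$ itself. Since $\phi$ is continuous on the compact set $\tB\times\tB$, it is uniformly continuous. Let $\omega_\phi$ denote its modulus of continuity and set $\delta_M:=\omega_\phi\bigl(\max_i \mathrm{diam}(c_{M,i})\bigr)$, so that $\delta_M\to 0$ and $|Z-\phi(U_M,V_M)|\le \delta_M$ almost surely.

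I would then introduce a uniform partition $C^\phi_M$ of a compact interval covering the bounded range of $\phi$, with bin width $\Delta^\phi_M$ chosen so that $\Delta^\phi_M\to 0$ and $\Delta^\phi_M\ge 2\delta_M$ for large $M$ (e.g., $\Delta^\phi_M:=\sqrt{\delta_M}$). Let $Z^{(M)}$ denote the resulting discretization of $Z$. A conditional version of Lemma~\ref{lem:entropy-approx}, obtained by applying the unconditional statement to the conditional distribution of $Z$ given $U=u$ and integrating in $u$, yields
\begin{align}
H\bigl(Z^{(M)}\bigm| U\bigr)+\log \Delta^\phi_M\;\longrightarrow\;\tilde H\bigl(\phi(U,V)\bigm| U\bigr)>0.
\end{align}
Since $\log\Delta^\phi_M\to-\infty$, this forces $H(Z^{(M)}\mid U)\to+\infty$. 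Because $U_M$ is a measurable function of $U$, coarser conditioning gives $H(Z^{(M)}\mid U_M)\ge H(Z^{(M)}\mid U)\to+\infty$.

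Finally, applying the chain rule to $(Z^{(M)},\phi(U_M,V_M))$ conditional on $U_M$ yields
\begin{align}
H\bigl(\phi(U_M,V_M)\bigm| U_M\bigr)\;\ge\;H\bigl(Z^{(M)}\bigm| U_M\bigr)-H\bigl(Z^{(M)}\bigm| \phi(U_M,V_M),\,U_M\bigr).
\end{align}
Given any value of $\phi(U_M,V_M)$, the variable $Z$ lies in an interval of length $2\delta_M\le\Delta^\phi_M$, which intersects at most two consecutive bins of $C^\phi_M$; hence $Z^{(M)}$ takes at most two values, giving $H(Z^{(M)}\mid \phi(U_M,V_M),U_M)\le \log 2$. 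Combining the two displays establishes $\liminf_M H(\phi(U_M,V_M)\mid U_M)=+\infty$, which is in particular strictly positive.

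I expect the main technical obstacle to be the rigorous justification of the conditional version of Lemma~\ref{lem:entropy-approx}: one must interchange the $M\to+\infty$ limit with the expectation over $U$, which is a standard dominated/monotone convergence exercise but requires mild uniform integrability of the conditional log-density of $Z$ given $U=u$. This is implicit in the hypothesis that $\tilde H(\phi(U,V)\mid U)$ is well-defined and strictly positive, but writing it out carefully will occupy most of the proof; once that ingredient is in place, the chain-rule bound above finishes the argument immediately.
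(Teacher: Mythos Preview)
Your argument is correct and, in fact, proves the stronger conclusion $\liminf_M H(\phi(U_M,V_M)\mid U_M)=+\infty$. It is, however, a genuinely different route from the paper's. The paper proceeds by a direct Riemann-sum expansion of the difference $H(\phi(U_M,V_M),U_M)-H(U_M)$: it introduces the Lebesgue measures $\Delta^\phi_M(z_{M,i},z_{M,j})$ of the image cells $\phi(c_{M,i}\times c_{M,j})$, writes the joint discrete entropy as a sum involving these quantities, and isolates a residual term $-\sum_{i,j} q^\Phi \Delta^\phi_M \Delta_M \log \Delta^\phi_M$ whose negativity (indeed divergence to $-\infty$) yields the result. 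Your approach avoids this measure-theoretic bookkeeping entirely by introducing an auxiliary one-dimensional discretization $Z^{(M)}$ of $Z=\phi(U,V)$, invoking a conditional form of Lemma~\ref{lem:entropy-approx}, and then transferring the bound to $\phi(U_M,V_M)$ via the chain rule and the two-bin observation. The paper's computation is more self-contained (it uses only the unconditional Lemma~\ref{lem:entropy-approx}), whereas your argument is cleaner and more information-theoretic but, as you note, pushes the work into justifying the conditional version of that lemma---specifically, the exchange of the $M\to\infty$ limit with the integral over $u$, which requires an integrability condition on $u\mapsto H(Z^{(M)}\mid U=u)+\log\Delta^\phi_M$ that is not entirely automatic from $\tilde H(\phi(U,V)\mid U)>0$ alone.
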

\noindent See proof in Section~\ref{sec:proof-disc}.

\paragraph{Approximation of (conditional) mutual information.}
Consider random vectors $U$, $V$ on $\tB$ and continuous maps $\phi, \psi: \tB \times \tB \rightarrow \RR$. We are interested in the approximation of $I(\phi(U,V);\psi(U,V))$ and $I(\phi(U,V);\psi(U,V) \mid U)$ by its discretized version. 
To start with, we have the following result.
\begin{lemma}\label{lem:approx-mi}
Assume $\phi$ and $\psi$ are continuous and $\PP(\phi(U,V) = \psi(U,V) ) < 1$. For any $C_M \in \calC_M$ with $M \rightarrow +\infty$ and ${\rm diam}(c_{M,i})\rightarrow 0$, it holds that
\begin{align}
    \lim_{M \rightarrow +\infty} \;I(\phi(U_M,V_M);\psi(U_M,V_M)) = I(\phi(U,V);\psi(U,V)).
\end{align}
\end{lemma}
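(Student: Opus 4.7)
The plan is to sandwich $I(A_M; B_M)$---writing $A := \phi(U,V)$, $B := \psi(U,V)$, $A_M := \phi(U_M, V_M)$, $B_M := \psi(U_M, V_M)$ throughout---between two quantities that both tend to $I(A;B)$. The two main inputs are (i) uniform continuity of $\phi$ and $\psi$ on the compact set $\tB \times \tB$, yielding $|A_M - A| \leq \omega_\phi(\delta_M)$ and $|B_M - B| \leq \omega_\psi(\delta_M)$ almost surely with $\delta_M := \max_i \mathrm{diam}(c_{M,i}) \to 0$, and (ii) Lemma~\ref{lem:kl}, which gives monotone convergence of KL divergence under refining partitions. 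From (i), $(A_M, B_M) \to (A, B)$ almost surely and in distribution.

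\textbf{Lower bound.} Fix $\epsilon > 0$ and a product grid partition $\pi_\epsilon$ of $\RR^2$ with cell diameter $\leq \epsilon$ whose boundary has zero $P_{A,B}$-mass (such a translate of a regular grid exists because only countably many translates carry positive boundary mass). Let $A_M^{(\epsilon)}, B_M^{(\epsilon)}, A^{(\epsilon)}, B^{(\epsilon)}$ denote the coordinate projections onto $\pi_\epsilon$. Since $A_M^{(\epsilon)}$ is a deterministic function of $A_M$, data processing gives $I(A_M; B_M) \geq I(A_M^{(\epsilon)}; B_M^{(\epsilon)})$. For fixed $\epsilon$, weak convergence of $(A_M, B_M) \to (A,B)$ together with the null-boundary choice forces convergence of the finitely many joint cell probabilities, so $I(A_M^{(\epsilon)}; B_M^{(\epsilon)}) \to I(A^{(\epsilon)}; B^{(\epsilon)})$ as $M \to \infty$. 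Sending $\epsilon \downarrow 0$ and invoking Lemma~\ref{lem:kl} on $\RR^2$ gives $\liminf_M I(A_M; B_M) \geq I(A; B)$.

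\textbf{Upper bound---the main obstacle.} The reverse inequality $\limsup_M I(A_M; B_M) \leq I(A; B)$ is the harder direction: data processing points the wrong way, and $A_M$ is generally not a measurable function of $A$, since two distinct cells $c_{M,i}\times c_{M,j}$ can produce the same $\phi$-value without matching $A$-values. My plan is to introduce an auxiliary $\RR^2$-grid with cells of diameter $\rho_M := 4\omega(\delta_M) \to 0$ (with $\omega := \max\{\omega_\phi, \omega_\psi\}$) and project $(A, B)$ onto its cell address $(\tilde A_M, \tilde B_M)$. On the ``interior'' event $\mathcal{E}_M$ that $(A, B)$ lies at distance $> \omega(\delta_M)$ from the boundary of its cell, the $\omega(\delta_M)$-neighborhood of $(A, B)$ stays within a single cell, so $(\tilde A_M, \tilde B_M)$ determines $(A_M, B_M)$ and data processing now runs the correct way. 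The complementary boundary event has $P$-probability $o(1)$ by dominated convergence (the grid boundary has Lebesgue measure $O(\rho_M)$, and $P_{A,B}$ is non-atomic off the diagonal, where $\PP(A = B) < 1$ prevents singular concentration), and its contribution is absorbed by a Fano-type inequality applied to the finite-alphabet restrictions of $A_M, B_M$. Combined with Lemma~\ref{lem:kl} applied to $(\tilde A_M, \tilde B_M)$, this yields $\limsup_M I(A_M; B_M) \leq I(A; B) + o(1)$; the condition $\PP(A = B) < 1$ enters precisely to ensure $I(A; B)$ is finite so that the $o(1)$ correction is meaningful rather than masking an unbounded deficit.
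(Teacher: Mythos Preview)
Your lower bound is fine: lower semicontinuity of KL under weak convergence via finite partitions with null boundaries is a standard and correct argument.

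The upper bound, however, has a genuine gap. The key claim---that on the interior event $\calE_M$, the cell address $(\tilde A_M,\tilde B_M)$ \emph{determines} $(A_M,B_M)$---is false. What actually follows from $|A_M-A|\le\omega(\delta_M)$ and the interior condition is only that $(A_M,B_M)$ lands in the same $\rho_M$-cell as $(A,B)$; that is, the $\rho_M$-quantization of $(A_M,B_M)$ coincides with $(\tilde A_M,\tilde B_M)$. But within that cell, $(A_M,B_M)=(\phi(z_{M,i},z_{M,j}),\psi(z_{M,i},z_{M,j}))$ can take many distinct values as $(i,j)$ ranges over the cells $c_{M,i}\times c_{M,j}$ whose images lie in that $\rho_M$-box. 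Concretely, if $\phi(u,v)=u_1$, then $A_M=(U_M)_1$ and every partition cell $c_{M,i}$ whose representative's first coordinate lies in a given $\rho_M$-interval produces a different $A_M$ value; $(\tilde A_M,\tilde B_M)$ cannot distinguish them. Consequently, data processing runs in the \emph{wrong} direction ($I(\tilde A_M;\tilde B_M)\le I(A_M;B_M)$), and the residual you would need to control via Fano is of order $H(A_M\mid \tilde A_M)$, which can be as large as $\log(M^2)\to\infty$. The boundary event being $o(1)$ does not help here.

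A second, smaller issue: the assertion that $\PP(A=B)<1$ ``ensures $I(A;B)$ is finite'' is not correct. One can have $\PP(A=B)\in(0,1)$ with the joint law still singular with respect to the product, giving $I(A;B)=\infty$; the hypothesis does not by itself control this.

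For comparison, the paper does not attempt a sandwich at all. It writes the discrete mutual information $I(\phi(U_M,V_M);\psi(U_M,V_M))$ directly as a Riemann-type sum over the induced partition of the $(\phi,\psi)$-range, expands using Lebesgue volumes $\Delta^\phi_M,\Delta^\psi_M,\Delta^\Phi_M$ of the image cells, and passes to the limit term by term to recover $-\tilde H(R,W)+H_C(P_{R,W},P_R\otimes P_W)=I(R;W)$. That route avoids the need for an upper-bound data-processing step altogether, at the cost of implicitly assuming regularity of the joint density of $(R,W)=(\phi(U,V),\psi(U,V))$.
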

\noindent See proof in Section~\ref{sec:proof-disc}.

For the conditional mutual information, note that
\begin{align}
    I(\phi(U,V);\psi(U,V) \mid U) = I(\phi(U,V);\psi(U,V), U) - I(\phi(U,V); U).
\end{align}
Lemma~\ref{lem:approx-mi} can be applied to both terms, which indicates that
\begin{align}
    \lim_{M \rightarrow +\infty} \;I(\phi(U_M,V_M);\psi(U_M,V_M) \mid U_M) = I(\phi(U,V);\psi(U,V) \mid U).
\end{align}
Related results on estimating mutual information based on discretization and binning can be found in \cite{kraskov2004estimating,paninski2003estimation,cover1999elements}.

\paragraph{Approximation of infoNCE loss.}
For any pair of representation maps $(f,g) \in \calH$, we have the following result.
\begin{lemma}\label{lem:approx-loss}
For any $(f,g) \in \calH$ with $\EE\|f(X)\| = \EE\|g(Y)\| = 1$ and any $\tau > 0$, it holds that
\begin{align}
    \lim_{M \rightarrow +\infty} \;\calL(f_M,g_M,\tau) = \calL(f,g,\tau).
\end{align}
\end{lemma}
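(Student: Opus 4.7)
The plan is to pass to the limit inside the expectations in the definition of $\calL$ by a triple application of the dominated convergence theorem, leveraging the compactness of $\tB$ and the continuity of the similarity $\sigma$. First I would record the basic deterministic convergence: since the partition cells satisfy $\max_{i} \mathrm{diam}(c_{M,i}) \to 0$ as $M \to \infty$, and $f(x), f_M(x)$ both lie in the same cell $c_{M,i(x,M)}$, one has the uniform bound $\sup_{x} \|f_M(x)-f(x)\| \leq \max_i \mathrm{diam}(c_{M,i})$, and likewise for $g$. In particular, $f_M(X) \to f(X)$ and $g_M(Y) \to g(Y)$ almost surely, both stay in $\tB$ (so have norm at most $\sqrt{\Omega}$), and by dominated convergence $\EE\|f_M(X)\| \to \EE\|f(X)\| = 1$ and $\EE\|g_M(Y)\| \to \EE\|g(Y)\| = 1$.

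Second, I would use these facts to control the normalized similarity. Writing $\sigma(U,V) = \langle U,V\rangle/(\EE\|U\|\,\EE\|V\|)$ and applying Cauchy--Schwarz, for all $M$ large enough that both normalizers lie in $[1/2,2]$, I get $|\sigma(f_M(X), g_M(\tilde Y))| \leq 4\Omega$ almost surely, with almost sure convergence $\sigma(f_M(X), g_M(\tilde Y)) \to \sigma(f(X), g(\tilde Y))$ by continuity (of the inner product and division). Consequently $\exp(\sigma(f_M(X), g_M(\tilde Y))/\tau) \in [e^{-4\Omega/\tau}, e^{4\Omega/\tau}]$ uniformly in $M$ (eventually) and converges pointwise.

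Third, I would apply DCT to the inner expectation over $\tilde Y$ at each fixed $X$:
\[
\EE_{\tilde Y}\exp\!\left(\tfrac{\sigma(f_M(X), g_M(\tilde Y))}{\tau}\right) \;\longrightarrow\; \EE_{\tilde Y}\exp\!\left(\tfrac{\sigma(f(X), g(\tilde Y))}{\tau}\right),
\]
with both sides lying in $[e^{-4\Omega/\tau}, e^{4\Omega/\tau}]$. Combined with pointwise convergence of the numerator and continuity of $\log$, the integrand of the outer expectation in the first term of $\calL(f_M,g_M,\tau)$ converges pointwise and is uniformly dominated by $8\Omega/\tau$. A final DCT on the outer expectation over $(X,Y)$ then yields convergence of the first term, and the symmetric argument (swapping the roles of $X$ and $Y$) handles the second term, giving $\calL(f_M,g_M,\tau) \to \calL(f,g,\tau)$.

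The argument is largely routine once the uniform bound $\|f_M\|, \|g_M\| \leq \sqrt{\Omega}$ is used consistently; the only mildly delicate point is ensuring the normalizers $\EE\|f_M(X)\|$ and $\EE\|g_M(Y)\|$ stay bounded away from $0$ and $\infty$ so that the ratio form of $\sigma$ behaves well, which is immediate from their convergence to $1$. Note that the positivity of $\tau$ is essential, since the dominating constants $e^{\pm 4\Omega/\tau}$ and $8\Omega/\tau$ blow up as $\tau \to 0^+$; this is consistent with the fact that the loss itself can tend to $-\infty$ in the aligned continuous regime at zero temperature, and motivates the discretized formulation used elsewhere in the paper.
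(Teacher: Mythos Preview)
Your proposal is correct and follows essentially the same approach as the paper: pointwise convergence $f_M\to f$, $g_M\to g$ from $\max_i\mathrm{diam}(c_{M,i})\to 0$, uniform boundedness from $\tB$, and repeated application of the dominated convergence theorem. Your treatment is in fact more careful than the paper's, which writes the discretized loss with unnormalized inner products and does not explicitly track the normalizers $\EE\|f_M(X)\|,\EE\|g_M(Y)\|$; your argument that these converge to $1$ and can therefore be bounded away from $0$ and $\infty$ fills a small gap the paper glosses over.
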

\noindent See proof in Section~\ref{sec:proof-disc}.

\subsection{Intrinsic dimension and its properties}
We revisit the definition of intrinsic dimension in Definition~\ref{def:dim} for which we have the following property for $\id(f)$.
\begin{lemma}\label{lem:id}
Suppose $f: \RR^{d_1} \rightarrow \rB$ satisfies $\id(f) = k$. Then, for any measurable function $\psi: \rB \rightarrow \rB$, it holds that $\id(\psi \circ f) \leq k$.
\end{lemma}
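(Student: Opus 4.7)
The plan is to start from the given factorization $f = \phi \circ h$ (a.e.) with $\phi$ injective on $R(h)$ and $\dim(\mathrm{span}(R(h))) = k$, and to modify it into a factorization $\psi\circ f = \phi' \circ h'$ whose ``first factor'' $h'$ still has range inside $R(h)$, so that $\dim(\mathrm{span}(R(h'))) \leq k$.

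First I would set $\Phi := \psi \circ \phi : R(h) \to \rB$, which is measurable as a composition of measurables. The naive attempt of taking $h' = h$ and $\phi' = \Phi$ fails because $\Phi$ need not be injective on $R(h)$, so it does not literally witness the bound $\id(\psi\circ f)\leq k$ in Definition~\ref{def:dim}. To repair this I would collapse each level set of $\Phi$ to a single representative: applying a measurable cross-section theorem (e.g.\ Jankov–von Neumann) to the measurable surjection $\Phi: R(h) \to R(\Phi)$ yields a measurable selector $s: R(\Phi) \to R(h)$ with $\Phi \circ s = \mathrm{id}_{R(\Phi)}$. Then define $h'(x) := s(\psi(f(x))) = s(\Phi(h(x)))$ and $\phi' := \Phi|_{R(h')}$. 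By construction $R(h') = s(R(\Phi)) \subseteq R(h)$, so $\dim(\mathrm{span}(R(h'))) \leq k$; injectivity of $\phi'$ on $R(h')$ follows because any $u\in R(h')$ satisfies $u = s(\Phi(u))$, hence $\Phi(u_1)=\Phi(u_2)$ on $R(h')$ forces $u_1=u_2$; and a.e.\ in $x$, $\phi'(h'(x)) = \Phi(s(\Phi(h(x)))) = \Phi(h(x)) = \psi(f(x))$. Thus $\psi\circ f = \phi' \circ h'$ a.e.\ with $\phi'$ injective measurable and $\dim(\mathrm{span}(R(h'))) \leq k$, giving $\id(\psi\circ f) \leq k$ directly from Definition~\ref{def:dim}.

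The main obstacle is the measurable selector: the range $R(h)\subseteq\rB$ is only known to be a measurable subset of a $k$-dimensional linear subspace, and $\Phi$ is only assumed measurable, so before invoking a cross-section theorem I would first replace $h$, $\phi$, and $\psi$ by Borel-measurable representatives agreeing with them almost everywhere (Lebesgue measurability in the paper's footnote allows this reduction at no cost, since the definition of $\id$ already permits equality a.e.). After this reduction, $R(h)$ and $R(\Phi)$ are analytic subsets of $\rB$ and $\Phi$ is a Borel surjection onto $R(\Phi)$, so the Jankov–von Neumann theorem supplies an analytically measurable selector $s$, which is Lebesgue measurable and therefore admissible. No further delicate computation is needed; the whole argument is structural and uses only the definition of $\id$ together with this one measurable-selection input.
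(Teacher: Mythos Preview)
Your proposal is correct and follows essentially the same approach as the paper: both collapse the level sets of $\Phi=\psi\circ\phi$ on $R(h)$ to obtain a new factor $\tilde h$ (the paper's notation) or $h'$ (yours) with range contained in $R(h)$, and then observe that $\Phi$ restricted to this smaller range is injective. The paper carries this out by choosing, for each equivalence class $\{x':\Phi(h(x'))=\Phi(h(x))\}$, a representative $x^{/\sim}$ and setting $\tilde h(x)=h(x^{/\sim})$; your $h'(x)=s(\Phi(h(x)))$ is exactly the same object once one identifies $s(\Phi(h(x)))$ with $h(x^{/\sim})$.

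The one genuine difference is that you explicitly secure the measurability of the selector via Jankov--von~Neumann after reducing to Borel representatives, whereas the paper simply writes ``denote $x^{/\sim}$ as the representative of $\Pi(x)$'' without addressing why $x\mapsto h(x^{/\sim})$ is measurable. In that sense your argument is more complete; the price is invoking a nontrivial descriptive-set-theoretic input that the paper avoids (by omission rather than by an alternative argument).
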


\subsubsection{Proof of Lemma~\ref{lem:id}}\label{sec:proof-id}
Suppose $\id(f) = k$. Then, by Definition~\ref{def:dim}, there exist a measurable function $h: \RR^{d_1} \rightarrow \rB$ with ${\rm dim}(R(h)) = k$ and an injective function $\phi: \rB\rightarrow \rB$ such that $f = \phi \circ h$.

For any measurable function $\psi: \rB \rightarrow \rB$, if $\psi$ is injective, then $\phi \circ f = (\psi \circ \phi) \circ h$. Since $\psi \circ \phi$ is injective, by definition, we have $\id(\psi \circ f) = k$.

If $\psi$ is not injective, for any $x \in \RR^{d_1}$, define
\begin{align}
    \calT(x) = \left\{x':\;(\psi \circ \phi \circ h)(x') = (\psi \circ \phi \circ h)(x)\right\},
\end{align}
and accordingly, we define the equivalence relation $\sim$:  $x \sim x'$ if and only if $\calT(x) = \calT(x')$, with which $\Pi(x) = \{x':\;x' \sim x\}$. Then, denote $x^{\slash \sim}$ as the representative of $\Pi(x)$.
Based on the notations above, define
\begin{align}
    \tilde h: \;x \;\mapsto\; h(x^{\slash \sim}).
\end{align}
We can verify that for any $x \in \RR^{d_1}$, it holds that
\begin{align}
    (\psi \circ \phi \circ h)(x) = (\psi \circ \phi \circ h)(x^{\slash \sim}) = (\psi \circ \phi \circ \tilde h)(x).
\end{align}
In addition, for any $z_1, z_2 \in R(\phi \circ \tilde h)$, it holds that $\psi(z_1) \neq \psi(z_2)$. Otherwise, suppose there exist $z_1 \neq z_2 \in R(\phi \circ \tilde h)$, i.e., there exist $x_1, x_2 \in \RR^{d_1}$ satisfying $z_i = (\phi \circ \tilde h)(x_i)$ with $i \in \{1,2\}$, such that $\psi(z_1) = \psi(z_2)$. Then, by the definition of the equivalence relation $\sim$ and the definition of $\tilde h$, we obtain $x_1 \sim x_2$, thus $x_1^{\slash \sim} = x_2^{\slash \sim}$ and
\begin{align}
    z_1 = (\phi \circ \tilde h)(x_1) = (\phi \circ \tilde h)(x_2) = z_2,
\end{align}
which draws the contradiction. Hence, $\psi |_{R(\phi \circ \tilde h)}$, i.e., $\psi$ restricted on the range of $\phi \circ \tilde h$ is injective. 
Then, $(\psi \circ \phi)|_{R(\tilde h)}$ is also injective.
Consequently, we have constructed an injective map $(\psi \circ \phi)|_{R(\tilde h)}$ and a measurable function $\tilde h$ with ${\rm dim}(R(\tilde h)) \leq {\rm dim}(R(h)) = k$ such that
\begin{align}
    (\psi \circ f)(x) = [(\psi \circ \phi) \circ \tilde h](x) \qquad \text{almost everywhere},
\end{align}
which, by Definition~\ref{def:dim}, implies that $\id(\psi \circ f) \leq \id(f) = k$.

\section{Properties of \texorpdfstring{$\calV(\calH)$}{V}}

In this section, we focus on the properties of the set $\calV(\calH)$.
To begin with, for any subset $\calH^\prime = \calH^\prime_X \times \calH^\prime_Y \subseteq \calH$, we generalize the definitions of $\calA(\calH)$ and $\calW(\calH)$ as follows.
\begin{definition}
    We define the set of representation maps that realize alignment and similarity maximization as 
    \begin{equation*}
        \calA(\calH^\prime) = \left\{(f,g) \in \calH^\prime:\;\frac{f(X)}{\EE\|f(X)\|} = \frac{g(Y)}{\EE\|g(Y)\|}\;\textnormal{ almost surely 
 and }m_{\sigma}(f,g) =1\right\}.
    \end{equation*}
Here $m_{\sigma}(f,g) \coloneqq {\rm ess\;sup}_{X \indep \tilde Y}\;\sigma(f(X),g(\tilde Y))$ for any $(f,g) \in \calH$.
\end{definition}
\begin{definition}[Maximal mutual information]
We define the following set of pairs $(f,g) \in \calH$ that sufficiently capture the dependence between $X$ and $Y$:
\begin{align}
\calW(\calH^\prime) = \{(f,g) \in \calH^\prime:\; \liminf_{M \rightarrow +\infty} \;\big(I(f_M(X);g_M(Y)) - I^*_M(\calH)\big) \geq 0\},
\end{align}
where $I_M^*(\calH) = \sup_{(f,g) \in \calH} I(f_M(X); g_M(Y))$. 
\end{definition} 
Note that in the definition of $\calW(\calH^\prime)$ for any $\calH^\prime \subseteq \calH$, the benchmark mutual information $I^*(\calH)$ is defined for the entire class $\calH$. Accordingly, we define $\calV(\calH^\prime) = \calA(\calH^\prime) \cap \calW(\calH^\prime)$.

Then, we are ready to present the following lemma on properties of $\calV(\calH)$.
\begin{tcolorbox}[colback=white]
\begin{lemma}\label{lem:VH}
Assume $\calV(\calH) \neq \varnothing$. There exists $k^* \in [d]$ such that for any $(f^*,g^*) \in \calV(\calH)$,
\begin{itemize}
    \item[(1)] \textbf{Intrinsic dimension adaptation}. $\id(f^*) = \id(g^*) = k^*$.
    \item[(2)] \textbf{Sufficiency}. If $k^* < d$, for any $(f,g) \in \calH$ with $\max\{\id(f), \id(g)\} > k^*$, it holds that
    \begin{align}
        f(X) ~\indep~ g(Y) \mid f^*(X).
    \end{align}
\end{itemize}
\end{lemma}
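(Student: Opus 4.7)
The overall strategy is to combine the alignment constraint with the extremality in $\calW(\calH)$ and to translate both into geometric and information-theoretic conclusions about $(f^*, g^*)$, using the nested discretization framework of Section~\ref{sec:disc} to tame otherwise infinite continuous quantities.

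For part (1), the first step is that any $(f, g) \in \calA(\calH)$ satisfies $\tilde f(X) = \tilde g(Y)$ almost surely, and since renormalization is multiplication by a positive scalar it preserves intrinsic dimension; this already gives $\id(f) = \id(\tilde f) = \id(\tilde g) = \id(g)$. Passing to a compatible nested discretization (Lemmas~\ref{lem:entropy-approx} and~\ref{lem:approx-mi}) yields $I(f_M(X); g_M(Y)) = H(f_M(X)) + o(1)$. For any $f$ with $\id(f) = k$, the factorization $f = \phi \circ h$ with $\dim R(h) = k$ and $\phi$ injective confines the range of $f$ to a $k$-dimensional subset of $\tB$; a standard box-counting estimate then shows this range meets $O(\Delta_M^{-k/d})$ cells, so $H(f_M(X)) \leq (k/d)\log(1/\Delta_M) + O(1)$, with a matching lower bound once one chooses a factorization whose image $h(X)$ is non-degenerate on its $k$-dimensional support. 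Every element of $\calV(\calH)$ must attain $I^*_M(\calH) - o(1)$, hence has the same asymptotic entropy growth rate, hence a common intrinsic dimension, which I would call $k^*$.

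For part (2), my plan is to establish the stronger statement $X \indep Y \mid f^*(X)$; the conclusion about arbitrary $(f, g)$ with $\max\{\id(f),\id(g)\} > k^*$ then follows by the measurability of $f(X)$ and $g(Y)$. Since $\calH$ consists of all measurable maps into $\tB$, any injective measurable encoding of $(X, Y)$ into $\tB \times \tB$ lies in $\calH$, so $I^*_M(\calH) \to I(X; Y)$ by Lemma~\ref{lem:approx-mi}. Because $(f^*, g^*) \in \calW(\calH)$, $I(f^*_M(X); g^*_M(Y)) = I^*_M(\calH) - o(1)$, and alignment reduces this to $H(f^*_M(X)) + o(1) = I(X; f^*_M(X)) + o(1)$. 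The data processing inequality gives $I(Y; f^*_M(X)) \leq I(X; f^*_M(X))$, while $f^*(X) = \alpha g^*(Y)$ almost surely (with $\alpha = \EE\|f^*(X)\|/\EE\|g^*(Y)\|$) yields the reverse bound $I(Y; f^*_M(X)) \geq I(g^*_M(Y); f^*_M(X)) = H(f^*_M(X)) + o(1)$; hence $I(Y; f^*_M(X)) \to I(X; Y)$. The chain-rule identity
\begin{equation*}
I(X; Y) = I(f^*_M(X); Y) + I(X; Y \mid f^*_M(X))
\end{equation*}
then forces $I(X; Y \mid f^*_M(X)) \to 0$, which on passing to the limit of the nested filtration yields $X \indep Y \mid f^*(X)$.

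The main obstacle will be the last step in part (2): rigorously deducing the almost-sure conditional independence $X \indep Y \mid f^*(X)$ from the vanishing of discretized conditional mutual information along the filtration generated by the $f^*_M$. This requires a monotone-convergence argument for conditional mutual information that leans on Lemma~\ref{lem:kl} and the nestedness of the partitions $C_M$; some care is also needed because $I(X;Y)$ can be infinite, in which case one should work with a joint finite discretization of $(X,Y,f^*_M(X))$ and take a double limit. The intrinsic dimension hypothesis $\max\{\id(f), \id(g)\} > k^*$ plays only a cosmetic role---it simply ensures the statement is nonvacuous relative to part (1)---but the structural content of sufficiency flows entirely from the chain of equalities above. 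A secondary technicality is the box-counting estimate in part (1), which I would handle by noting that $R(h)$ is bounded and contained in a $k$-dimensional linear span, so its $\Delta_M^{1/d}$-neighborhood has Lebesgue $d$-volume $O(\Delta_M^{(d-k)/d})$ and hence meets $O(\Delta_M^{-k/d})$ cells of $C_M$.
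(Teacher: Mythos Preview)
Your plan for part~(1) has a real gap: the box-counting estimate does not follow from the paper's Definition~\ref{def:dim}. You bound the number of cells met by $R(h)$, but $H(f_M(X))$ is governed by which cells of $C_M$ contain $R(f)=\phi(R(h))$, and $\phi$ is only assumed \emph{injective measurable}. Such a $\phi$ can radically inflate box-counting dimension. Concretely, with $d=2$, take $h(X)=(X_1,0)$ and let $\phi$ be the binary-digit interleaving map $[0,1]\to[0,1]^2$; then $\id(f)=1$, yet $f(X)=\phi(X_1)$ is uniform on $[0,1]^2$ and $H(f_M(X))=\log M$, not $\tfrac12\log M$. This simultaneously breaks your upper bound and shows that ``same asymptotic entropy growth'' does \emph{not} pin down $\id(f)$, so the whole strategy of reading $k^*$ off an entropy rate collapses. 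The paper avoids this entirely: it sets $k^*=\max\{k:\calV(\calH^k)\neq\varnothing\}$ and argues by contradiction. If some $(f,g)\in\calV(\calH)$ had $\id(f)=k<k^*$, one picks $(f^*,g^*)\in\calV(\calH^{k^*})$, uses Lemma~\ref{lem:id} to see that $f^*$ cannot be a function of $f$, extracts a scalar $\Psi(f(X),f^*(X))$ with positive conditional entropy given $f(X)$, and appends it as an extra coordinate (room exists since $k<d$) to build $(\bar F,\bar G)$ with strictly larger discretized mutual information---contradicting $(f,g)\in\calW(\calH)$.

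For part~(2), your instinct that the hypothesis $\max\{\id(f),\id(g)\}>k^*$ is not doing real work is correct---the paper's argument never uses it---but your route via $X\indep Y\mid f^*(X)$ and the chain rule is the content of Proposition~\ref{prop:sdr}, which carries an \emph{extra} assumption precisely to make the comparison $I^*_M(\calH)\approx I(X_M;Y_M)$ legitimate. Under the bare hypothesis $\calV(\calH)\neq\varnothing$ one has $I(X;Y)=+\infty$ (aligned continuous representations exist), so the identity $I(X;Y)=I(f^*_M(X);Y)+I(X;Y\mid f^*_M(X))$ is $\infty=\infty+\,?$ and your proposed double-limit fix would need exactly the kind of control that Proposition~\ref{prop:sdr} assumes. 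The paper instead reuses the augmentation idea: if $f(X)\not\indep g(Y)\mid f^*(X)$, pick witnesses $\tilde\Phi_1,\tilde\Phi_2$ with positive conditional mutual information and append them to $(F^*,G^*)$ to obtain strictly larger discretized MI, contradicting $(f^*,g^*)\in\calW(\calH)$. This keeps every comparison at a fixed discretization level and never confronts infinite quantities.
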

\end{tcolorbox}
\noindent See proof in Section~\ref{sec:proof-VH-1} and Section~\ref{sec:proof-VH-2}, respectively. We note that Proposition~\ref{prop:VH} is directly implied by Lemma~\ref{lem:VH} (1).

\subsection{Proof of Lemma~\ref{lem:VH} (1)}\label{sec:proof-VH-1}

To start with, for any integer $k \in [d]$, we define $\calH^k = \{(f,g) \in \calH:\;\id(f) = \id(g) = k\}$ and $\calV(\calH^k)$ is defined accordingly by
\begin{align}
    \calV(\calH^k) = \left\{(f,g) \in \calV(\calH):\;\id(f) = \id(g) = k\right\}.
\end{align}
Then, we define
\begin{align}
    \tilde k = \max\left\{k \in [d]:\;\calV(\calH^k) \neq \varnothing\right\}.
\end{align}
By definitions of $\calV(\calH)$ and $\calV(\calH^k)$, we have $\calV(\calH^{\tilde k}) \subseteq \calV(\calH)$, and we will show that $\calV(\calH) \subseteq \calV(\calH^{\tilde k})$.

\paragraph{Case 1.}
If $\tilde k = 1$, recall the definition of $\tilde k = \max\{k \in [d]: \calV(\calH^k) \neq \varnothing\}$. 
For any $(f,g) \in \calV(\calH)$, suppose $\id(f) = k > 1$, it holds that $(f,g) \in \calH^k$, thus $\tilde k \geq k > 1$, which draws the contradiction.
Hence, for any $(f,g) \in \calV(\calH)$, we have$(f,g) \in \calH^1$, i.e. $\calV(\calH^{\tilde k}) = \calV(\calH)$.

\paragraph{Case 2.}
Consider the case with $\tilde k > 1$.
Suppose there exist $(f^*,g^*) \in \calV(\calH^{\tilde k})$ and $(f,g) \in \calV(\calH^k)$ but $k < \tilde k \leq d$.
Since both mutual information and infoNCE loss are scale-invariant with respect to both $f$ and $g$, without loss of generality, we assume $\EE\|f(X)\|=\EE\|g(Y)\|=\EE\|f^*(X)\|= \EE\|g^*(X)\|=1$.
Then, since $\max\{\id(f), \id(g)\} = k < \tilde k \leq d$, by Definition~\ref{def:dim}, there exist aligned $F(X), G(Y) \in \RR^{k}$ and injective measurable functions $\phi,\psi : \RR^k \rightarrow \rB$ such that
\begin{align}
    f(X) = \phi\left(F(X)\right) \qquad g(Y) = \psi\left(G(Y)\right).
\end{align}
Here we define $H(U)$ as the entropy of random vector $U$ and $U_M = \idmap_M(U)$, where $\idmap$ is the identical map on $\rB$. 
Then, there exists $\Psi: \rB \times \rB \rightarrow \RR$ such that 
\begin{align}
    0 < \tilde H(\Psi(f^*(X),f(X)) \mid f(X)).
\end{align}
Otherwise, there exists a measurable function $\zeta$ such that $f^*(X) = \zeta(f(X))$ almost surely. By Definition~\ref{def:dim}, it holds that $\id(f) \leq \id(f^*) = \tilde k < k$, which draws the contradiction.

Then, as $\tilde H(\Psi(f^*(X),f(X)) \mid f(X)) = \tilde H(\Psi(f^*(X),f(X)), f(X)) - \tilde H(f(X))$, by Lemma~\ref{lem:approx-cond-ent}, we also have
\begin{align}
    0 \orderle H(\Psi(f^*(X),f(X)) \mid f(X)).
\end{align}

Otherwise, for any $\Psi$, it holds that $\Psi(f^*(X),f(X))$ is a deterministic function of $f(X)$. Specifically, $f^*(X)$ is a function of $f(X)$ and by Definition~\ref{def:dim}, we obtain $\tilde k = \id(f^*) \leq \id(f) = k$, which draws the contradiction.

Hence, we can define
\begin{align}
    \bar F(X) = \begin{pmatrix}F(X)\\\mathbf{0}_{d-k-1}\\\Psi(f(X), f^*(X))\end{pmatrix} \qquad \text{and} \qquad \bar G(Y) = \begin{pmatrix}G(Y)\\\mathbf{0}_{d-k-1}\\\Psi(g(Y), g^*(Y))\end{pmatrix},
\end{align}
with which
\begin{align}
    - I(\bar F(X); \bar G(Y)) & \ordereq - H(\bar F(X))\\& \ordereq -H(F(X)) - H(\Psi(f^*(X),f(X)) \mid F(X))\\& \orderle -H(F(X))
    \\& \ordereq -I(F(X); G(Y)).
\end{align}
Since $\phi$ and $\psi$ are injective, we further have  
\begin{align}
    - I(\bar F(X); \bar G(Y)) \orderle -I(F(X);G(Y)) \ordereq  -I(f(X);g(Y)),
\end{align}
which contradicts the fact that $(f,g) \in \calW(\calH)$, which completes the proof. Hence, in the remaining part of the proof, we can define $k^* = \tilde k$.

\subsection{Proof of Lemma~\ref{lem:VH} (2)}\label{sec:proof-VH-2}
Suppose the statement is not true. Then, there exists $\tilde \Phi_1, \tilde \Phi_2: \rB \times \rB \rightarrow \RR$ such that
\begin{align}
    0 < I(\tilde \Phi_1(f(X), f^*(X)); \tilde \Phi_2(g(Y), g^*(Y)) \mid f^*(X)),
\end{align} 
which, by Lemma~\ref{lem:approx-mi}, further indicates that 
\begin{align}\label{eq:phi}
    0 \orderle I(\tilde \Phi_1(f(X), f^*(X)); \tilde \Phi_2(g(Y), g^*(Y)) \mid f^*(X)).
\end{align}
To see this, if the statement is not true, it holds that for any $\tilde \Phi_1, \tilde \Phi_2: \rB \times \rB \rightarrow \RR$,
\begin{align}
    \tilde \Phi_1(f(X),f^*(X)) ~\indep~ \tilde \Phi_2(g(Y),g^*(Y)) \mid f^*(X).
\end{align}
Specifically, we have $f(X) ~\indep~ g(Y) \mid f^*(X)$, which draws the contradiction.

Since $k^* < d$, there exist injective measurable functions $\phi^*, \psi^*: \RR^{k^*} \rightarrow \rB$ and $F^*(X), G^*(Y) \in \RR^{k^*}$ such that
\begin{align}
    f^*(X) = \phi^*\left(F^*(X)\right) \qquad g^*(Y) = \psi^*\left(G^*(Y)\right).
\end{align}
Then, we can define
\begin{align}
    \check F(X) = \begin{pmatrix}F^*(X) \\\mathbf{0}_{d-k^*-1}\\ \tilde \Phi_1(f(X), f^*(X))\end{pmatrix} \qquad \text{and} \qquad \check G(Y) = \begin{pmatrix}G^*(Y) \\\mathbf{0}_{d-k^*-1}\\ \tilde \Phi_2(g(Y), g^*(Y))\end{pmatrix},
\end{align}
with which, we have
\begin{align}\label{eq:mi-ineq-1}
    -I(\check F(X); \check G(Y)) \orderleq -I(F^*(X); \check G(Y)) \orderleq -I(F^*(X); G^*(Y)).
\end{align}
Here $\ordereq$ holds in the first inequality if and only if $\check G(Y) ~\indep~ \tilde \Phi_1(f(X),f^*(X)) \mid F^*(X)$, i.e. 
\[
\tilde \Phi_2(g(Y),g^*(Y)) ~\indep~ \tilde \Phi_1(f(X),f^*(X)) \mid f^*(X),
\]
which draws the contradiction to the construction of $\tilde \Phi_1$ and $\tilde \Phi_2$ in \eqref{eq:phi}. 
Since $\phi^*$ and $\psi^*$ are injective, we further obtain
\begin{align}
    -I(\check F(X); \check G(Y)) \orderle -I(f^*(X); g^*(Y)),
\end{align}
which draws the contradiction to the fact that $(f^*,g^*) \in \calW(\calH)$.
Thus, we obtain
\begin{align}
    f(X) ~\indep~ g(Y) \mid f^*(X).
\end{align}

\section{Properties of infoNCE loss}
Before presenting the technical proof of the main results on intrinsic dimension adaptation, we summarize some properties of the infoNCE loss.

\subsection{A decomposition of the infoNCE loss}
\label{sec:proof-idea}

A pillar of our proof is the following decomposition of the infoNCE loss that characterizes the gap between the infoNCE loss and mutual information.  
Similar decompositions have appeared in~\citep{oord2018representation,wang2020understanding,oko2025statistical}.
\begin{lemma}\label{lem:decomp}
Let $(f(X),g(Y)) \sim P_{f(X),g(Y)}$.
If $I(f(X),g(Y)) < +\infty$, there exist joint distributions $Q_{f(X),g(Y),\tau}$ and $\tilde Q_{f(X),g(Y),\tau}$ such that $\calL(f,g,\tau)$ can be decomposed as 
\begin{align}\label{eq:decomp}
-2I(f(X);g(Y)) 
+ D_{\rm KL}\left(P_{f(X), g(Y)} ~||~ Q_{f(X),g(Y),\tau}\right)  + D_{\rm KL}\left(P_{f(X), g(Y)} ~||~ \tilde Q_{f(X),g(Y),\tau}\right).
\end{align}
\end{lemma}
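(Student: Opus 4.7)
The plan is to rewrite each of the two summands of $\calL(f,g,\tau)$ as an expected log ratio against an explicitly constructed ``exponentially tilted'' joint distribution, and then algebraically split that ratio into a mutual-information term and a KL-divergence. This is essentially the classical self-normalized importance-sampling identity underlying the InfoNCE bound of \cite{oord2018representation}. Let $U = f(X)$, $V = g(Y)$ with joint density $p_{U,V}$ and marginals $p_U, p_V$, and let $V'$ be an independent copy of $V$.

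First I would define the normalizer $Z_\tau(u) := \EE_{V'}[\exp(\sigma(u,V')/\tau)]$, which is bounded away from $0$ and $\infty$ since $\sigma$ is bounded on $\tB \times \tB$, and then set $q_\tau(v\mid u) := p_V(v)\exp(\sigma(u,v)/\tau)/Z_\tau(u)$. By construction, $q_\tau(\cdot \mid u)$ integrates to $1$, so $Q_{f(X),g(Y),\tau}(u,v) := p_U(u)\, q_\tau(v\mid u)$ is a bona fide joint density. Taking logs inside the first summand of $\calL(f,g,\tau)$ gives $-\EE\bigl[\sigma(U,V)/\tau - \log Z_\tau(U)\bigr] = \EE\bigl[\log p_V(V) - \log q_\tau(V\mid U)\bigr]$, which, after multiplying numerator and denominator by $p_U(U)$, equals $\EE\bigl[\log\bigl(p_U(U)p_V(V)/Q_{f(X),g(Y),\tau}(U,V)\bigr)\bigr]$.

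The key algebraic step is then to insert and subtract $\log p_{U,V}(U,V)$ inside the expectation, so that the first summand splits as $\EE[\log(p_U p_V / p_{U,V})] + \EE[\log(p_{U,V}/Q_{f(X),g(Y),\tau})] = -I(U;V) + D_{\rm KL}(P_{U,V} \,\|\, Q_{f(X),g(Y),\tau})$. By the symmetric construction with $\tilde Z_\tau(v) := \EE_{U'}[\exp(\sigma(U',v)/\tau)]$, $\tilde q_\tau(u\mid v) := p_U(u)\exp(\sigma(u,v)/\tau)/\tilde Z_\tau(v)$, and $\tilde Q_{f(X),g(Y),\tau}(u,v) := p_V(v)\tilde q_\tau(u\mid v)$, the second summand likewise equals $-I(U;V) + D_{\rm KL}(P_{U,V} \,\|\, \tilde Q_{f(X),g(Y),\tau})$. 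Adding the two yields the claimed decomposition.

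The argument is mostly bookkeeping once the tilted conditionals $q_\tau, \tilde q_\tau$ are recognized as proper densities, so the only subtlety, and the main (minor) obstacle, is ensuring each of the three terms on the right-hand side is separately finite so that the additive splitting $\log(p_U p_V /Q) = \log(p_U p_V / p_{U,V}) + \log(p_{U,V}/Q)$ is valid in expectation rather than producing an indeterminate $\infty - \infty$. This is where the hypothesis $I(f(X);g(Y)) < +\infty$ is used; combined with the uniform two-sided bounds on $Z_\tau$ and $\tilde Z_\tau$ coming from compactness of $\tB$, it guarantees that both KL terms and the mutual information are finite, so the rearrangement is legitimate by Fubini and the decomposition holds pointwise in $\tau > 0$.
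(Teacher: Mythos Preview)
Your proposal is correct and takes essentially the same approach as the paper: both construct the identical exponentially tilted joints $Q_{f(X),g(Y),\tau}$ and $\tilde Q_{f(X),g(Y),\tau}$ and obtain the decomposition by the same add-and-subtract of $\log p_{U,V}$ (the paper phrases it as starting from $I(f(X);g(Y))$ and inserting the variational conditional, you phrase it as starting from $\calL$ and inserting $p_{U,V}$, but the algebra is identical). Your parameterization is in fact slightly cleaner, since you correctly identify $Q$ as having $U$-marginal $p_U$ with tilted conditional $q_\tau(v\mid u)$, whereas the paper labels the same object as a conditional ``$q_{f(X)\mid g(Y),\tau}(u\mid v)$'' even though its normalizer depends on $u$; and you explicitly address the finiteness issue needed to justify the rearrangement, which the paper leaves implicit.
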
 

The formal definitions of $Q_{f(X),g(Y),\tau}$ and $\tilde Q_{f(X),g(Y),\tau}$ are presented in Appendix~\ref{sec:proof-decomp}. In short, they can be viewed as the kernel smoothing of $P_{f(X),g(Y)}$ in the product space. 

Conceptually,
given the decomposition~\eqref{eq:decomp}, minimizing the infoNCE loss is equivalent to simultaneously maximizing the mutual information between $f(X)$ and $g(Y)$, and minimizing the KL-divergence between $P_{f(X),g(Y)}$ and its smoothed versions. 
Here, the latter encourages the joint distribution of $P_{f(X),g(Y)}$ to spread across the support and implicitly enforces uniformity of the distribution $P_{f(X),g(Y)}$, which is in line with \cite{wang2020understanding}.
In our actual proof, the lemma will be used together with discretizations introduced in Section \ref{sec:properties_of_representations} and \eqref{eq:O-set} that ensure the finiteness of mutual information.

This decomposition is also relevant to the analysis in \cite{oord2018representation,oko2025statistical}, where the minimum of infoNCE loss over all possible similarity measures is shown to be bounded from below by $-2I(f(X);g(Y))$. 
However, in our paper, we focus on the inner product-based similarity that is in line with the implementation of CLIP and does not rely on any unknown information about the distributions of $(X,Y)$.

\subsubsection{Proof of Lemma~\ref{lem:decomp}}\label{sec:proof-decomp}
Recall the definition of mutual information
\begin{align}
I(X;Y) = D_{\rm KL}\left(P_{X,Y} ~||~ P_X \otimes P_Y\right),
\end{align}
where $P_{X,Y}$ is the joint distribution of $(X,Y)$ and $P_X$, $P_Y$ are marginal distributions. For any map $f$ and $g$, we have the 
\begin{align}
I(X;Y) \geq I(f(X);g(Y)).
\end{align}
When $f$ and $g$ are one-to-one with measurable inverse maps, it holds that $I(X;Y) = I(f(X);g(Y))$.

To characterize the dependence structure between $f(X)$ and $g(Y)$, the conditional distribution $P_{f(X)\mid g(X)}$ is often of interest, but the estimation is usually not tractable in practice. As an alternative, consider a hypothesized conditional distribution $Q_{f(X) \mid g(Y),\tau}$ with density function $q_{f(X) \mid g(Y),\tau}$, with which we have
\begin{align}
I(f(X);g(Y)) &= \EE_P\left[\log \frac{p_{f(X),g(Y)}(U,V)}{p_{f(X)}(U) p_{g(Y)}(V)}\right]\\
& = \EE_P \left[\log\frac{p_{f(X)\mid g(Y)}(U\mid V) q_{f(X) \mid g(Y),\tau}(U \mid V)}{p_{f(X)}(U) q_{f(X) \mid g(Y),\tau}(U \mid V)}\right]\\
& = \EE_P\left[\log \frac{q_{f(X) \mid g(Y),\tau}(U \mid V)}{p_{f(X)}(U)} \right]+\EE_{P_{g(Y)}}\left[D_{\rm KL}\left(P_{f(X) \mid g(Y)} ~||~ Q_{f(X) \mid g(Y),\tau}\right)\right].
\end{align}
Similarly, if we have a hypothesized conditional distribution $\tilde Q_{g(Y) \mid f(X),\tau}$ with density function $\tilde q_{g(Y) \mid f(X), \tau}$, we can symmetrize the foregoing argument to obtain
\begin{align}
I(f(X);g(Y)) = & \frac{1}{2} \EE_P\left[\log \frac{q_{f(X) \mid g(Y),\tau}(U \mid V)}{p_{f(X)}(U)} \right]+\frac{1}{2}\EE_{P_{g(Y)}}\left[D_{\rm KL}\left(P_{f(X) \mid g(Y)} ~||~ Q_{f(X) \mid g(Y),\tau}\right)\right]\\
& \quad + \frac{1}{2} \EE_P\left[\log \frac{\tilde q_{g(Y) \mid f(X)}(V \mid U)}{p_{g(Y)}(V)} \right]+\frac{1}{2} \EE_{P_{f(X)}}\left[D_{\rm KL}\left(P_{g(Y) \mid f(X)} ~||~ \tilde Q_{g(Y) \mid f(X),\tau}\right)\right].
\end{align}
We consider the following families of conditional distributions:
\begin{align}\label{eq:hypo-law}
\calH_{f \mid g} = \bigg\{q_{f(X) \mid g(Y),\tau}(u \mid v) & = p_{f(X)}(u)\cdot\frac{e^{\sigma(u,v)/\tau}}{\EE_{P_{g(Y)}}e^{\sigma(u,V)/\tau}}\bigg\},\\
\calH_{g \mid f} = \bigg\{\tilde q_{g(Y) \mid f(X), \tau}(v \mid u) &= p_{g(Y)}(v)\cdot\frac{e^{\sigma(u,v)/\tau}}{\EE_{P_{f(X)}}e^{\sigma(U,v)/\tau}}\bigg\},
\end{align}
where $\sigma(U,V)$ is a similarity measure between random vectors $U$ and $V$.
Then, in this case, for any $q_{f(X) \mid g(Y),\tau}(u \mid v) \in \calH_{f \mid g}$ and $\tilde q_{g(Y) \mid f(X), \tau}(v \mid u) \in \calH_{g \mid f}$, one can verify that
\begin{align}
\EE_P\left[\log \frac{q_{f(X) \mid g(Y),\tau}(U \mid V)}{p_{f(X)}(U)} \right] & = \frac{1}{\tau} \EE_P\left\{\sigma(f(X), g(Y)) \right\} - \EE_X\left\{\log \EE_{\tilde Y}\left[\exp\left(\frac{\sigma( f(X), g(\tilde Y))}{\tau}\right)\right]\right\},\\
\EE_P\left[\log \frac{\tilde q_{g(Y) \mid f(X)}(V \mid U)}{p_{g(Y)}(V)} \right] &= \frac{1}{\tau} \EE_P\left\{\sigma(f(X), g(Y)) \right\} - \EE_{\tilde Y}\left\{\log \EE_X\left[\exp\left(\frac{\sigma(f(X), g(\tilde Y))}{\tau}\right)\right]\right\}.
\end{align}
In addition, if we define the joint distributions 
\begin{align}\label{eq:smooth-Q}
Q_{f(X),g(Y),\tau} = Q_{f(X) \mid g(Y),\tau} \otimes P_{g(Y)}\quad \textnormal{and} \quad \tilde Q_{f(X),g(Y),\tau} = \tilde Q_{g(Y) \mid f(X),\tau} \otimes P_{f(X)},
\end{align}
we obtain the variational form of mutual information as follows:
\begin{align}\label{eq:infonce-mi}
I(f(X);g(Y)) & = -\frac{1}{2} \calL(f,g,\tau)\\
& \qquad\qquad + \frac{1}{2}D_{\rm KL}\left(P_{f(X), g(Y)} ~||~ Q_{f(X),g(Y),\tau}\right) + \frac{1}{2} D_{\rm KL}\left(P_{f(X), g(Y)} ~||~ \tilde Q_{f(X),g(Y),\tau}\right)\\
& \eqqcolon -\frac{1}{2} \calL(f,g,\tau) + \Delta(P;Q_{f(X), g(Y), \tau},\tilde Q_{f(X), g(Y), \tau}).
\end{align}
Hence, minimizing $\calL(f,g,\tau)$ is equivalent to the following optimization problem:
\begin{align}\label{eq:opt-mutual}
\min_{f,g,\tau}\qquad&\bigg\{ - I(f(X);g(Y)) + \Delta(P;Q_{f(X), g(Y), \tau},\tilde Q_{f(X), g(Y), \tau}) \bigg\}\\
\textnormal{subject to}\qquad&\mathsf{d} Q_{f(X) \mid g(Y),\tau} \in \calH_{f \mid g}, \;\mathsf{d} \tilde Q_{g(Y) \mid f(X),\tau} \in \calH_{g \mid f}. 
\end{align}

\subsection{Properties of (approximate) minimizers}\label{sec:property-mini}

Since temperature $\tau$ is also optimized, to analyze the solution path with varying $\tau$, for any $\vareps > 0$, we consider the set
\begin{align}\label{eq:O-eps-set}
O_{\calL,\vareps,\eta}(\calH) = \left\{(f,g) \in \calH:\;\exists\;\tau \geq \vareps \textnormal{ such that } \limsup_{M \rightarrow +\infty}\;\big(\calL(f_M,g_M,\tau) +2I^*_M(\calH)\big) \leq 2\eta\right\}.
\end{align}
Note that we adopt the natural order $\leq$ on $\RR$ since with discretization, it always holds that $I^*_M(\calH) < +\infty$.
Recall that 
\begin{align}
O_{\calL,\eta}(\calH) = \left\{(f,g) \in \calH:\;\exists\;\tau \geq \vareps(\eta) \textnormal{ such that } \limsup_{M \rightarrow +\infty} \big(\calL(f_M,g_M,\tau)  +2I^*_M(\calH)\big) \leq 2\eta\right\},
\end{align}
where we define $\vareps(\eta)$ as follows. Denote the set of nondecreasing univariate functions by $\calC$ and in addition, the subset
\begin{align}\label{eq:C-star}
    \calC^* = \left\{\omega(\cdot) \in \calC:\;\bigcap_{\eta \geq 0} O_{\calL,\omega(\eta),\eta}(\calH) \neq \varnothing,\;\lim_{\eta \rightarrow 0+} \;\omega(\eta) = 0\right\}.
\end{align}
The set $\calC^*$ is shown to be nonempty when $\calV(\calH) \neq \varnothing$ in Lemma~\ref{lem:exist}.
We define $\omega_1(\cdot) \preceq \omega_2(\cdot)$ if for any $\eta \geq 0$, it holds that $\omega_1(\eta) \leq \omega_2(\eta)$. Then, we choose $\vareps(\cdot)$ to be any item in $\calC^*$ such that 
\begin{align}\label{eq:eps-eta}
    \left\{\omega \in \calC^*:\;\vareps(\cdot) \preceq \omega(\cdot)\right\} = \varnothing.
\end{align}
Note that if there exists a global minimizer $(f,g)$ that minimizes $\calL$ for any $\tau \geq 0$, we can simply define
\begin{align}
\vareps(\eta) = \sup\bigg\{\vareps \geq 0:\;\calO_{\calL,\eta,\vareps}(\calH) \neq \varnothing\bigg\}.
\end{align}

To start with, we have the following lemma.
\begin{lemma}\label{lem:exist}
Assume $\calV(\calH) \neq \varnothing$. Then, it holds that
\begin{align}
    \bigcap_{\eta \geq 0} \calO_{\calL,\eta}(\calH) \neq \varnothing.
\end{align}
In addition, $\liminf_{\eta \rightarrow 0}\;\vareps(\eta) = 0$.
\end{lemma}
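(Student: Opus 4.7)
The plan is to produce an explicit witness $\omega \in \calC^*$; this alone suffices because once $\calC^*$ is nonempty, the definition of $\vareps$ as a maximal element (equation~\eqref{eq:eps-eta}) forces $\vareps \in \calC^*$, from which both conclusions follow: $\lim_{\eta \to 0+} \vareps(\eta) = 0$ directly gives $\liminf_{\eta \to 0+}\vareps(\eta) = 0$, and the nonemptiness of $\bigcap_{\eta\geq 0} O_{\calL, \vareps(\eta), \eta}(\calH) = \bigcap_{\eta\geq 0}\calO_{\calL, \eta}(\calH)$ is built into the definition of membership in $\calC^*$.

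To produce $\omega$, I fix any $(f^*, g^*) \in \calV(\calH)$, normalized so $\EE\|f^*(X)\| = \EE\|g^*(Y)\| = 1$. Applying the decomposition of Lemma~\ref{lem:decomp} at each discretization level $M$ yields the identity
\begin{align}
\calL(f_M^*, g_M^*, \tau) + 2I^*_M(\calH) = 2\bigl(I^*_M(\calH) - I(f_M^*(X); g_M^*(Y))\bigr) + D_M(\tau) + \tilde D_M(\tau),
\end{align}
where $D_M(\tau), \tilde D_M(\tau) \geq 0$ are the KL divergences from $P_{f_M^*(X), g_M^*(Y)}$ to its two smoothed versions $Q^M_\tau, \tilde Q^M_\tau$. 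Because $(f^*, g^*) \in \calW(\calH)$ and $I(f_M^*; g_M^*) \leq I^*_M(\calH)$ by definition, the bracket tends to zero as $M \to \infty$, so the task reduces to finding a function $B$ with $B(\tau) \to 0$ as $\tau \to 0+$ and
\begin{align}
\limsup_{M \to \infty} \bigl(D_M(\tau) + \tilde D_M(\tau)\bigr) \leq B(\tau).
\end{align}
Granted such a $B$, pick $\tau(\eta) > 0$ with $B(\tau(\eta)) \leq 2\eta$ and let $\omega(\eta)$ be the nondecreasing envelope of $\tau(\cdot)$ with $\omega(0) = 0$; then $(f^*, g^*) \in O_{\calL, \omega(\eta), \eta}(\calH)$ for every $\eta \geq 0$, and $\omega \in \calC^*$.

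The construction of $B$ is where the work lies. Alignment forces the positive-pair similarity to equal $m_\sigma = 1$ almost surely and $f_M^*(X) = g_M^*(Y)$ after discretization, so the softmax conditional law defining $Q^M_\tau$ concentrates on the diagonal as $\tau$ shrinks. Carrying out the softmax calculation explicitly gives
\begin{align}
D_M(\tau) = \sum_j p^M_j \log\!\Bigl(1 + (p^M_j)^{-1}\sum_{k \neq j} p^M_k\, e^{(\sigma(z^M_k, z^M_j) - 1)/\tau}\Bigr),
\end{align}
with an analogous expression for $\tilde D_M$. For each fixed $M$ the off-diagonal terms decay exponentially as $\tau \to 0$; the hard part is making this uniform in $M$, since refining the partition produces neighboring representatives $z^M_k, z^M_j$ with $\sigma(z^M_k, z^M_j)$ arbitrarily close to $1$ while the marginal weights $p^M_j$ shrink. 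My intended attack is to pass to a continuum-limit representation of the softmax integrand via Lemmas~\ref{lem:approx-loss} and~\ref{lem:kl}, and then apply a Laplace-type expansion that balances bin diameter against temperature, exploiting the compactness of $\tB$, the continuity of $\sigma$, and the regularity of the law of $f^*(X)$ on the unit sphere. This uniform-in-$M$ control is the principal obstacle of the proof.
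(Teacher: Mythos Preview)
Your overall plan is correct and follows the same skeleton as the paper: fix $(f^*,g^*)\in\calV(\calH)$, apply the decomposition of Lemma~\ref{lem:decomp} at each discretization level, use $(f^*,g^*)\in\calW(\calH)$ to kill the mutual-information gap, and then control the KL remainder as $\tau\to 0^+$. Your observation that both conclusions of the lemma follow immediately from $\calC^*\neq\varnothing$ (since $\vareps$ is by definition an element of $\calC^*$) is in fact cleaner than the paper, which proves $\liminf_{\eta\to 0}\vareps(\eta)=0$ by a separate contradiction argument bounding $I^*_M(\calH)\le\Omega/\underline\tau$.

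Where you diverge is in the treatment of $D_M(\tau)+\tilde D_M(\tau)$, and here you are making the problem harder than it is. You frame the obstacle as needing \emph{uniform-in-$M$} control of the off-diagonal softmax decay, and propose a Laplace expansion ``balancing bin diameter against temperature.'' But no such balancing is required: the target is the \emph{iterated} limit $\lim_{\tau\to 0^+}\limsup_{M\to\infty}$, so you may take $M\to\infty$ first at fixed $\tau$. The paper does exactly this: since $Q^M_\tau$ and $\tilde Q^M_\tau$ are (up to the choice of representatives) the discretizations of the continuum smoothed laws $Q_\tau,\tilde Q_\tau$, Lemma~\ref{lem:kl} gives $\lim_{M\to\infty}D_M(\tau)=D_{\rm KL}(P_{f^*(X),g^*(Y)}\,\|\,Q_{f^*(X),g^*(Y),\tau})$ directly. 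One then computes this continuum KL explicitly using alignment---the joint law is supported on the diagonal $\{u=v\}$ with $\sigma(u,u)=1$---obtaining
\[
D_{\rm KL}(P\,\|\,Q_\tau)=\int p_{f^*(X)}(u)\log\frac{\int p_{f^*(X)}(\tilde v)\,e^{\langle u,\tilde v\rangle/\tau}\,\mathsf d\tilde v}{p_{f^*(X)}(u)\,e^{1/\tau}}\,\mathsf d u,
\]
which tends to $0$ as $\tau\to 0^+$ by a Laplace-type concentration argument at the \emph{continuum} level only. Thus your function $B(\tau)$ is simply the continuum KL, and the paper's witness in $\calC^*$ is the constant function $\omega\equiv 0$. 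Your proposed discrete Laplace analysis would work but is unnecessary machinery.
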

\noindent See proof in Section~\ref{sec:proof-exist}.

Then, we are ready to present the main result on the properties of $\cap_{\eta \geq 0} \calO_{\calL,\eta}(\calH)$.
For a general similarity measure $\sigma(U,V)$ and any $(f,g) \in \cap_{\eta \geq 0} \calO_{\calL,\eta}(\calH)$, we have the following results.
\begin{tcolorbox}[colback=white]
\begin{lemma}\label{lem:prop}
Assume $(f,g) \in \cap_{\eta \geq 0} \calO_{\calL,\eta}(\calH)$ and $m_{\sigma}(f,g) < \infty$. Then, the following properties hold.
\begin{enumerate}
    \item[(1)] \textbf{Maximal mutual information}. $(f,g) \in \calW(\calH)$. 
    \item[(2)] \textbf{Maximal similarity}. $\sigma(f(X), g(Y)) = m_{\sigma}(f,g)$ almost surely.
    \item[(3)] \textbf{Monotonicity in $\tau$}. $\Delta(P;Q_{f(X), g(Y), \tau},\tilde Q_{f(X), g(Y), \tau})$ is increasing in $\tau$.
\end{enumerate}
\end{lemma}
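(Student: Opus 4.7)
The plan is to anchor the argument on the variational decomposition of Lemma~\ref{lem:decomp}. Applied to the discretized pair $(f_M, g_M)$ and rearranged, it gives
\begin{align*}
\calL(f_M, g_M, \tau) + 2I_M^*(\calH) = 2\bigl(I_M^*(\calH) - I(f_M(X); g_M(Y))\bigr) + D_{\mathrm{KL}}(P^M \,\|\, Q^M_\tau) + D_{\mathrm{KL}}(P^M \,\|\, \tilde Q^M_\tau),
\end{align*}
where all three summands on the right are non-negative: the first by the definition of $I_M^*(\calH)$ as a supremum over $\calH$, the other two by non-negativity of relative entropy. For any $(f,g) \in \bigcap_{\eta \geq 0}\calO_{\calL,\eta}(\calH)$ and every $\eta \geq 0$, the defining condition supplies a temperature $\tau_\eta \geq \vareps(\eta)$ for which the limsup over $M$ of the left-hand side is at most $2\eta$; hence the limsup of each of the three non-negative summands is also at most $2\eta$.

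Part~(1) is immediate by taking $\eta = 0$ in the first summand: $\liminf_M \bigl(I(f_M(X); g_M(Y)) - I_M^*(\calH)\bigr) \geq 0$, i.e., $(f,g) \in \calW(\calH)$. For Part~(2), I combine the KL bounds with Lemma~\ref{lem:kl} to get $D_{\mathrm{KL}}(P_{f,g}\,\|\,Q_{f,g,\tau_\eta}) \leq 2\eta$ and $D_{\mathrm{KL}}(P_{f,g}\,\|\,\tilde Q_{f,g,\tau_\eta}) \leq 2\eta$. Using $\vareps \in \calC^*$ with $\lim_{\eta\to 0^+}\vareps(\eta)=0$ (guaranteed by Lemma~\ref{lem:exist}), I extract a sequence $\eta_n \to 0$ with $\tau_n := \tau_{\eta_n} \to 0$. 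Inserting the explicit form
\begin{align*}
D_{\mathrm{KL}}(P_{f,g}\,\|\,Q_{f,g,\tau}) = I(f(X);g(Y)) - \tau^{-1}\,\EE_P[\sigma(f(X),g(Y))] + \EE[\log Z_\tau(g(Y))]
\end{align*}
together with the Laplace-type asymptotic $\tau \log Z_\tau(v) \to m_\sigma(v)$ as $\tau \to 0^+$, vanishing of the KL divergence along $\tau_n$ forces $\EE[m_\sigma(g(Y)) - \sigma(f(X),g(Y))] = 0$. Since $\sigma(f(X),g(Y)) \leq m_\sigma(g(Y))$ almost surely, we conclude $\sigma(f(X),g(Y)) = m_\sigma(g(Y))$ a.s.; the symmetric argument with $\tilde Q$ gives $\sigma(f(X),g(Y)) = m_\sigma(f(X))$ a.s., and the two together deliver $\sigma(f(X),g(Y)) = m_\sigma(f,g)$ a.s.

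With Part~(2) in hand, Part~(3) is a direct computation. Differentiating in $\tau$,
\begin{align*}
\partial_\tau D_{\mathrm{KL}}(P_{f,g}\,\|\,Q_{f,g,\tau}) = \tau^{-2}\bigl(\EE_P[\sigma(f(X),g(Y))] - \EE_{Q_{f,g,\tau}}[\sigma(U,V)]\bigr) \geq 0,
\end{align*}
because the first expectation equals $m_\sigma(f,g)$ by Part~(2) while the second is at most $m_\sigma(f,g)$ by definition of the essential supremum. The same calculation for $\tilde Q_{f,g,\tau}$ gives the claimed monotonicity of $\Delta$ in $\tau$.

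The main obstacle I foresee lies in Part~(2). A priori the freedom $\tau_\eta \geq \vareps(\eta)$ only bounds $\tau_\eta$ from below, and nothing prevents it from staying away from zero along some minimizing subsequence; ruling this out requires exploiting the maximality of $\vareps$ inside $\calC^*$ and, if the infimum of $\calL(f,g,\cdot) + 2I^*(\calH)$ were attained at some $\tau_\star > 0$, deriving a contradiction by comparing the resulting $P_{f,g} = Q_{f,g,\tau_\star}$ with a reference $(f^\star, g^\star) \in \calV(\calH)$. The Laplace asymptotic for $\tau \log Z_\tau(v)$ itself will need an envelope bound via $m_\sigma(f,g) < \infty$ and dominated convergence to justify passing the limit inside the outer expectation over $g(Y)$.
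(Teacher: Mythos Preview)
Your overall plan is the paper's plan: use Lemma~\ref{lem:decomp} at the discretized level, split into a mutual-information defect and a KL defect, both non-negative, and drive each to zero along $\eta\to 0$. Parts~(1) and~(3) are carried out exactly as in the paper, and your treatment of the $\tau_\eta\to 0$ issue in the final paragraph is in the same spirit as the paper's argument (the paper simply sets $\tau_j=\vareps(\eta_j)$ and invokes Lemma~\ref{lem:exist}).

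There is, however, a real gap in your argument for Part~(2). You write that Lemma~\ref{lem:kl} upgrades the discretized bound to $D_{\mathrm{KL}}(P_{f,g}\,\|\,Q_{f,g,\tau_\eta})\le 2\eta$, and then expand this via the identity
\[
D_{\mathrm{KL}}(P_{f,g}\,\|\,Q_{f,g,\tau}) \;=\; I\bigl(f(X);g(Y)\bigr)\;-\;\tau^{-1}\,\EE_P[\sigma]\;+\;\EE[\log Z_\tau(g(Y))].
\]
Both steps fail. First, Lemma~\ref{lem:kl} transfers $D_{\mathrm{KL}}(P^M\,\|\,(Q_\tau)_M)\to D_{\mathrm{KL}}(P\,\|\,Q_\tau)$, but the object you control is $D_{\mathrm{KL}}(P^M\,\|\,Q^M_\tau)$ with $Q^M_\tau=Q_{f_M(X),g_M(Y),\tau}$; this is the smoothed distribution built from the discretized representations, not the discretization of $Q_\tau$, so the lemma does not apply. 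Second, and more decisively, under the standing hypothesis $\calV(\calH)\neq\varnothing$ (needed even to define $\vareps(\cdot)$) any $(f,g)\in\calW(\calH)$ has $I(f(X);g(Y))=+\infty$: an aligned $(f^*,g^*)\in\calV(\calH)$ forces $I_M^*(\calH)\ge I(f_M^*;g_M^*)=H(f_M^*)\to+\infty$, and Part~(1) then gives $I(f_M;g_M)\to+\infty$, hence $I(f;g)=+\infty$ by monotone approximation of mutual information. Since the two remaining terms in your identity are bounded for each fixed $\tau>0$, the left side is $+\infty$, contradicting the bound $\le 2\eta$ you need. The Laplace asymptotic for $\tau\log Z_\tau$ therefore cannot be combined with the continuous KL in the way you propose.

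The paper avoids this by never leaving the discretized level in Part~(2): from $\limsup_M \Delta(P;Q^M_{\tau_j},\tilde Q^M_{\tau_j})\le\eta_j$ with $\tau_j\to 0$ it interchanges the two limits (Moore--Osgood) to obtain $\limsup_M \Delta(P;Q^M_{0},\tilde Q^M_{0})=0$, and reads off directly that the joint law is supported on $\{\sigma(u,v)=m_\sigma(f,g)\}$. Your Laplace heuristic is morally the same identification of the $\tau\to 0$ limit, but to make it rigorous you must run it at level $M$ (where $I(f_M;g_M)\le\log M<\infty$) and then handle the double limit $(M,\tau)$ carefully, rather than first passing to the continuous KL.
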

\end{tcolorbox}

\subsubsection{Proof of Lemma~\ref{lem:exist}}\label{sec:proof-exist}
\paragraph{Nonemptiness.}
To see the nonemptiness, it suffices to show that for any $(f^*,g^*) \in \calV(\calH)$, 
\begin{align}
    0 & = \lim_{\tau \rightarrow 0+}\bigg\{\limsup_{M \rightarrow +\infty} \left(\calL(f^*_M,g^*_M,\tau) + 2I^*_M(\calH)\right)\bigg\}\\
    & = \lim_{\tau \rightarrow 0+}\bigg\{\limsup_{M \rightarrow +\infty} \left(2I^*_M(\calH)- 2I(f^*_M(X);g^*_M(Y)) + 2\Delta(P;Q_{f^*_M(X), g^*_M(Y), \tau},\tilde Q_{f^*_M(X), g^*_M(Y), \tau})\right)\bigg\}.
\end{align}
Without loss of generality, assume $\EE\|f^*(X)\| = \EE\|g^*(Y)\| = 1$.
By definition, since $(f^*,g^*) \in \calV(\calH)$, we have
\begin{align}
    \liminf_{M \rightarrow +\infty}\;\big(I(f^*_M(X);g^*_M(Y)) - I^*_M(\calH)\big) = \limsup_{M \rightarrow +\infty}\;\big(I^*_M(\calH) - I(f^*_M(X);g^*_M(Y))\big) = 0,
\end{align}
which implies that, for any $\tau \geq 0$,
\begin{align}
    \limsup_{M \rightarrow +\infty} \left(\calL(f^*_M,g^*_M,\tau) + 2I^*_M(\calH)\right) \leq 2 \limsup_{M \rightarrow +\infty} \Delta(P;Q_{f^*_M(X), g^*_M(Y), \tau},\tilde Q_{f^*_M(X), g^*_M(Y), \tau}).
\end{align}
As $(f^*,g^*) \in \calA(\calH)$, i.e., $f^*(X) = g^*(Y)$ almost surely, we further have $p_{f^*(X)}(u) = p_{g^*(Y)}(u)$ and
\begin{align}
    p_{f^*(X),g^*(Y)}(u,v) & = p_{f^*(X)}(u)\mathbf{1}_{u=v},
\end{align}
which is also true after discretization, i.e., $f^*_M(X) = g^*_M(Y)$ almost surely and
\begin{align}
    p_{f_M^*(X),g_M^*(Y)}(u,v) & = p_{f_M^*(X)}(u)\mathbf{1}_{u=v}.
\end{align}
In addition, for any $M \in \calM$,
\begin{align}
& D_{\rm KL}(P_{f_M^*(X), g_M^*(Y)} ~||~ Q_{f_M^*(X), g_M^*(Y),\tau})\\
& = \iint\limits_{\rB \times \rB} p_{f_M^*(X),g_M^*(Y)}(u,v) \log p_{f_M^*(X),g_M^*(Y)}(u,v)\mathsf{d}u \mathsf{d}v\\
& \qquad\qquad - \iint_{\rB \times \rB} p_{f_M^*(X),g_M^*(Y)}(u,v) \log\frac{e^{1/\tau}(p_{f_M^*(X)}(u))^2 }{\int_{\rB} p_{f_M^*(X)}(\tilde v) e^{\langle u, \tilde v \rangle/\tau} \mathsf{d} \tilde v}\mathsf{d}u \mathsf{d}v\\
& = -\frac{1}{\tau} + H(f_M^*(X))  + \int_{\rB} p_{f_M^*(X)}(u) \log\left(\int_{\rB} p_{f_M^*(X)}(\tilde v) e^{\langle u, \tilde v \rangle/\tau} \mathsf{d} \tilde v\right)\mathsf{d}u\\
& = \int_{\rB} p_{f_M^*(X)}(u) \log\frac{\int_{\rB} p_{f_M^*(X)}(\tilde v) e^{\langle u, \tilde v \rangle/\tau} \mathsf{d} \tilde v}{p_{f_M^*(X)}(u) e^{1/\tau}}\mathsf{d}u.
\end{align}
Then, by Lemma~\ref{lem:kl}, we have
\begin{align}
    \lim_{M \rightarrow +\infty}\;D_{\rm KL}(P_{f_M^*(X), g_M^*(Y)} ~||~ Q_{f_M^*(X), g_M^*(Y),\tau}) = D_{\rm KL}(P_{f^*(X), g^*(Y)} ~||~ Q_{f^*(X), g^*(Y),\tau}),
\end{align}
for which, it holds that
\begin{align}
    \lim_{\tau \rightarrow 0+}\;D_{\rm KL}(P_{f^*(X), g^*(Y)} ~||~ Q_{f^*(X), g^*(Y),\tau}) & = \lim_{\tau \rightarrow 0+}\;\left\{\int_{\rB} p_{f^*(X)}(u) \log\frac{\int_{\rB} p_{f^*(X)}(\tilde v) e^{\langle u, \tilde v \rangle/\tau} \mathsf{d} \tilde v}{p_{f^*(X)}(u) e^{1/\tau}}\mathsf{d}u\right\}\\
    & = \int_{\rB} p_{f^*(X)}(u) \log\frac{p_{f^*(X)}(u)}{p_{f^*(X)}(u)}\mathsf{d}u = 0.
\end{align}
Hence, by the symmetry of $\Delta(P;Q_{f^*_M(X), g^*_M(Y), \tau},\tilde Q_{f^*_M(X), g^*_M(Y), \tau})$ in $f$ and $g$, we obtain
\begin{align}
    \lim_{\tau \rightarrow 0+}\bigg\{\limsup_{M \rightarrow +\infty} \Delta(P;Q_{f^*_M(X), g^*_M(Y), \tau},\tilde Q_{f^*_M(X), g^*_M(Y), \tau})\bigg\} = 0.
\end{align}
Then, the constant function $\omega(\eta) = 0$ is in the set $\calC^*$ defined in \eqref{eq:C-star}, indicating that $\calC^* \neq \varnothing$, thus $\cap_{\eta \geq 0} \calO_{\calL,\eta}(\calH) \neq \varnothing$.

\paragraph{Zero limit infimum.}
Suppose $\liminf_{\eta \rightarrow 0} \;\vareps(\eta) = \underline{\tau} > 0$. 
Then, for any $(f,g) \in \calH$, it holds that
\begin{align}
    \limsup_{M \rightarrow +\infty} \left(\calL(f_M,g_M,\underline{\tau}) + 2I^*_M(\calH)\right) \leq 0.
\end{align}
In addition, as we have shown in Lemma~\ref{lem:decomp}, it holds that
\begin{align}
    \calL(f_M,g_M,\tau) + 2I^*_M(\calH) \geq -2I(f_M(X);g_M(Y))+ 2I^*_M(\calH) \geq 0,
\end{align}
which further implies that
\begin{align}
    0 \leq \liminf_{M \rightarrow +\infty} \left(\calL(f_M,g_M,\underline{\tau}) + 2I^*_M(\calH)\right) \leq \limsup_{M \rightarrow +\infty} \left(\calL(f_M,g_M,\underline{\tau}) + 2I^*_M(\calH)\right) \leq 0.
\end{align}
Hence, $\lim_{M \rightarrow +\infty} (\calL(f^*_M,g^*_M,\underline{\tau}) + 2I^*_M(\calH)) = 0$. Moreover, as 
\begin{align}
    \calL(f_M,g_M,\underline{\tau}) + 2I^*_M(\calH) \geq \Delta(P;Q_{f_M(X), g_M(Y), \underline{\tau}},\tilde Q_{f_M(X), g_M(Y), \underline{\tau}}) \geq 0,
\end{align}
we further obtain
\begin{align}
\lim_{M \rightarrow +\infty} \Delta(P;Q_{f_M(X), g_M(Y), \underline{\tau}},\tilde Q_{f_M(X), g_M(Y), \underline{\tau}}) = 0.
\end{align}
In this case, the joint distribution of $(f(X),g(Y))$ takes the form
\begin{align}
p_{f(X),g(Y)}(u,v) = \frac{\exp\left(\frac{\sigma(u, v)}{\underline{\tau}}\right)p_{f(X)}(u) p_{g(Y)}(v)}{\EE_{P_{f(X)} \otimes P_{g(Y)}}\exp\left(\frac{\sigma(f(X), g(Y))}{\underline{\tau}}\right)},
\end{align}
thus, the mutual information satisfies 
\begin{align}
I(f(X);g(Y)) & = \iint p_{f(X),g(Y)}(u,v) \log\frac{p_{f(X),g(Y)}(u,v)}{p_{f(X)}(u) p_{g(Y)}(v)} \mathsf{d}u \mathsf{d}v\\
& \leq \iint \frac{\sigma( u,v )}{\underline{\tau}} \cdot \frac{\exp\left(\frac{\sigma(u, v)}{\underline{\tau}}\right)p_{f(X)}(u) p_{g(Y)}(v)}{\EE_{P_{f(X)} \otimes P_{g(Y)}}\exp\left(\frac{\sigma(f(X), g(Y))}{\underline{\tau}}\right)}\mathsf{d}u \mathsf{d}v\\
& \qquad\qquad - \log \EE_{P_{f(X)} \otimes P_{g(Y)}}\exp\left(\frac{\sigma(f(X), g(Y))}{\underline{\tau}}\right)\\
& \leq \frac{\Omega}{\underline{\tau}} < \infty.
\end{align}
Hence, $\limsup_{M \rightarrow +\infty} I(f_M(X);g_M(Y)) < \infty$ for any $(f,g) \in \calH$.
Then, we obtain
\begin{align}\label{eq:MI-finitite}
\limsup_{M \rightarrow +\infty}\;I^*_M(\calH) = \limsup_{M \rightarrow +\infty}\; \sup_{(f,g) \in \calH}I(f_M(X);g_M(Y)) \leq \frac{\Omega}{\underline{\tau}} < \infty.
\end{align}
In addition, since $\calV(\calH) \neq \varnothing$, there exist $(f^*,g^*)$ such that $f^*(X)/\EE\|f^*(X)\| = g^*(Y)/\EE\|g^*(Y)\|$ almost surely, thus
\begin{align}
    \limsup_{M \rightarrow +\infty} I^*_M(\calH) \geq \limsup_{M \rightarrow +\infty} I(f^*_M(X); g^*_M(X)) = + \infty,
\end{align}
which draws the contradiction to \eqref{eq:MI-finitite}.

\subsubsection{Proof of Lemma~\ref{lem:prop} (1) and (2)}\label{sec:proof-prop}
Assume $(f,g) \in \cap_{\eta \geq 0} \calO_{\calL,\eta}(\calH)$. There exists a decreasing sequence $\{\eta_j\}_{j \in \NN}$ such that $\eta_j \rightarrow 0$ as $j \rightarrow +\infty$. Accordingly, there is an associated sequence $\tau_j = \vareps(\eta_j)$, which satisfies $\liminf_{j \rightarrow +\infty}\;\tau_j = 0$ as is shown in Lemma~\ref{lem:exist}. 
By the decomposition in Lemma~\ref{lem:decomp} and the definition of $\calO_{\calL,\eta}(\calH)$, for any $j \in \NN$, it holds that
\begin{align}
\limsup_{M \rightarrow +\infty} \left\{I^*_M(\calH)- I(f_M(X);g_M(Y)) + \Delta(P;Q_{f_M(X), g_M(Y), \tau_j},\tilde Q_{f_M(X), g_M(Y), \tau_j})\right\} \leq  \eta_j.
\end{align}
Since $\liminf_{j \rightarrow +\infty}\;\tau_j = 0$, there is a convergent subsequence $\{\tau_{j_l}\}_{l \in \NN}$ with $\lim_{l \rightarrow +\infty} \tau_{j_l} = 0$. Without loss of generality, we assume $\{\tau_j\}_{j \in \NN}$ is convergent such that $\lim_{j \rightarrow +\infty} \tau_j = 0$.

In addition, we have, for any $M \in \calM$ and $j \in \NN$,
\begin{align}
    I^*_M(\calH)- I(f_M(X);g_M(Y)) \geq 0, \qquad \Delta(P;Q_{f_M(X), g_M(Y), \tau_j},\tilde Q_{f_M(X), g_M(Y), \tau_j}) \geq 0,
\end{align}
which implies that
\begin{align}
    & \max\bigg\{\limsup_{M \rightarrow +\infty} \left\{I^*_M(\calH)- I(f_M(X);g_M(Y))\right\}, \limsup_{M \rightarrow +\infty} \left\{\Delta(P;Q_{f_M(X), g_M(Y), \tau_j},\tilde Q_{f_M(X), g_M(Y), \tau_j})\right\} \bigg\}\\ &\leq \limsup_{M \rightarrow +\infty} \left\{I^*_M(\calH)- I(f_M(X);g_M(Y)) + \Delta(P;Q_{f_M(X), g_M(Y), \tau_j},\tilde Q_{f_M(X), g_M(Y), \tau_j})\right\} \leq \eta_j.
\end{align}
Then, by Moore-Osgood theorem \citep{osgood1907lehrbuch,graves2012theory}, we can exchange the order of limit operators and obtain 
\begin{align}
\begin{cases}
& \limsup_{M \rightarrow +\infty}\; \Delta(P;Q_{f_M(X), g_M(Y), 0},\tilde Q_{f_M(X), g_M(Y), 0}) = 0,\\
& \liminf_{M \rightarrow +\infty}\;\big(I(f_M(X);g_M(Y)) - I^*_M(\calH)\big) = 0.
\end{cases}
\end{align}
Denote 
$m = m_{\sigma}(f,g) = \esssup\;\sigma(f(X),g(\tilde Y))$, and $\calE_m = \{(u,v) \in R(f) \times R(g): \;\sigma(u,v) = m \}$, $\calA^m(u) = \{v \in R(g): \;\sigma(u,v) = m\}$, $\calB^m(v) = \{u \in R(f):\;\sigma(u,v) = m\}$.
As $\vareps \rightarrow 0$, to ensure $\limsup_{M \rightarrow +\infty}\; \Delta(P^M;Q^M_{\tilde \vareps},\tilde Q^M_{\tilde \vareps}) = 0$, we have 
\begin{align}\label{eq:p-m}
p_{f(X),g(Y)}(u,v) & = p_{f(X)}(u)p_{g(Y)}(v) \frac{\mathbf{1}_{\sigma(u,v) = m}}{\EE_{g(Y)}[\mathbf{1}_{\{V:\sigma(u,V) = m\}}]}\\ &= p_{f(X)}(u)p_{g(Y)}(v) \frac{\mathbf{1}_{\sigma(u,v) = m}}{\mathbb{E}_{f(X)}[\mathbf{1}_{\{U:\sigma(U,v) = m\}}]},
\end{align}
which indicates that
\begin{align}
    \EE_{g(Y)}[\mathbf{1}_{\{V:\sigma(u,V) = m\}}] & = \mathbb{E}_{f(X)}[\mathbf{1}_{\{U:\sigma(U,v) = m\}}]\\ & = \iint_{\calE_m} p_{f(X)}(u)p_{g(Y)}(v)  \mathsf{d}u \mathsf{d}v \coloneqq A_m.
\end{align}
Here $\mathbf{1}_A$ is the indicator function with respect to the set $A$ and if $A = \{a\}$, we define $\EE_{f(X)}[\mathbf{1}_{A}] = p_{f(X)}(a)$.
In this case, $\sigma( f(X), g(Y) ) = m$ almost surely, which completes the proof.

\subsubsection{Proof of Lemma~\ref{lem:prop} (3)}\label{sec:proof-tau}

Recall the definition
\begin{align}
& \Delta(P;Q_{f(X), g(Y), \tau},\tilde Q_{f(X), g(Y), \tau}) \\& =\frac{1}{2}D_{\rm KL}\left(P_{f(X), g(Y)} ~||~ Q_{f(X),g(Y),\tau}\right) + \frac{1}{2} D_{\rm KL}\left(P_{f(X), g(Y)} ~||~ \tilde Q_{f(X),g(Y),\tau}\right).
\end{align}
It suffices to show the monotonicity of the first term in $\tau$.
Since infoNCE loss is scale-invariant with respect to both $f$ and $g$, without loss of generality, we can assume $\EE\|f(X)\| = \EE\|g(Y)\| = 1$.

When the distribution $P_{f(X),g(Y)}$ degenerates on the support $\{\sigma(f(X),g(Y)) = m_{\sigma}(f,g)\}$, $P_{f(X), g(Y)}$ is not absolutely continuous with respect to $Q_{f(X), g(Y), \tau}$ and $\tilde Q_{f(X), g(Y), \tau})$), thus $\Delta(P;Q_{f(X), g(Y), \tau},\tilde Q_{f(X), g(Y), \tau}) = +\infty$ for any $\tau > 0$ and $\Delta(P;Q_{f(X), g(Y), \tau},\tilde Q_{f(X), g(Y), \tau}) = 0$ if and only if $\tau = 0$.

If the distribution $P_{f(X),g(Y)}$ is not degenerate, by the joint distribution in \eqref{eq:p-m}, we further have
\begin{align}
& D_{\rm KL}\left(P_{f(X), g(Y)} ~||~ Q_{f(X),g(Y),\tau}\right)\\ 
& = \iint\limits_{\rB \times \rB} p_{f(X),g(Y)}(u,v) \log\frac{p_{f(X),g(Y)}(u,v)}{q_{f(X),g(Y),\tau}(u,v)}\mathsf{d}u \mathsf{d}v\\
& = \iint\limits_{\rB \times \rB} p_{f(X),g(Y)}(u,v) \log \frac{p_{f(X),g(Y)}(u,v)}{p_{f(X)}(u) p_{g(Y)}(v)}\mathsf{d}u \mathsf{d}v\\
& \qquad\qquad - \iint_{\rB \times \rB} p_{f(X),g(Y)}(u,v) \log\frac{e^{m/\tau}}{\int_{\rB} p_{g(Y)}(\tilde v) e^{\sigma(u, \tilde v )/\tau} \mathsf{d} \tilde v}\mathsf{d}u \mathsf{d}v.
\end{align}
Then, since the first term is free of $\tau$, we have
\begin{align}
& \frac{\partial}{\partial \tau} D_{\rm KL}\left(P_{f(X), g(Y)} ~||~ Q_{f(X),g(Y),\tau}\right)\\
& = \frac{\partial}{\partial \tau} \bigg\{ - \iint_{\rB \times \rB} p_{f(X),g(Y)}(u,v) \log\frac{e^{m/\tau}}{\int_{\rB} p_{g(Y)}(\tilde v) e^{\sigma(u, \tilde v )/\tau} \mathsf{d} \tilde v}\mathsf{d}u \mathsf{d}v \bigg\}\\
& = \frac{\partial}{\partial \tau} \bigg\{ - \frac{m}{\tau} +  \iint_{\rB \times \rB} p_{f(X),g(Y)}(u,v) \left[\log \int_{\rB} p_{g(Y)}(\tilde v) e^{\sigma(u, \tilde v )/\tau} \mathsf{d} \tilde v\right]\mathsf{d}u \mathsf{d}v \bigg\}\\
& = \frac{1}{\tau^2} \bigg\{m - \iint_{\rB \times \rB} p_{f(X),g(Y)}(u,v)\left[\frac{\int_{\rB} \sigma(u, \tilde v) p_{g(Y)}(\tilde v) e^{\sigma(u, \tilde v )/\tau} \mathsf{d} \tilde v}{\int_{\rB} p_{g(Y)}(\tilde v) e^{\sigma(u, \tilde v )/\tau} \mathsf{d} \tilde v}\right]\mathsf{d}u \mathsf{d}v\bigg\}.
\end{align}
Since $\sigma(u, \tilde v ) \leq m$, the derivative is nonnegative, thus
\[
\frac{\partial}{\partial \tau} D_{\rm KL}\left(P_{f(X), g(Y)} ~||~ Q_{f(X),g(Y),\tau}\right) \geq 0.
\]
The equality holds if and only if $\sigma(f(X), g(\tilde Y)) = m_{\sigma}(f,g)$ almost surely with respect to the product distribution $P_{f(X)} \otimes P_{g(Y)}$. In this case, for any $\tau \geq 0$, the infoNCE loss $\calL(f,g,\tau) = 0$, which contradicts the fact that $(f,g) \in \cap_{\eta \geq 0} \calO_{\calL,\eta}(\calH)$, thus, we have 
\[
\frac{\partial}{\partial \tau} D_{\rm KL}\left(P_{f(X), g(Y)} ~||~ Q_{f(X),g(Y),\tau}\right) > 0.
\]
By the symmetry of $\Delta(P;Q_{f(X), g(Y), \tau},\tilde Q_{f(X), g(Y), \tau})$ in $f$ and $g$, we conclude that $\Delta(P;Q_{f(X), g(Y), \tau},\tilde Q_{f(X), g(Y), \tau})$ is increasing in $\tau$.

\section{Proof of Theorem~\ref{thm:thm1}}\label{sec:proof-1}
We present the proof of Theorem~\ref{thm:thm1} in this section. 
If $\calV(\calH) \neq \varnothing$, we have $\calV(\calH) = \calV(\calH^{k^*})$ as a result of Lemma~\ref{lem:VH} and $\cap_{\eta \geq 0} \calO_{\calL,\eta}(\calH) \neq \varnothing$ as a result of Lemma~\ref{lem:exist}, respectively.

In addition, Theorem~\ref{thm:thm1} (1)(3)(4) are directly implied by Lemma~\ref{lem:prop}, thus it suffices to prove Theorem~\ref{thm:thm1} (2).

Based on the previous notations, we will prove the result with the following steps.
\begin{tcolorbox}[colback=white]
\begin{itemize}\setlength\itemsep{1em}
    \item \textbf{Step 1}: for any $(f,g) \in \cap_{\eta \geq 0} \calO_{\calL,\eta}(\calH)$, it holds that $\min\{\id(f), \id(g)\} \geq k^*$;
    \item \textbf{Step 2}: for any $(f,g) \in \cap_{\eta \geq 0} \calO_{\calL,\eta}(\calH)$, $\max\{\id(f), \id(g)\} \leq k^*$.
\end{itemize}
\end{tcolorbox}
Combining the two steps, we conclude that for any $(f,g) \in \cap_{\eta \geq 0} \calO_{\calL,\eta}(\calH)$, it holds that $\id(f) = \id(g) = k^*$.

\subsection{Proof of Step 1}\label{sec:proof-step-1}
If $\id(f) = k < k^* \leq d$, there exists a random vector $Z \in \calB_k$ such that $f(X) = \phi(Z)$ almost surely for some injective measurable function $\phi: \RR^k \rightarrow \rB$. In addition, we have $\id(g) < d$. To see this, suppose $\id(g) = d$. Consider $(u,v) \in R(f) \times R(g)$ such that $\langle u, v \rangle = m_{\sigma}(f,g) \EE\|f(X)\| \EE\|g(Y)\|$ and there exists an open set $\calB_v$ satisfying $v \in \calB_v \subseteq R(g)$. Then, consider the linear function $\zeta(z) = \langle u, z \rangle$, which is continuous in $z \in \calB_v$. There exists $\tilde v \in \calB_v \subseteq R(g)$ such that $\zeta(\tilde v) = \langle u, \tilde v \rangle > \zeta(v) = m_{\sigma}(f,g) \EE\|f(X)\| \EE\|g(Y)\|$, which draws the contradiction.

Then, there exist $F(X), G(Y) \in \RR^{k}$ and injective measurable functions $\phi,\psi : \RR^k \rightarrow \rB$ such that
\begin{align}
    f(X) = \phi\left(F(X)\right) \qquad g(Y) = \psi\left(G(Y)\right).
\end{align}
Accordingly, for any $(f^*,g^*) \in \calV(\calH) = \calV(\calH^{k^*})$, there exist $\Phi_1, \Phi_2: \rB \times \rB \rightarrow \RR$ such that 
\begin{align}
    0 < I(\Phi_1(f(X), f^*(X)); \Phi_2(g(Y), g^*(Y)) \mid f(X)).
\end{align}
To see this, otherwise, for any $\Phi_1, \Phi_2: \rB \times \rB \rightarrow \RR$, it holds that
\begin{align}
    I(\Phi_1(f(X), f^*(X)); \Phi_2(g(Y), g^*(Y)) \mid f(X)) = 0,
\end{align}
which further indicates that
\begin{align}
    \Phi_1(f(X),f^*(X)) ~\indep~ \Phi_2(g(Y),g^*(Y)) \mid f(X).
\end{align}
Thus, specifically, $f^*(X) ~\indep~ g^*(Y) \mid f(X)$, which holds if and only if there exists a measurable function $h$ such that $f^*(X) = (h \circ f)(X)$ almost surely, indicating that $\id(f^*) \leq \id(f) = k < k^*$, which draws the contradiction.
By Lemma~\ref{lem:approx-mi}, we further have
\begin{align}
    0 \orderle I(\Phi_1(f(X), f^*(X)); \Phi_2(g(Y), g^*(Y)) \mid f(X)).
\end{align}

Hence, we can define
\begin{align}
    \tilde F(X) = \begin{pmatrix}F(X)\\\mathbf{0}_{d-k-1}\\\Phi_1(f(X), f^*(X))\end{pmatrix} \qquad \text{and} \qquad \tilde G(Y) = \begin{pmatrix}G(Y)\\\mathbf{0}_{d-k-1}\\\Phi_2(g(Y), g^*(Y))\end{pmatrix}.
\end{align}
By the Data Processing Inequality \citep[Theorem 2.17]{polyanskiy2014lecture}, we have
\begin{align}\label{eq:mi-ineq}
    -I(\tilde F(X); \tilde G(Y)) &\orderleq -I(F(X); \tilde G(Y))\\
    & \orderleq -I(F(X); G(Y)).
\end{align}
In the first inequality, $\ordereq$ holds if and only if $\tilde G(Y) ~\indep~ \Phi_1(f(X), f^*(X)) \mid F(X)$, i.e. \[\Phi_1(f(X), f^*(X)) ~\indep~ \Phi_2(g(Y), g^*(Y)) \mid f(X),\] drawing the contradiction to the construction of $\Phi$, thus the first inequality is strict, i.e.
\begin{align}
    -I(\tilde F(X); \tilde G(Y)) \orderle -I(F(X); G(Y)).
\end{align}
Since $\phi$ and $\psi$ are injective, we further have $-I(\tilde F(X); \tilde G(Y)) \orderle -I(F(X); G(Y)) \ordereq -I(f(X); g(Y))$, which draws the contradiction to the fact that $(f,g) \in \calW(\calH)$, thus $\id(f) \geq k^*$.

\subsection{Proof of Step 2}\label{sec:proof-step-3}
As we already show that $\id(f), \id(g) \geq k^*$, let $\id(f) = k \geq k^*$.
If $k^* = d$, we naturally have $k = k^*$, thus we consider the case where $k^* < d$ and there exist injective measurable functions $\phi^*, \psi^*: \RR^{k^*} \rightarrow \rB$ and $F^*(X), G^*(Y) \in \RR^{k^*}$ such that
\begin{align}
    f^*(X) = \phi^*\left(F^*(X)\right) \qquad g^*(Y) = \psi^*\left(G^*(Y)\right).
\end{align}
By Lemma~\ref{lem:VH}, for any $(f^*,g^*) \in \calV(\calH)$, it holds that $\id(f^*), \id(g^*) \leq k^*$ and
\begin{align}\label{eq:cond-indep}
    f(X) ~\indep~ g(Y) \mid f^*(X).
\end{align}

Hence, we can define 
\begin{align}
    f_*(X) = \EE[f(X) \mid f^*(X)]. 
\end{align}
Then, for any $\tau > 0$, consider the infoNCE loss
\begin{align}
    \calL(f,g,\tau) & = -\frac{2}{\tau} \frac{\EE\left[\langle f(X), g(Y) \rangle \right]}{\EE\|f(X)\| \EE\|g(Y)\|} + \EE_X\left\{\log \EE_{\tilde Y}\left[\exp\left(\frac{\langle f(X), g(\tilde Y) \rangle}{\tau \EE\|f(X)\| \EE\|g(\tilde Y)\|}\right)\right]\right\}\\ & \qquad\qquad + \EE_{\tilde Y}\left\{\log \EE_X\left[\exp\left(\frac{\langle f(X), g(\tilde Y) \rangle}{\tau \EE\|f(X)\| \EE\|g(\tilde Y)\|}\right)\right]\right\}.
\end{align}
In addition, with the defined representation maps $(f_*,g)$, we further have
\begin{align}
    \calL(f_*,g,\tau) 
    & = -\frac{2}{\tau} \frac{\EE\left[\langle f_*(X), g(Y) \rangle \right]}{\EE\|f_*(X)\| \EE\|g(Y)\|} + \EE_X\left\{\log \EE_{\tilde Y}\left[\exp\left(\frac{\langle f_*(X), g(\tilde Y) \rangle}{\tau \EE\|f_*(X)\| \EE\|g(\tilde Y)\|}\right)\right]\right\}\\ & \qquad\qquad + \EE_{\tilde Y}\left\{\log \EE_X\left[\exp\left(\frac{\langle f_*(X), g(\tilde Y) \rangle}{\tau \EE\|f_*(X)\| \EE\|g(\tilde Y)\|}\right)\right]\right\},
\end{align}
where, by Jensen's inequality, $\|f_*(X)\| = \|\EE[f(X) \mid f^*(X)]\| \leq \EE[\|f(X)\| \mid f^*(X)]$ almost surely.

We note that, according to Lemma~\ref{lem:prop}, it holds that
\begin{align}
    \langle f(X), g(Y) \rangle = m_{\sigma}(f,g) \cdot \EE\|f(X)\| \EE\|g(Y)\| \qquad \text{ almost surely}.
\end{align}
In addition, by the conditional independence~\eqref{eq:cond-indep}, we further obtain
\begin{align}
    \EE \left\{ \langle \EE[f(X) \mid f^*(X)], g(Y) \rangle  \right\} = \EE \left\{\EE[\langle f(X), g(Y) \rangle \mid f^*(X)] \right\} = m_{\sigma}(f,g) \cdot \EE\|f(X)\| \EE\|g(Y)\|
\end{align}
almost surely and, in the meantime, 
\begin{align}
    & \EE_X\left\{\log \EE_{\tilde Y}\left[\exp\left(\frac{\langle f(X), g(\tilde Y) \rangle}{\tau \EE\|f(X)\| \EE\|g(\tilde Y)\|}\right)\right]\right\}\\
    & \qquad\qquad = \EE_{f^*(X)}\left\{\EE\left[\log \EE_{g(\tilde Y)}\left\{\exp\left(\frac{\langle f(X), g(\tilde Y) \rangle}{\tau \EE\|f(X)\| \EE\|g(\tilde Y)\|}\right)\right\} ~\bigg|~ f^*(X)\right]\right\}.
\end{align}
Since $\log \EE_U \exp(x^\top U)$ is convex in $x$, by Jensen's inequality, we obtain
\begin{align}
    & \EE_X\left\{\log \EE_{\tilde Y}\left[\exp\left(\frac{\langle f(X), g(\tilde Y) \rangle}{\tau \EE\|f(X)\| \EE\|g(\tilde Y)\|}\right)\right]\right\}\\
    & \qquad\qquad \geq \EE_{f^*(X)}\left\{\log \EE_{g(\tilde Y)}\left\{\exp\left(\frac{\langle \EE[f(X)\mid f^*(X)], g(\tilde Y) \rangle}{\tau \EE\|f(X)\| \EE\|g(\tilde Y)\|}\right)\right\}\right\}\\
    & \qquad\qquad = \EE_X\left\{\log \EE_{\tilde Y}\left[\exp\left(\frac{\langle f_*(X), g(\tilde Y) \rangle}{\tau \EE\|f(X)\| \EE\|g(\tilde Y)\|}\right)\right]\right\}.
\end{align}
In addition, since $\exp(x^\top U)$ is convex in $x$, we also obtain
\begin{align}
    \EE_X\left[\exp\left(\frac{\langle f(X), g(\tilde Y) \rangle}{\tau \EE\|f(X)\| \EE\|g(\tilde Y)\|}\right)\right] \geq \EE_{f^*(X)}\left[\exp\left(\frac{\langle \EE\left[f(X) \mid f^*(X) \right], g(\tilde Y) \rangle}{\tau \EE\|f(X)\| \EE\|g(\tilde Y)\|}\right)\right]
\end{align}

Combining the pieces above, if we define
\begin{align}
    \tilde \tau = \tau \cdot \frac{\EE\|f_*(X)\|}{\EE\|f(X)\|} \leq \tau,
\end{align}
it holds that
\begin{align}
    \calL(f_*,g,\tau) \leq \calL(f,g,\tilde \tau),
\end{align}
where the equality holds if and only if $f(X)$ can be expressed as a measurable function of $f^*(X)$ almost surely.
Thus, if $f(X)$ cannot be expressed as functions of $f^*(X)$, we have 
\begin{align}\label{eq:cond-ineq}
\calL(f,g,\tilde \tau) > \calL(f_*,g,\tau).
\end{align}

Then, according to Lemma~\ref{lem:prop} (3), since $\tau \geq \tilde \tau \geq 0$, it holds that
\begin{align}
    \Delta(f,g,\tau) \geq \Delta(f,g,\tilde \tau),
\end{align}
which, by Lemma~\ref{lem:kl} further implies that 
\begin{align}
    \limsup_{M \rightarrow +\infty} \big(\calL(f_M,g_M,\tau)  +2I^*_M(\calH)\big) \geq \limsup_{M \rightarrow +\infty} \big(\calL(f_M,g_M,\tilde \tau) + 2I^*_M(\calH)\big).
\end{align}
According to \eqref{eq:cond-ineq} and Lemma~\ref{lem:approx-loss}, we further obtain
\begin{align}
    \liminf_{M \rightarrow +\infty} \big(\calL(f_M,g_M,\tilde \tau) - \calL((f_*)_M,g_M,\tau)\big) &\geq \liminf_{M \rightarrow +\infty} \calL(f_M,g_M,\tilde \tau) - \limsup_{M \rightarrow +\infty}\calL((f_*)_M,g_M,\tau)\\
    & = \lim_{M \rightarrow +\infty} \calL(f_M,g_M,\tilde \tau) - \lim_{M \rightarrow +\infty}\calL((f_*)_M,g_M,\tau) > 0.
\end{align}
Hence, it holds that
\begin{align}
    \limsup_{M \rightarrow +\infty} \big(\calL(f_M,g_M,\tau)  +2I^*_M(\calH)\big) & \geq \limsup_{M \rightarrow +\infty} \big(\calL(f_M,g_M,\tilde \tau) + 2I^*_M(\calH)\big)\\
    & \geq \limsup_{M \rightarrow +\infty} \big(\calL((f_*)_M,g_M,\tau) + 2I^*_M(\calH)\big)\\
    & \qquad \qquad + \liminf_{M \rightarrow +\infty} \big(\calL(f_M,g_M,\tilde \tau) - \calL((f_*)_M,g_M,\tau)\big)\\
    & > \limsup_{M \rightarrow +\infty} \big(\calL((f_*)_M,g_M,\tau) + 2I^*_M(\calH)\big).
\end{align}
Particularly, for any $\eta > 0$, with $\tau = \vareps(\eta)$, it holds that
\begin{align}
    \limsup_{M \rightarrow +\infty} \big(\calL(f_M,g_M,\vareps(\eta))  +2I^*_M(\calH)\big) & > \limsup_{M \rightarrow +\infty} \big(\calL((f_*)_M,g_M,\vareps(\eta)) + 2I^*_M(\calH)\big)\\
    & \geq 2 \eta, \qquad \text{ for all } \eta > 0.
\end{align}
Then, by the definition of $\calO_{\calL,\eta}(\calH)$, we can conclude that $(f,g) \notin \calO_{\calL,\eta}(\calH)$, which draws the contradiction.
Hence, there exist measurable functions $\tilde h$ and $\tilde \ell$ such that
\begin{align}
    f(X) = (h \circ f^*)(X) \qquad g(Y) = (\ell \circ g^*)(Y) \qquad \textnormal{almost surely}.
\end{align}
By Lemma~\ref{lem:id}, it holds that $\id(f), \id(g) \leq k^*$, and combining with the result in Step 1, we obtain $\id(f) = \id(g) = k^*$, which concludes the proof.

\section{Extensions and additional theoretical results}

\subsection{Connection with sufficient dimension reduction}\label{sec:sdr}
Sufficient dimension reduction (SDR) \citep{li1991sliced,cook1996graphics,cook2002dimension} is an important topic in statistics and machine learning, 
in which, the goal is to find a low-dimensional representation $f(X)$ such that $f(X)$ is sufficient for the conditional distribution of $Y \mid X$ in the sense that
$Y ~\indep~ X \mid f(X)$.
In the following proposition, we will show that the representations learned by CLIP are also sufficient if $\calV(\calH) \neq \varnothing$. 
To simplify notation, we focus on the case where $X, Y \in \rB$. Extensions to a more general setting, where $X \in \RR^{d_1}$ and $Y \in \RR^{d_2}$, are provided in Appendix~\ref{sec:extend-sdr}. Here we denote $\idmap: \rB \rightarrow \rB$ as the identical map and $X_M = \idmap_M(X)$. 
Then, we have the following result.

\begin{proposition}\label{prop:sdr}
Assume $\calV(\calH) \neq \varnothing$ and there exists $(f,g) \in \calH$ such that $\liminf_{M \rightarrow +\infty} (I(f_M(X);g_M(Y)) - I(X_M;Y_M)) = 0$.
For any $(f^*, g^*) \in \calV(\calH^{k^*})$, the dimension reductions $f^*(X)$ and $g^*(Y)$ are sufficient, i.e.
$Y ~\indep~ X \mid f^*(X)$ and $Y ~\indep~ X \mid g^*(Y)$.
\end{proposition}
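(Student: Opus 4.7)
The plan is to reduce the sufficiency statement to an equality of (possibly infinite) mutual informations: establish $I(f^*(X); Y) = I(X; Y)$ (and by symmetry $I(X; g^*(Y)) = I(X; Y)$), and then invoke the chain rule $I(X; Y) = I(f^*(X); Y) + I(X; Y \mid f^*(X))$ to force $I(X; Y \mid f^*(X)) = 0$, which is equivalent to $X \indep Y \mid f^*(X)$.

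\textbf{Key steps.}
First I would combine the hypothesis with the fact that $(f^*, g^*) \in \calV(\calH^{k^*}) \subseteq \calW(\calH)$: since $I^*_M(\calH) \geq I(f_M(X); g_M(Y))$ by definition of the supremum, the hypothesis on $(f,g)$ yields $\liminf_M (I^*_M(\calH) - I(X_M; Y_M)) \geq 0$, which combined with $\liminf_M (I(f^*_M(X); g^*_M(Y)) - I^*_M(\calH)) \geq 0$ gives $\liminf_{M} (I(f^*_M(X); g^*_M(Y)) - I(X_M; Y_M)) \geq 0$. Second, the Markov chain $f^*_M(X) - X - Y - g^*_M(Y)$ — valid because $f^*_M$ is a deterministic function of $X$ and $g^*_M$ of $Y$ — together with the joint-discretization version of Lemma~\ref{lem:kl} gives both $I(f^*_M(X); g^*_M(Y)) \uparrow I(f^*(X); g^*(Y))$ and the data-processing chain $I(f^*(X); g^*(Y)) \leq I(f^*(X); Y) \leq I(X; Y)$, while $I(X_M; Y_M) \uparrow I(X; Y)$ by Lemma~\ref{lem:kl} itself. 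Passing to the limit in $M$ squeezes all three inequalities to equalities, giving $I(f^*(X); g^*(Y)) = I(f^*(X); Y) = I(X; Y)$ in the extended reals. When $I(X;Y) < \infty$, the chain-rule identity immediately yields $I(X; Y \mid f^*(X)) = 0$, and $X \indep Y \mid g^*(Y)$ follows by interchanging the roles of the two modalities.

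\textbf{Main obstacle.}
The hard part will be the case $I(X;Y) = +\infty$, in which the chain-rule identity cannot be rearranged by subtraction and the pointwise equality $I(f^*(X); Y) = I(X; Y) = \infty$ no longer carries numerical content. The plan here is to truncate $Y$ to its discretization $Y_N$, for which $I(X; Y_N)$, $I(f^*(X); Y_N)$, and $I(X; Y_N \mid f^*(X))$ are all bounded by $H(Y_N) \leq \log N < \infty$, making the chain rule $I(X; Y_N) = I(f^*(X); Y_N) + I(X; Y_N \mid f^*(X))$ legitimate. A squeeze analogous to the previous paragraph — now using that $Y_N$ is a function of $Y$ and hence $f^*_M(X) - X - Y - Y_N$ is a Markov chain, and comparing $I(f^*_M(X); g^*_M(Y))$ against both $I(f^*(X); Y_N)$ and $I(X_M; Y_N)$ for $M \geq N$ — would establish $I(f^*(X); Y_N) = I(X; Y_N)$ for every $N$, hence $I(X; Y_N \mid f^*(X)) = 0$. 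Monotone convergence of conditional mutual information along the nested discretizations $C_N \subseteq C_{N+1}$ then lifts this to $I(X; Y \mid f^*(X)) = 0$, and the argument for $g^*(Y)$ is identical after swapping the two modalities.
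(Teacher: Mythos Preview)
Your overall strategy---data processing plus the chain rule (Kolmogorov identity) to force $I(X;Y\mid f^*(X))=0$---is exactly what the paper does. The difference is in execution: the paper never leaves the discretized level. It applies data processing and the chain rule in the $\ordereq$ order to obtain $I(Y;X\mid f^*(X))\ordereq 0$, and then invokes the approximation Lemma~\ref{lem:approx-mi} (and its conditional corollary) to transfer this to the continuous statement. That route handles the finite and infinite cases uniformly.

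Your route has a genuine gap, and unfortunately it sits in the only case that actually occurs. Because $(f^*,g^*)\in\calV(\calH)\subseteq\calA(\calH)$, we have $f^*(X)/\EE\|f^*(X)\|=g^*(Y)/\EE\|g^*(Y)\|$ almost surely, so $I(f^*(X);g^*(Y))=+\infty$ and hence $I(X;Y)=+\infty$ always; your clean finite-case paragraph is never applicable. In the infinite case your truncation squeeze does not go through: you need to show $I(f^*(X);Y_N)\geq I(X;Y_N)$, and the only leverage from the hypothesis is on the sequence $I(f^*_M(X);g^*_M(Y))-I(X_M;Y_M)$. But $g^*_M(Y)$ is a coarsening of $g^*(Y)$, not of $Y$, and has no $\sigma$-algebra relation to $Y_N$; there is no data-processing inequality linking $I(f^*_M(X);g^*_M(Y))$ to $I(f^*(X);Y_N)$ in the direction you need. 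Concretely, once you observe $I(f^*_M(X);g^*_M(Y))\to+\infty$ while $I(X_M;Y_N)\to I(X;Y_N)<\infty$, the ``squeeze'' inequality $\liminf_M\bigl(I(f^*_M(X);g^*_M(Y))-I(X_M;Y_N)\bigr)\geq 0$ becomes vacuous and gives no information about $I(f^*(X);Y_N)$.

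The fix is to do what the paper does: keep everything at the discretized level. From your Step~1 and the data-processing chain $I(f^*_M(X);g^*_M(Y))\leq I(f^*_M(X);Y_M)\leq I(X_M;Y_M)$ (the latter in the $\orderleq$ sense) you already get $I(Y;f^*(X))\ordereq I(Y;X)$; then the Kolmogorov identity $I(Y;X)\ordereq I(Y;f^*(X))+I(Y;X\mid f^*(X))$ at the discretized level gives $I(Y;X\mid f^*(X))\ordereq 0$, and Lemma~\ref{lem:approx-mi} lifts this to $I(Y;X\mid f^*(X))=0$. No case split is needed.
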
 

We note that compared with Theorem~\ref{thm:thm1}, Proposition~\ref{prop:sdr} has an additional condition that there exists $(f,g) \in \calH$ such that $\liminf_{M \rightarrow +\infty} (I(f_M(X);g_M(Y)) - I(X_M;Y_M)) = 0$. One specific example that satisfies this condition is the scenario where we set $f^*(X) = (h_1(X), \cdots, h_k(X))$ almost surely such that $Y ~\indep~ X \mid (h_1(X), \cdots, h_k(X))$, which is commonly considered in the literature of (nonlinear) SDR \citep{fukumizu2004dimensionality,fukumizu2009kernel,li2011principal,chen2024deep} and the tuple of nonlinear functions can be further extended to $\sigma$-field \citep{li2017nonlinear,chen2024deep}.

\subsubsection{Proof of Proposition~\ref{prop:sdr}}\label{sec:proof-sdr}
We prove by showing that $Y ~\indep~ X \mid f^*(X)$. 
By the assumption that there exists $(\tilde f,\tilde g) \in \calH$ such that $I(Y; X) \ordereq I(\tilde f(X); \tilde g(Y))$ and the definition of $\calV(\calH^{k^*})$, it holds that $I(Y; X) \ordereq I(f^*(X); g^*(Y))$ for all $(f^*, g^*) \in \calV(\calH^{k^*})$.   
Then, by property of mutual information (\cite{polyanskiy2014lecture} Theorem 2.3), we have
\begin{align}
I(g^*(Y); f^*(X)) \orderleq  I(Y; f^*(X)) \orderleq I(Y; X) \ordereq I(g^*(Y); f^*(X)),
\end{align}
which implies that 
\begin{align}\label{eq:mi-eq}
I(Y; f^*(X)) \ordereq I(Y; X).
\end{align}
In addition, combining \eqref{eq:mi-eq} with the Kolmogorov identity of mutual information (\cite{polyanskiy2014lecture} Theorem 2.5 (2)) that
\begin{align}
I(Y; X) \ordereq I(Y; X, f^*(X)) \ordereq I(Y; f^*(X)) + I(Y; X \mid f^*(X)),
\end{align}
we further obtain $I(Y; X \mid f^*(X)) \ordereq 0$. According to the property of the conditional mutual information (\cite{polyanskiy2014lecture} Theorem 2.5 (1)), $I(Y; X \mid f^*(X)) \ordereq 0$ if and only if $(X_M, f_M^*(X), Y_M)$ forms a Markov chain, i.e. $Y_M ~\indep~ X_M \mid f_M^*(X)$ for all $M \in \calM$, which, by continuity of $X$, $Y$, and $f^*(X)$, further indicates that $Y ~\indep~ X \mid f^*(X)$.

\subsubsection{Extension of Proposition~\ref{prop:sdr}}\label{sec:extend-sdr}
We then turn to a general case where $X \in \tilde \calB_{d_1}$ and $Y \in \tilde \calB_{d_2}$, where $\tilde B_{d_1}$ and $\tilde \calB_{d_2}$ are bounded sets in $\RR^{d_1}$ and $\RR^{d_2}$, respectively. 
If $\calV(\calH) \neq \varnothing$, we fix a pair $(f^*, g^*) \in \calV(\calH^{k^*})$.
Consider discretizations $\calD_M = \{D_{M,i}\}_{i \in [M]}$ and $\calE_M = \{E_{M,i}\}_{i \in [M]}$ of $\tilde \calB_{d_1}$ and $\tilde \calB_{d_2}$, respectively, where $D_{M,i}$ is the preimage of $c_{M,i}$ under $f^*$ and $E_{M,i}$ is the preimage of $c_{M,i}$ under $g^*$.
Denote representatives in $D_{M,i}$ and $E_{M,i}$ by $x_{M,i}$ and $y_{M,i}$, respectively. 
In addition, we assume $\calD_M$ and $\calE_M$ are nested discretizations and as $M \rightarrow +\infty$, it holds that
\begin{align}
    \lim_{M \rightarrow +\infty} \;\max_{i \in [M]} {\rm diam}(D_{M,i}) = 0, \qquad \lim_{M \rightarrow +\infty} \;\max_{i \in [M]} {\rm diam}(E_{M,i}) = 0.
\end{align}
Similar to notations for $\tB$ before, we write $X_M$ and $Y_M$ to denote discretized random vectors where 
\begin{align}
\PP(X_M = x_{M,i}) = \PP(X \in D_{M,i}) \qquad {\rm and} \qquad \PP(Y_M = y_{M,i}) = \PP(Y \in E_{M,i}).
\end{align}
Then, we define a new partial order $\neworderleq$ as follows. For any random vectors $X \in \RR^{d_1}$, $Y \in \RR^{d_2}$, $U$ supported on $\tB$, and functionals $F, G$ of random vectors, we define the order $\neworderleq$ associated with $(f^*,g^*)$ such that 
\begin{align}
F(X, Y) \neworderleq G(U) ~\Longleftrightarrow ~
\limsup_{M \rightarrow +\infty} \;\left\{F(X_M,Y_M) - G(U_M)\right\} \leq 0,
\end{align}
and moreover, the inequality is strict, i.e. $F(X, Y) \neworderle G(U)$, if and only if 
\begin{align}
\limsup_{M \rightarrow +\infty} \;\left\{F(X_M, Y_M) \neworderleq G(U_M)\right\} < 0.
\end{align} 
We also write $F(X, Y) \newordereq G(U)$ if $\lim_{M \rightarrow +\infty} \left\{F(X_M,Y_M) - G(U_M)\right\}= 0$. Then, we state the extension of Proposition~\ref{prop:sdr} as follows.

\begin{proposition}\label{prop:sdr-extd}
Suppose $\calV(\calH) \neq \varnothing$.
Assume there exists $(f,g) \in \calH$ such that $I(X;Y) \newordereq I(f(X);g(Y))$ with $\newordereq$ associated with $(f^*, g^*) \in \calV(\calH^{k^*})$.
Then, the dimension reductions $f^*(X)$ and $g^*(Y)$ are sufficient, i.e.
\begin{align}
Y ~\indep~ X \mid f^*(X) \qquad \textnormal{and} \qquad Y ~\indep~ X \mid g^*(Y).
\end{align}
\end{proposition}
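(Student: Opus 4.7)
\textbf{Proof proposal for Proposition~\ref{prop:sdr-extd}.} The plan is to mirror the proof of Proposition~\ref{prop:sdr}, systematically replacing the order $\ordereq$ on $\tB$-discretizations by the new order $\newordereq$ associated with $(f^*, g^*)$. The key structural observation is that by construction of $\calD_M$ and $\calE_M$ as preimages under $f^*$ and $g^*$, the discretizations $f^*_M(X)$ and $g^*_M(Y)$ are deterministic functions of $X_M$ and $Y_M$ respectively, so the data processing inequality is available at every level.

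First, I would transfer the hypothesis from $(f,g)$ to $(f^*, g^*)$. Data processing yields $I(f^*_M(X); g^*_M(Y)) \le I(X_M; Y_M)$ and $I(f_M(X); g_M(Y)) \le I(X_M; Y_M)$ for every $M$. Combined with $(f^*, g^*) \in \calW(\calH^{k^*})$, which gives $\liminf_M [I(f^*_M(X); g^*_M(Y)) - I(f_M(X); g_M(Y))] \ge 0$, and with the hypothesis $I(X;Y) \newordereq I(f(X); g(Y))$, a sandwich argument forces
\[
\lim_{M \to \infty} \bigl[I(X_M; Y_M) - I(f^*_M(X); g^*_M(Y))\bigr] = 0,
\]
that is, $I(X;Y) \newordereq I(f^*(X); g^*(Y))$.

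Next, two applications of data processing give the chain $I(g^*_M(Y); f^*_M(X)) \le I(Y_M; f^*_M(X)) \le I(Y_M; X_M)$, whose endpoints are asymptotically equal by the previous step, yielding $I(Y;X) \newordereq I(Y; f^*(X))$. Since $f^*_M(X)$ is a function of $X_M$, the Kolmogorov identity gives
\[
I(Y_M; X_M) = I(Y_M; X_M, f^*_M(X)) = I(Y_M; f^*_M(X)) + I(Y_M; X_M \mid f^*_M(X)),
\]
and the previous display collapses this to $\lim_M I(Y_M; X_M \mid f^*_M(X)) = 0$.

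Finally, the diameter-going-to-zero assumption on $\calD_M$ and $\calE_M$ makes $\sigma(X_M), \sigma(Y_M), \sigma(f^*_M(X))$ increase to $\sigma(X), \sigma(Y), \sigma(f^*(X))$, so discretization-approximation arguments in the spirit of Lemma~\ref{lem:kl} and Lemma~\ref{lem:approx-mi} promote the vanishing of the discrete conditional mutual information to $I(Y; X \mid f^*(X)) = 0$, which is exactly $Y \indep X \mid f^*(X)$. The symmetric argument, with the roles of $(f^*, g^*)$ interchanged, yields $Y \indep X \mid g^*(Y)$. I expect the main obstacle to be this last lifting step: one must verify that the preimage-based discretizations genuinely resolve $X$ and $Y$---precisely the content of the diameter condition, whose compatibility with the non-injective $f^*$ and $g^*$ deserves care---and then promote the vanishing of discrete conditional mutual information to its continuous counterpart by a measure-theoretic continuity argument for conditional mutual information under nested refinement.
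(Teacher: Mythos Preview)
Your proposal is correct and follows essentially the same route as the paper: data processing plus the $\calW$-property to transfer the hypothesis to $(f^*,g^*)$, a second data-processing chain to reach $I(Y;X)\newordereq I(Y;f^*(X))$, the Kolmogorov identity to isolate the conditional mutual information, and then a discrete-to-continuous lift. Your explicit observation that $f^*_M(X)$ is a function of $X_M$ by construction of the preimage partitions, and your flagging of the compatibility between the diameter condition and non-injective $f^*,g^*$, are points the paper handles tersely or leaves implicit.
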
 

\begin{proof}[Proof of Proposition~\ref{prop:sdr-extd}]
Similar with the proof of Proposition~\ref{prop:sdr}, for any $(f^*,g^*) \in \calV(\calH^{k^*})$, it holds that
\begin{align}
I(g^*(Y); f^*(X)) \neworderleq  I(Y; f^*(X)) \neworderleq I(Y; X).
\end{align}
In addition, by assumption, there exists $(\tilde f, \tilde g) \in \calH$ such that 
\begin{align}
I(Y; X) \newordereq I(\tilde f(X); \tilde g(Y)) \neworderleq I(g^*(Y); f^*(X)).
\end{align}
Combining results above, we obtain
\begin{align}
I(Y; X) \newordereq I(g^*(Y); f^*(X)).
\end{align}
In addition, by the Kolmogorov identity of mutual information (\cite{polyanskiy2014lecture} Theorem 2.5 (2)), we obtain
\begin{align}
I(Y; X) \newordereq I(Y; X, f^*(X)) \newordereq I(Y; f^*(X)) + I(Y; X \mid f^*(X)),
\end{align}
which implies that $I(Y; X \mid f^*(X)) \newordereq 0$. 
According to the property of the conditional mutual information (\cite{polyanskiy2014lecture} Theorem 2.5 (1)), $I(Y; X \mid f^*(X)) \newordereq 0$ if and only if $(X_M, f_M^*(X), Y_M)$ forms a Markov chain, i.e. $Y_M ~\indep~ X_M \mid f_M^*(X)$ for all $M \in \calM$, which, by continuity of $X$, $Y$, and $f^*(X)$, further indicates that $Y ~\indep~ X \mid f^*(X)$.
\end{proof}

\subsection{Alignment and uniformity with correctly specified dimension}\label{sec:global}
To begin with, recall that for any $(f,g) \in \calA(\calH)$, it holds that $f(X)/\EE\|f(X)\|, g(Y)/\EE\|g(Y)\| \in \calS^{d-1}$.
Then, we define the set of uniformly distributed representations by
\begin{equation*}
    \calU_p(\calH^\prime) = \left\{(f,g) \in \calH^\prime:\;f(X)/\EE\|f(X)\|, g(Y)/\EE\|g(Y)\| \sim {\rm Unif}(\calS^{p-1})\right\}.
\end{equation*}
Note that for any representation map $f \in \calH'$ with $\id(f) = k$, if the function class is sufficiently expressive, the linear dimension of $R(f)$ can exceed $k$, then recalling the definition of $\calA(\calH^k)$ and $\calU_p(\calH^k)$, we define
\begin{align}
    D(k, \calH) = \max\left\{p \in [d]:\; \calA(\calH^k) \cap \calU_p(\calH^k) \neq \varnothing\right\}.
\end{align}
Specifically, write $d^* = D(k^*, \calH)$. Then, we define a subset of $\calH$ by $\calH^*  = \calH^*_X \times \calH^*_Y$ with
\begin{align}
    & \calH^*_X = \left\{f_{\calS^{d^*-1}}: f \in \calH_X, \;f(X)/\EE\|f(X)\| \in \calS^{d^*-1}\right\},\\
    & \calH^*_Y = \left\{g_{\calS^{d^*-1}}: g \in \calH_Y, \;g(Y)/\EE\|g(Y)\| \in \calS^{d^*-1}\right\}.
\end{align}
We further denote $\calU(\calH^*) = \{(f,g) \in \calH^*:\;f(X)/\EE\|f(X)\|, g(Y)/\EE\|g(Y)\| \sim {\rm Unif}(\calS^{d^*-1})\}$.

Then, by specifying the output dimension $d = d^*$,
we have the following result.
\begin{theorem}\label{thm:infonce}
With the function class $\calH = \calH^*$ and output dimension $d = d^* = D(k^*,\calH)$, it holds that
\begin{align}
    \bigcap_{\eta \geq 0} O_{\calL,\eta}(\calH^*) = \calV(\calH^*) \cap \calU(\calH^*).
\end{align}
Particularly, for any $(f^*,g^*) \in \calV(\calH^*) \cap \calU(\calH^*)$, the tuple $(f^*,g^*,0^+)$ is a minimizer of $\calL(f,g,\tau)$.
\end{theorem}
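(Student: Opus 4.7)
The plan is to prove the set equality by establishing two inclusions, drawing on Theorem \ref{thm:thm1}, the decomposition of Lemma \ref{lem:decomp}, and the sphere geometry of $\calH^*$. The assertion that $(f^*, g^*, 0^+)$ is a minimizer comes for free from the $\supseteq$ direction.

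For $\supseteq$, I would take $(f^*, g^*) \in \calV(\calH^*) \cap \calU(\calH^*)$ and replay the nonemptiness computation in the proof of Lemma \ref{lem:exist}. Alignment gives $\tilde f^*(X) = \tilde g^*(Y)$ almost surely, which by the explicit $Q_{\tau}$ calculation in Lemma \ref{lem:exist} yields $\lim_{\tau \to 0^+} D_{\mathrm{KL}}\big(P_{f^*(X), g^*(Y)} \,\|\, Q_{f^*(X), g^*(Y), \tau}\big) = 0$, and symmetrically for $\tilde Q$. Maximal mutual information $(f^*, g^*) \in \calW(\calH^*)$ yields $\limsup_M \big( I^*_M(\calH^*) - I(f^*_M(X); g^*_M(Y)) \big) = 0$. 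Plugging both into the decomposition of Lemma \ref{lem:decomp} produces $\limsup_M \big(\calL(f^*_M, g^*_M, \tau) + 2 I^*_M(\calH^*)\big) \to 0$ as $\tau \to 0^+$, so $(f^*, g^*) \in \bigcap_{\eta \ge 0} O_{\calL, \eta}(\calH^*)$. The same calculation shows that the infimum of $\calL$ is attained in the limit along $(f^*, g^*, \tau \to 0^+)$, giving the minimizer claim.

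For the reverse inclusion $\subseteq$, I would take $(f, g) \in \bigcap_{\eta \ge 0} O_{\calL, \eta}(\calH^*)$ and invoke Theorem \ref{thm:thm1} to obtain $(f, g) \in \calW(\calH^*)$, $\id(f) = \id(g) = k^*$, and $\sigma(f(X), g(Y)) = m_\sigma(f, g)$ almost surely. What remains is to upgrade this to uniformity of the normalized marginals and to alignment. For uniformity, the existence of $(f^*, g^*) \in \calV(\calH^*) \cap \calU(\calH^*)$, guaranteed by $d^* = D(k^*, \calH)$, lower-bounds $I^*_M(\calH^*) \geq I(f^*_M(X); g^*_M(Y))$, which for aligned uniform representations on $\calS^{d^*-1}$ matches the universal upper bound $\min\{H(f_M(X)), H(g_M(Y))\}$ asymptotically. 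Since $(f, g) \in \calW(\calH^*)$ forces $I(f_M(X); g_M(Y)) \to I^*_M(\calH^*)$, both $H(f_M(X))$ and $H(g_M(Y))$ must attain the maximal entropy on the sphere-intersecting cells; passing to the continuous limit then pins the normalized marginals to the uniform distribution on $\calS^{d^*-1}$, giving $(f, g) \in \calU(\calH^*)$. Once the marginals are full-support uniform on $\calS^{d^*-1}$, one has $m_\sigma(f, g) = \mathrm{ess\,sup}_{X \indep \tilde Y} \langle \tilde f(X), \tilde g(\tilde Y)\rangle = 1$, and $\sigma(f(X), g(Y)) = 1$ almost surely forces $\tilde f(X) = \tilde g(Y)$ almost surely for unit vectors, giving $(f, g) \in \calA(\calH^*)$.

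The main obstacle will be making the entropy comparison on $\calS^{d^*-1}$ fully rigorous. Lemma \ref{lem:entropy-approx} handles Lebesgue approximation on $\tB$, but here one needs an analogue for distributions supported on the lower-dimensional submanifold, so that saturation of $H(f_M(X))$ on the sphere-intersecting cells translates into genuine uniformity of $\tilde f(X)$ on $\calS^{d^*-1}$ in the continuous limit. A secondary technical task is verifying the nonemptiness of $\calV(\calH^*) \cap \calU(\calH^*)$ under $d^* = D(k^*, \calH)$, i.e., checking that the aligned uniform representations realizing $D(k^*, \calH)$ actually attain the $\calH^*$-supremum of discretized mutual information; this should follow because alignment combined with uniformity saturates the entropy bound on the cells covering $\calS^{d^*-1}$, which simultaneously upper-bounds $I(f_M(X); g_M(Y))$ across all of $\calH^*$.
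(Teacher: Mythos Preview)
Your outline is sound, but it takes a substantially different route from the paper. The paper's proof is essentially a two-line reduction: it invokes \cite[Theorem~1(2)]{wang2020understanding} directly, which already characterizes the global minimizers of $\calL(\cdot,\cdot,\tau)$ over sphere-valued representations as exactly the aligned and uniform pairs, for \emph{every} fixed $\tau>0$. This gives $\calA(\calH^*)\cap\calU(\calH^*)=\bigcap_{\eta\ge 0}O_{\calL,\eta}(\calH^*)$ in one stroke, for both inclusions. The only remaining work is the identity $\calA(\calH^*)\cap\calU(\calH^*)=\calV(\calH^*)\cap\calU(\calH^*)$, which follows because uniform marginals on $\calS^{d^*-1}$ maximize entropy among $\calH^*$-representations, so $\calA\cap\calU\subseteq\calW$.

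You instead reconstruct the minimizer characterization from the paper's internal machinery: Lemma~\ref{lem:exist} for $\supseteq$, and Theorem~\ref{thm:thm1} plus an entropy-saturation argument for $\subseteq$. This is more self-contained (no external citation) but correspondingly longer, and the obstacle you flag---passing from discretized entropy saturation on $\tB$-cells to genuine uniformity on the lower-dimensional submanifold $\calS^{d^*-1}$---is real and not handled by Lemma~\ref{lem:entropy-approx} as stated. The paper sidesteps this entirely by outsourcing it to Wang--Isola, whose result is proved directly on the sphere. Your secondary task (nonemptiness of $\calV(\calH^*)\cap\calU(\calH^*)$) is exactly what the paper's entropy step establishes, so that part aligns. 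In short: your plan would work with additional effort on the sphere-discretization point, but the paper's proof is shorter because the heavy lifting was already done in~\cite{wang2020understanding}.
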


\subsubsection{Proof of Theorem~\ref{thm:infonce}}
By \cite[Theorem 1 (2)]{wang2020understanding}, if we define $\tilde f(X) = f(X)/\EE\|f(X)\|$ and $\tilde g(Y) = g(Y)/\EE\|g(Y)\|$, then $(\tilde f, \tilde g)$ is the global minimizer of $\calL$ in $\calH^*$ for any given $\tau > 0$. Then, based on the definition of $\calO_{\eta}(\calH)$, it holds that
\begin{align}
    \calA(\calH^*) \cap \calU(\calH^*) = \bigcap_{\eta \geq 0} O_{\calL,\eta}(\calH^*).
\end{align}
In addition, with the choice of $\calH^*$, the uniformly distributed representation on $\calS^{d^*-1}$ maximizes the entropy. Hence, for any $(f,g) \in \calA(\calH^*) \cap \calU(\calH^*)$, we also have $(f,g) \in \calW(\calH^*)$, thus $\calA(\calH^*) \cap \calU(\calH^*) = \calA(\calH^*) \cap \calU(\calH^*) \cap \calW(\calH^*) = \calV(\calH^*) \cap \calU(\calH^*)$, which completes the proof.

\section{Proof of preliminary results in Section~\ref{sec:disc}}\label{sec:proof-disc}

\subsection{Proof of Lemma~\ref{lem:entropy-approx}}
We adapt the proof in \cite[Theorem 8.3.1]{cover1999elements}.
The entropy of $U_M$ can be written as 
\begin{align}
    H(U_M) & = -\sum_{i \in [M]} p_i \log p_i\\
    & = -\sum_{i \in [M]} \left(\int_{c_{M,i}} p(u) \mathsf{d} u\right)\log \left(\int_{c_{M,i}} p(u) \mathsf{d} u\right)\\
    & = -\sum_{i \in [M]} \left[p(z_{M,i}) \log p(z_{M,i})\right] \Delta_M  - \log \Delta_M + o(1).
\end{align}
Hence, it holds that
\begin{align}
    \lim_{M \rightarrow +\infty} \;\left(H(U_M) + \log \Delta_M\right) = \tilde H(U).
\end{align}

\subsection{Proof of Lemma~\ref{lem:approx-cond-ent}}
We note that
\begin{align}
    H(\phi(U_M,V_M) \mid U_M) = H(\phi(U_M,V_M), U_M) - H(U_M),
\end{align}
for which, we already see in Lemma~\ref{lem:entropy-approx} that
\begin{align}
\lim_{M \rightarrow +\infty} \;\left(H(U_M) + \log \Delta_M\right) = \tilde H(U).
\end{align}
Denoting $W = \phi(U,V)$, $\zeta = \Phi(U,V) = (\phi(U,V),U)$, $w_{M,ij} = \phi(z_{M,i},z_{M,j})$, and $\zeta_{M,ij} = (w_{M,ij},z_{M,i})$. 
In addition, we write
\begin{align}
    q_{ij} = \PP(U \in c_{M,i}, V \in c_{M,j}), \qquad q^{\Phi}_{ij} = \PP(\zeta \in \Phi(c_{M,i} \times c_{M,j})),
\end{align}
and $\Delta^{\phi}(z_{M,i}, z_{M,j})$ as the Lebesgue measure of $\phi(c_{M,i}, c_{M,j})$, $\Delta_M$ as the Lebesgue measure of $c_{M,i}$.
Define $q^{\Phi}(w,u)$ as the joint density of $\Phi(U,V) = (\phi(U,V),U)$.

The joint entropy takes the form
\begin{align}
    & H(\phi(U_M,V_M), U_M)\\ &\qquad = -\sum_{i,j \in [M]} q^{\Phi}_{ij} \log q^{\Phi}_{ij}\\
    &\qquad = -\sum_{i,j \in [M]} q^{\Phi}(w_{M,ij}, z_{M,i}) \Delta^{\phi}_M(z_{M,i},z_{M,j}) \Delta_M\\
    & \qquad\qquad \times \log\bigg(q^{\Phi}(w_{M,ij}, z_{M,i}) \Delta^{\phi}_M(z_{M,i},z_{M,j}) \Delta_M\bigg)+ o(1)\\
    &\qquad  = -\sum_{i,j \in [M]} q^{\Phi}(w_{M,ij}, z_{M,i}) \Delta^{\phi}_M(z_{M,i},z_{M,j}) \Delta_M \log\bigg(q^{\Phi}(w_{M,ij}, z_{M,i})\bigg)\\
    & \qquad\qquad - \sum_{i,j \in [M]} q^{\Phi}(w_{M,ij}, z_{M,i}) \Delta^{\phi}_M(z_{M,i},z_{M,j}) \Delta_M \log\bigg(\Delta^{\phi}_M(z_{M,i},z_{M,j}) \Delta_M\bigg)+ o(1)\\
    &\qquad  = -\sum_{i,j \in [M]} q^{\Phi}(w_{M,ij}, z_{M,i}) \Delta^{\phi}_M(z_{M,i},z_{M,j}) \Delta_M \log\bigg(q^{\Phi}(w_{M,ij}, z_{M,i})\bigg)\\
    & \qquad\qquad - \sum_{i,j \in [M]} q^{\Phi}(w_{M,ij}, z_{M,i}) \Delta^{\phi}_M(z_{M,i},z_{M,j}) \Delta_M \log\Delta^{\phi}_M(z_{M,i},z_{M,j})\\
    & \qquad\qquad - \log \Delta_M + o(1).
\end{align}
Then, by the results in Lemma~\ref{lem:entropy-approx}, we obtain
\begin{align}
    & H(\phi(U_M,V_M), U_M) - H(U_M)\\
    & \qquad = -\underbrace{\sum_{i,j \in [M]} q^{\Phi}(w_{M,ij}, z_{M,i}) \Delta^{\phi}_M(z_{M,i},z_{M,j}) \Delta_M \log\bigg(q^{\Phi}(w_{M,ij}, z_{M,i})\bigg)}_{(a)}\\
    & \qquad + \underbrace{\sum_{i \in [M]} \left[p(z_{M,i}) \Delta_M \log p(z_{M,i})\right]}_{(b)}\\
    & \qquad\qquad - \sum_{i,j \in [M]} q^{\Phi}(w_{M,ij}, z_{M,i}) \Delta^{\phi}_M(z_{M,i},z_{M,j}) \Delta_M \log\Delta^{\phi}_M(z_{M,i},z_{M,j}) + o(1),
\end{align}
where, as $M \rightarrow +\infty$ and ${\rm diam}(c_{M,i})\rightarrow 0$, we have
\begin{align}
\begin{cases}
    & \text{(a)} \rightarrow -\tilde H(\phi(U,V), U),\\
    & \text{(b)} \rightarrow -\tilde H(U).
\end{cases}
\end{align}
In addition, as $\Delta^{\phi}_M(z_{M,i},z_{M,j}) < 1$,
\begin{align}
    \limsup_{M \rightarrow +\infty}\; \sum_{i,j \in [M]} q^{\Phi}(w_{M,ij}, z_{M,i}) \Delta^{\phi}_M(z_{M,i},z_{M,j}) \Delta_M \log\Delta^{\phi}_M(z_{M,i},z_{M,j}) < 0
\end{align}
Hence, we obtain
\begin{align}
& \liminf_{M \rightarrow +\infty}\;\bigg(H(\phi(U_M,V_M), U_M) - H(U_M)\bigg)\\
& \qquad \geq \tilde H(\phi(U,V), U) - \tilde H(U)\\
& \qquad\qquad - \limsup_{M \rightarrow +\infty}\; \sum_{i,j \in [M]} q^{\Phi}(w_{M,ij}, z_{M,i}) \Delta^{\phi}_M(z_{M,i},z_{M,j}) \Delta_M \log\Delta^{\phi}_M(z_{M,i},z_{M,j})\\
& \qquad > \tilde H(\phi(U,V), U) - \tilde H(U) = \tilde H(\phi(U,V) \mid U) > 0.
\end{align}

\subsection{Proof of Lemma~\ref{lem:approx-mi}}
Denoting $R = \phi(U,V)$, $W = \psi(U,V)$, and $r_{M,ij} = \phi(z_{M,i},z_{M,j})$, $w_{M,ij} = \psi(z_{M,i},z_{M,j})$, we note that
\begin{align}
    I(\phi(U,V);\psi(U,V)) = \iint p_{R,W}(z,w) \frac{p_{R,W}(r,w)}{p_{R}(r) p_{W}(w)} \mathsf{d}z \mathsf{d}w.
\end{align}
Denote $\Phi(U,V) = (\phi(U,V), \psi(U,V))$ and $q^{\phi}_{ij} = \PP(\phi(U,V) \in \phi(c_{M,i} \times c_{M,j}))$, $q^{\psi}_{ij} = \PP(\psi(U,V) \in \psi(c_{M,i} \times c_{M,j}))$, $q^{\Phi}_{ij} = \PP(\Phi(U,V) \in \Phi(c_{M,i} \times c_{M,j}))$.
In addition, denote $\Delta^{\Phi}_M(r_{M,ij},w_{M,ij})$ as the Lebesgue measure of $\Phi(c_{M,i} \times c_{M,j})$, $\Delta^{\phi}_M(r_{M,ij},w_{M,ij})$ as the Lebesgue measure of $\phi(c_{M,i} \times c_{M,j})$, and $\Delta^{\psi}_M(r_{M,ij},w_{M,ij})$ as the Lebesgue measure of $\psi(c_{M,i} \times c_{M,j})$.
Then, for the discretized mutual information, we have
\begin{align}\label{eq:disc-mi}
    & I(\phi(U_M,V_M);\psi(U_M,V_M))\\ & = \sum_{i,j \in [M]} p^{\Phi}_{ij} \Delta^{\Phi}_M(r_{M,ij},w_{M,ij})\log\frac{p^{\Phi}_{ij} \Delta^{\Phi}_M(r_{M,ij},w_{M,ij})}{p^{\phi}_{ij} p^{\psi}_{ij} \Delta^{\phi}_M(r_{M,ij}) \Delta^{\psi}_M(w_{M,ij})} + o(1).
\end{align}
Since $\Delta^{\phi,\psi}_M(r_{M,i},w_{M,j}) = \Delta^{\phi}_M(r_{M,i}) \Delta^{\psi}_M(w_{M,j}) + o(1)$, \eqref{eq:disc-mi} can be further written as
\begin{align}
    & I(\phi(U_M,V_M);\psi(U_M,V_M))\\ &\qquad\qquad = \sum_{i,j \in [M]} p^{\Phi}_{ij} \Delta^{\Phi}_M(r_{M,ij},w_{M,ij})\log\frac{p^{\Phi}_{ij}}{p^{\phi}_{ij} p^{\psi}_{ij}} + o(1)\\
    &\qquad\qquad = \underbrace{\sum_{i,j \in [M]} \left(p^{\Phi}_{ij} \log p^{\Phi}_{ij}\right)\Delta^{\phi}_M(r_{M,ij}) \Delta^{\psi}_M(w_{M,ij})}_{(a)}\\
    &\qquad\qquad \qquad\qquad - \underbrace{\sum_{i,j \in [M]} \left(p^{\Phi}_{ij} \log\left(p^{\phi}_{ij} p^{\psi}_{ij}\right)\right)\Delta^{\phi}_M(r_{M,ij}) \Delta^{\psi}_M(w_{M,ij})}_{(b)} + o(1).
\end{align}
As $M \rightarrow +\infty$ and $\max_{i \in [M]} {\rm diam}(c_{M,i}) \rightarrow 0$, it holds that 
\begin{align}
    & \text{(a)} \rightarrow -\tilde H(R,W),\\
    & \text{(b)} \rightarrow -H_C(P_{R,W}, P_R \otimes P_W),
\end{align}
where $H_C(P, Q)$ denotes the cross-entropy of $Q$ relative to $P$.
Hence, as $M \rightarrow +\infty$ and $\max_{i \in [M]} {\rm diam}(c_{M,i}) \rightarrow 0$, 
\begin{align}
    \text{(a)}-\text{(b)}\rightarrow D_{\rm KL}(P_{R,W} ~||~ P_R \otimes P_W) = I(R; W).
\end{align}


\subsection{Proof of Lemma~\ref{lem:approx-loss}}\label{sec:approx-loss}
Since infoNCE loss is scale-invariant with respect to both $f$ and $g$, without loss of generality, we assume $\EE\|f(X)\| = \EE\|g(Y)\| = 1$.
For any fineness $M \in \calM \subseteq \NN$, recall the infoNCE loss
\begin{align}
\calL(f_M, g_M, \tau) =\; & -\frac{2}{\tau} \EE\left\{\langle f_M(X), g_M(Y) \rangle \right\}\\
& +\; \EE_X\left\{\log \EE_{\tilde Y}\left[\exp\left(\frac{\langle f_M(X), g_M(\tilde Y) \rangle}{\tau}\right)\right]\right\}\\ & + \EE_{\tilde Y}\left\{\log \EE_X\left[\exp\left(\frac{\langle f_M(X), g_M(\tilde Y) \rangle}{\tau}\right)\right]\right\}.
\end{align}
Similar to the proof of \cite[Theorem 1]{wang2020understanding}, we have
\begin{align}
    \lim_{M \rightarrow +\infty}\; \EE\left\{\langle f_M(X), g_M(Y) \rangle \right\} = \EE\left\{\langle f(X), g(Y) \rangle \right\},
\end{align}
and, by the boundedness of $\exp(\langle f(X), g(\tilde Y) \rangle) \in [0, e^{\Omega}]$ and the dominated convergence theorem, 
\begin{align}
    &\;\lim_{M \rightarrow +\infty}\;\EE_X\left\{\log \EE_{\tilde Y}\left[\exp\left(\frac{\langle f_M(X), g_M(\tilde Y) \rangle}{\tau}\right)\right]\right\}\\ & = \lim_{M \rightarrow +\infty}\;\EE_X\left\{\log \EE_{\tilde Y}\left[\exp\left(\frac{\langle f(X), g_M(\tilde Y) \rangle}{\tau}\right)\right]\right\}\\
    & = \EE_X\left\{\log \EE_{\tilde Y}\left[\exp\left(\frac{\langle f(X), g(\tilde Y) \rangle}{\tau}\right)\right]\right\}.
\end{align}
Hence, it holds that
\begin{align}
    \lim_{M \rightarrow +\infty}\;\calL(f_M,g_M,\tau) = \calL(f,g,\tau).
\end{align}

\section{Additional experimental results}\label{sec:add_experim}

In this section, we present additional results to Section~\ref{sec:experim} and we follow the same setting for each dataset in Section~\ref{sec:experim}.

\subsection{Norm concentration}\label{sec:norm}
Following the discussion in Section~\ref{sec:empirical_example}, we consider the same setting with $(d,k^*)=(3,2)$ (Figure~\ref{fig:norm1}) and $(d,k^*)=(20,5)$ (Figure~\ref{fig:norm2}), respectively. In addition, for the setting with $d=3$, we also present the scatterplot for out-of-sample representations along the training process in Figure~\ref{fig:scatter-norm}.

\begin{figure}[tb]
    \centering
    \includegraphics[width=\linewidth]{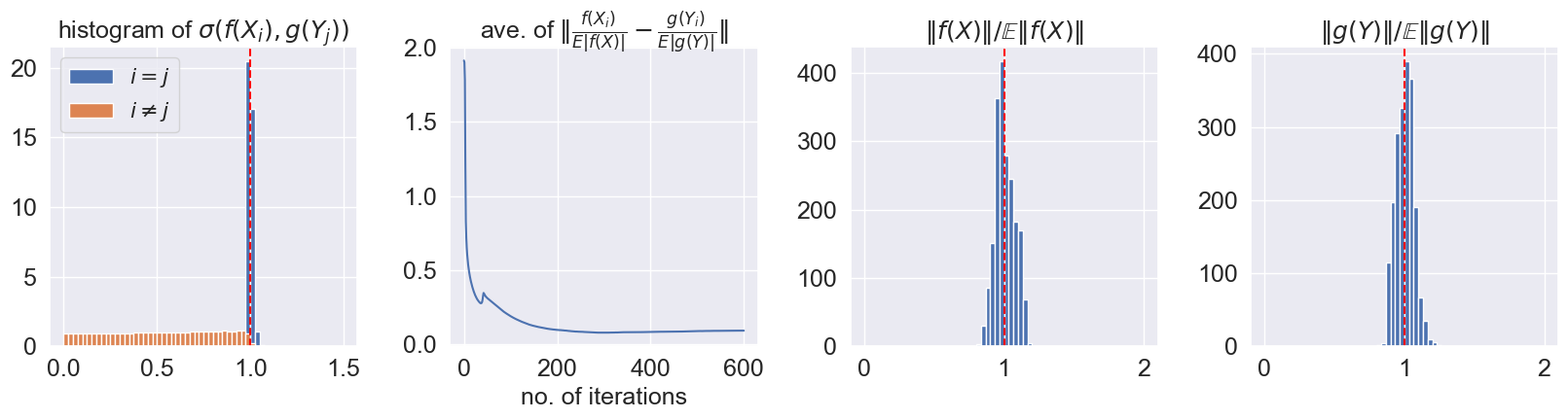}
    \caption{Histogram of similarities and norms of representations: $(d,k^*)=(3,2)$.}
    \label{fig:norm1}
\end{figure}

\begin{figure}[tb]
    \centering
    \includegraphics[width=\linewidth]{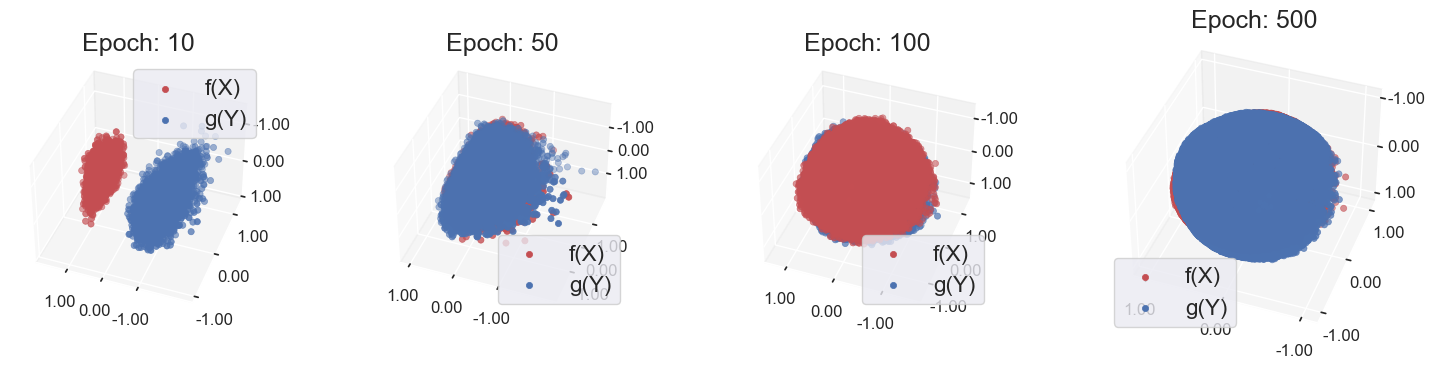}
    \caption{Scatterplots of out-of-sample representations: $(d,k^*)=(3,2)$.}
    \label{fig:scatter-norm}
\end{figure}

\begin{figure}[h]
    \centering
    \includegraphics[width=\linewidth]{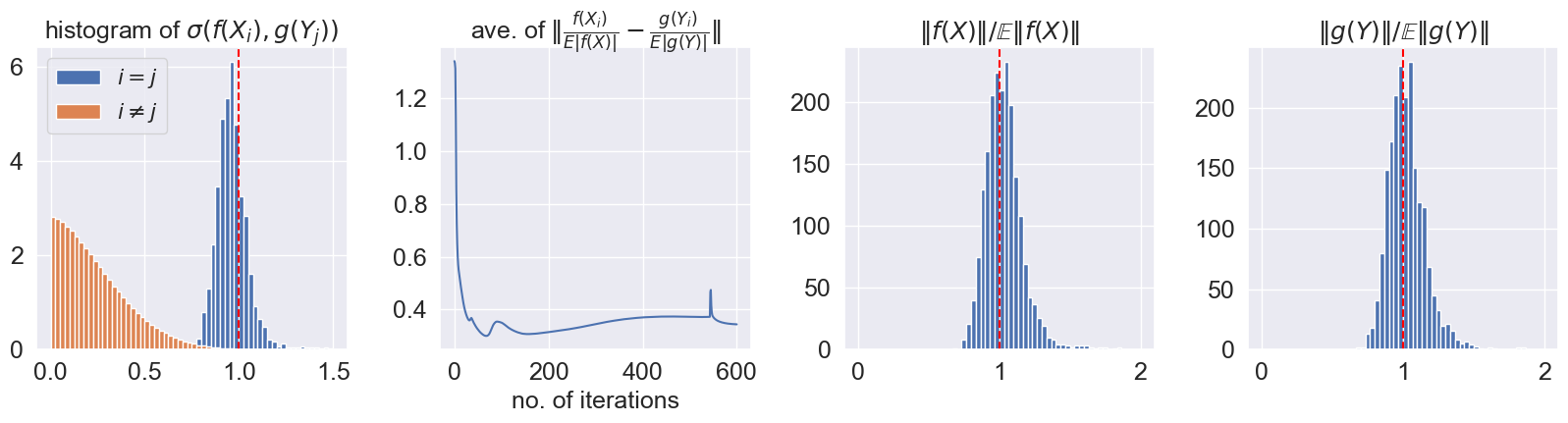}
    \caption{Histogram of similarities and norms of representations: $(d,k^*)=(20,5)$.}
    \label{fig:norm2}
\end{figure}

From Figure~\ref{fig:norm1} and Figure~\ref{fig:norm2}, it is clear that $m_{\sigma}(f,g) = 1$ and $\|f(X)\|/\EE\|f(X)\|, \|g(Y)\|/\EE\|g(Y)\| \approx 1$, which indicates that $(f,g) \in \calA(\calH)$. This empirical finding is nontrivial in the sense that in our training, there is no constraint on $\|f(X)\|/\EE\|f(X)\|$ and $\|g(Y)\|/\EE\|g(Y)\|$, but, when the function class $\calH$ is sufficiently expressive, the representations after population-level normalization tend to concentrate on the unit hypersphere automatically.

\subsection{Example with \texorpdfstring{$\calV(\calH) = \varnothing$}{V}: alignment versus simply similarity maximization}\label{sec:aligned}
To illustrate why the assumption $\calV(\calH) \neq \varnothing$ is important, in this section, we use empirical examples to show that CLIP minimizers may not be semantically meaningful if $\calV(\calH) = \varnothing$.
We consider the same setting in Section~\ref{sec:empirical_example} but instead adopt the function class with $2$-layer ReLU neural networks with the width of the middle layer as $50$, in which case $\calA(\calH) = \varnothing$. It is illustrated in Figure~\ref{fig:orth} that CLIP can lead to minimizers $(f,g)$ with $m_{\sigma}(f,g)=0$, i.e., orthogonal representations. However, even in this undesired setting, the intrinsic dimension $k^*=2$ is correctly specified.
\begin{figure}
    \centering
    \includegraphics[width=\linewidth]{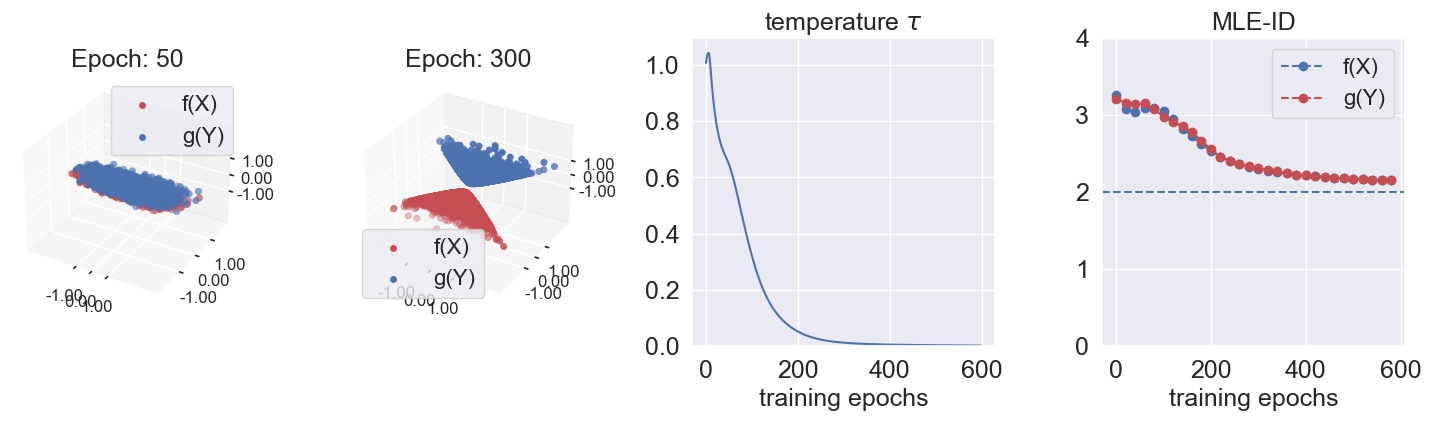}
    \caption{Results with two-layer ReLU: $m_{\sigma}(f,g)=0$.}
    \label{fig:orth}
\end{figure}

\subsection{Experiment setup}
We start with details of the experiment setup for real datasets. The same architecture is used in experiments: a $5$-layer ReLU neural network with a width of middle-layer set to be $50$ and varying input and output dimensions. The neural network is trained for $800$ epochs with learning rate $10^{-4}$ and weight decay $10^{-4}$, and a slightly faster rate is used for temperature $\tau$: $10^{-3}$ in synthetic experiments and $2 \times 10^{-4}$ for real data.

\paragraph{\texttt{CITE-seq} dataset.} We follow the preprocessing in \url{https://satijalab.org/seurat/articles/weighted_nearest_neighbor_analysis}, where we normalize ADT data with centering to produce a $24$-dimension input, and normalize the extremely high-dimensional RNA data and extract the first $200$ principal components as the inputs for CLIP.

For the downstream classification tasks, we consider two set of labels.
\texttt{CITE-seq} provides annotations of cells at two levels of granularity \citep{hao2021integrated}. For BMCs data, at level (1), $5$ major cell populations include CD4 T cells, CD8 T cells, B cells, classical monocytes (CM), and natural killer (NK) cells. Cell populations are further divided into $27$ finer subpopulations at level (2).

\paragraph{\texttt{ImageNetV2} and \texttt{YFCC} dataset.} 
For the image-text datasets, we first use pretrained image and text encoders to transform images and texts to numerical inputs. Concretely, we adopt a pretrained \texttt{ViT-14L} from \texttt{openai/clip-vit-large-patch14}\footnote{\url{https://huggingface.co/openai/clip-vit-large-patch14}.} (without projection) as the image encoder and a masked self-attention Transformer as the text encoder from the same pretrained model. Then, with the pretrained encoders, we obtain $1024$-dimensional image inputs and $768$-dimensional text inputs.

\subsection{Convergence of temperature}\label{sec:convergence_of_temperature}
We vary the output dimension and present the convergence of temperature for the following three datasets.

\begin{figure}[htb]
    \centering
    \begin{subfigure}[b]{0.32\textwidth} 
        \centering
        \includegraphics[width=\textwidth]{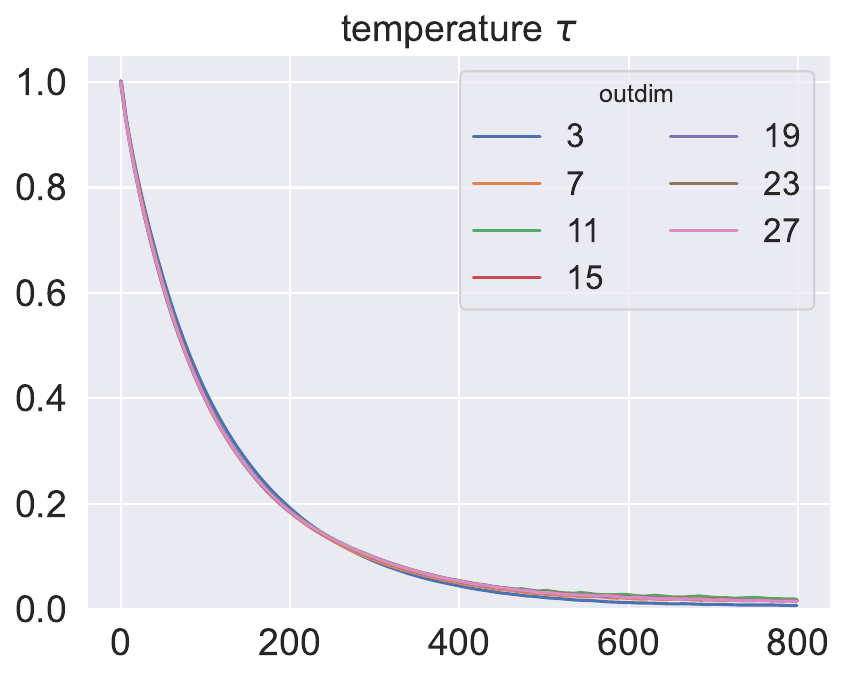} 
        \caption{\texttt{CITE-seq} dataset.}
    \label{fig:citeseq-tau}
    \end{subfigure}
    \begin{subfigure}[b]{0.32\textwidth}
        \centering
        \includegraphics[width=\textwidth]{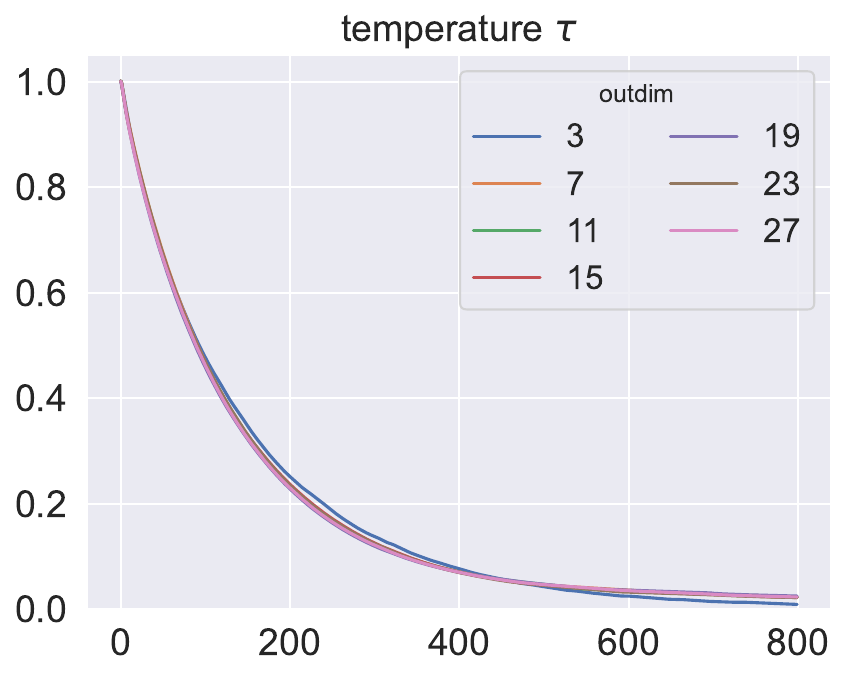}
        \caption{\texttt{YFCC} dataset.}
    \label{fig:yfcc-tau}
    \end{subfigure}
    \begin{subfigure}[b]{0.32\textwidth}
        \centering
        \includegraphics[width=\textwidth]{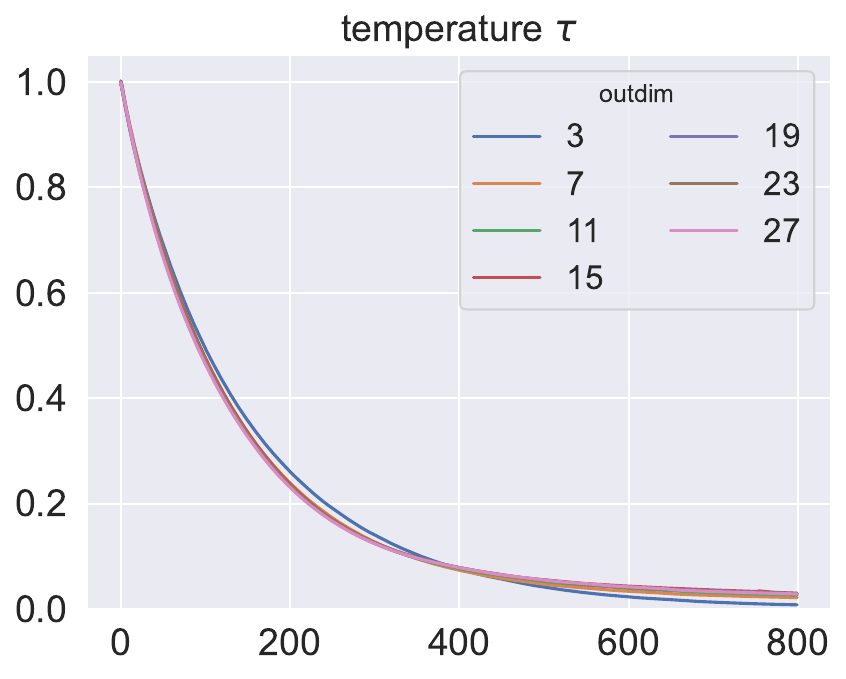}
        \caption{\texttt{ImageNetV2} dataset.}
        \label{fig:imagenetv2-tau}
    \end{subfigure}
    \caption{Convergence of temperature.}
    \label{fig:tau-conv}
\end{figure}

From Figure~\ref{fig:tau-conv}, we can see that with each real dataset and each choice of the output dimension, we have the temperature $\tau$ converging to zero, which is in line with our theory, and partially justifies that there are representations that can simultaneously maximize similarity and mutual information. 

\subsection{Comparison with cosine similarity}\label{sec:compar}
In this section, we present the experiment results with cosine similarity. Results for the synthetic (linear and nonlinear) and \texttt{CITE-seq} datasets are shown in Figure~\ref{fig:cite-linear-result}, Figure~\ref{fig:cite-nonlinear-result}, and Figure~\ref{fig:citeseq-result}, respectively.

\begin{figure*}[htb]
\begin{subfigure}{0.48\textwidth}
    \centering
    \includegraphics[width=0.98\linewidth]{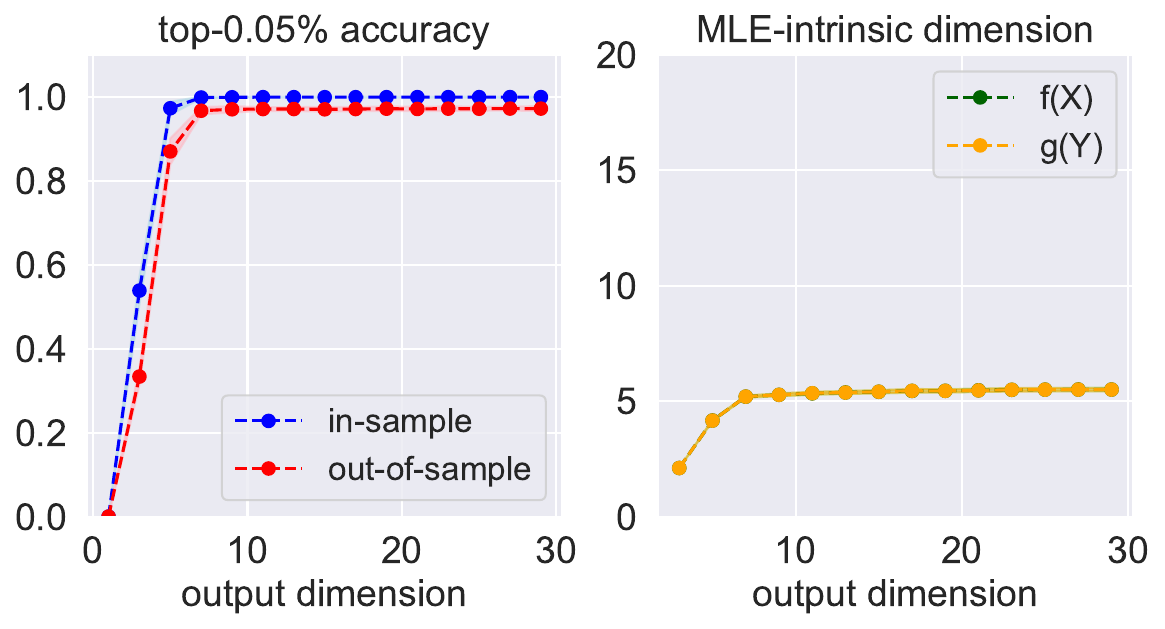}
    \caption{Linear setting.}
    \label{fig:cite-linear-result}
\end{subfigure}
~
\begin{subfigure}{0.48\textwidth}
    \centering
    \includegraphics[width=0.98\linewidth]{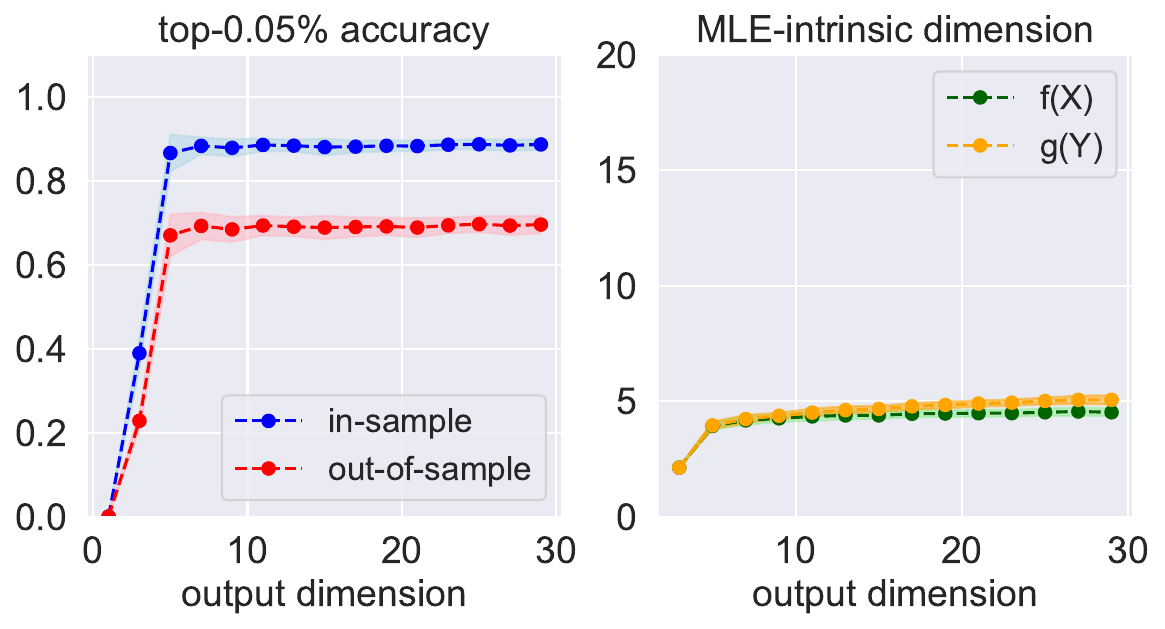}
    \caption{Nonlinear setting.}
    \label{fig:cite-nonlinear-result}
\end{subfigure}
\caption{Results with synthetic dataset.}
\end{figure*}

\begin{figure*}[htb]
    \centering
    \includegraphics[width=0.98\linewidth]{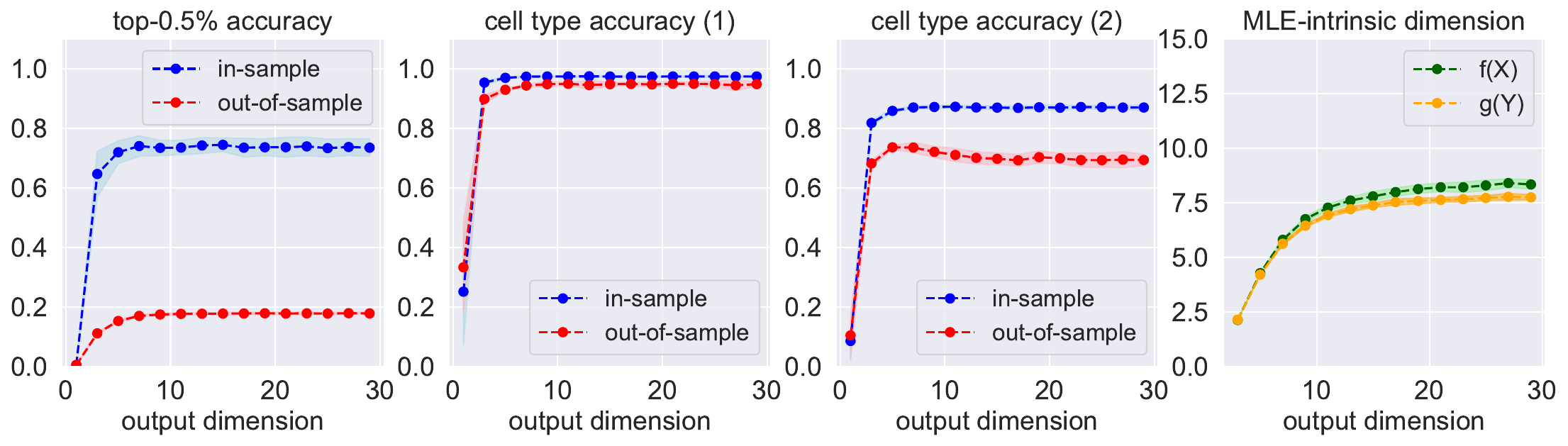}
    \caption{Results with \texttt{CITE-seq} dataset.}
    \label{fig:citeseq-result}
\end{figure*}

We can see that with two kinds of similarity measures, the change of estimated intrinsic dimensions with varying output dimensions is nearly the same.
With the similarity measure $\sigma(\cdot, \cdot)$ adopted in the paper, the top-$\alpha\%$ accuracy is even higher for synthetic data, and in the meantime, the top-$\alpha\%$ accuracy for \texttt{CITE-seq} dataset is higher with cosine similarity, which is partially due to the lower signal-to-noise ratio in the dataset and cosine similarity normalizes representations more strictly. In addition, for the downstream accuracy with respect to two levels of cell types, the results with the two similarity measures are comparable.

\subsection{Results with \texttt{ImageNetV2} dataset}\label{sec:imagenet}
We also consider the \texttt{ImageNetV2} dataset\footnote{We load the dataset from \url{https://github.com/modestyachts/ImageNetV2_pytorch}} and focus on two modalities: images and text labels. Here we use the text ``This is a photo of a/an \texttt{label}'' as the input of the text encoder for each image. In addition, each image in \texttt{ImageNetV2} dataset can be classified by coarser classes with $67$ levels \citep{eshed_novelty_detection}, which will be adopted in a downstream classification task.
We first use a pretrained image encoder (\texttt{ViT-L14}) and text encoder (a masked self-attention Transformer) to obtain preprocessed inputs: $1024$-dimensional image embeddings and $768$-dimensional text embeddings, which are used as inputs to the 5-layer ReLU neural networks. More details of preprocessing are deferred to Appendix~\ref{sec:add_experim}.

Similar to previous sections, with $|\calD_{\tr}|=8000$, $|\calD_{\te}|=1000$, and a separate dataset with size $1000$ to estimate the expected norms, we consider image classification and the top-$\alpha\%$ matching ($\alpha\% = 0.5\%$) as the downstream tasks. 
In addition, since images from the class share the same text input, to avoid the singularity when estimating the MLE-based intrinsic dimension, we add independent entrywise perturbations drawn from $\calN(0,0.01)$ to encoded text inputs before CLIP training for both $\calD_{\tr}$ and $\calD_{\te}$.

We can see that both accuracies tend to saturate when $d$ is approaching $20$ and the MLE-based estimation of the intrinsic dimension for the image embeddings is approximately $8$ when $d$ keeps increasing. Since the text embeddings have a cluster structure due to discrete labels, the MLE-based intrinsic dimensions are slightly smaller than those of image embeddings. Note that we use a 5-layer ReLU network for CLIP training only for illustration purposes, and better results can potentially be obtained with more sophisticated network architectures.

\begin{figure}[tb]
    \centering
    \includegraphics[width=0.8\linewidth]{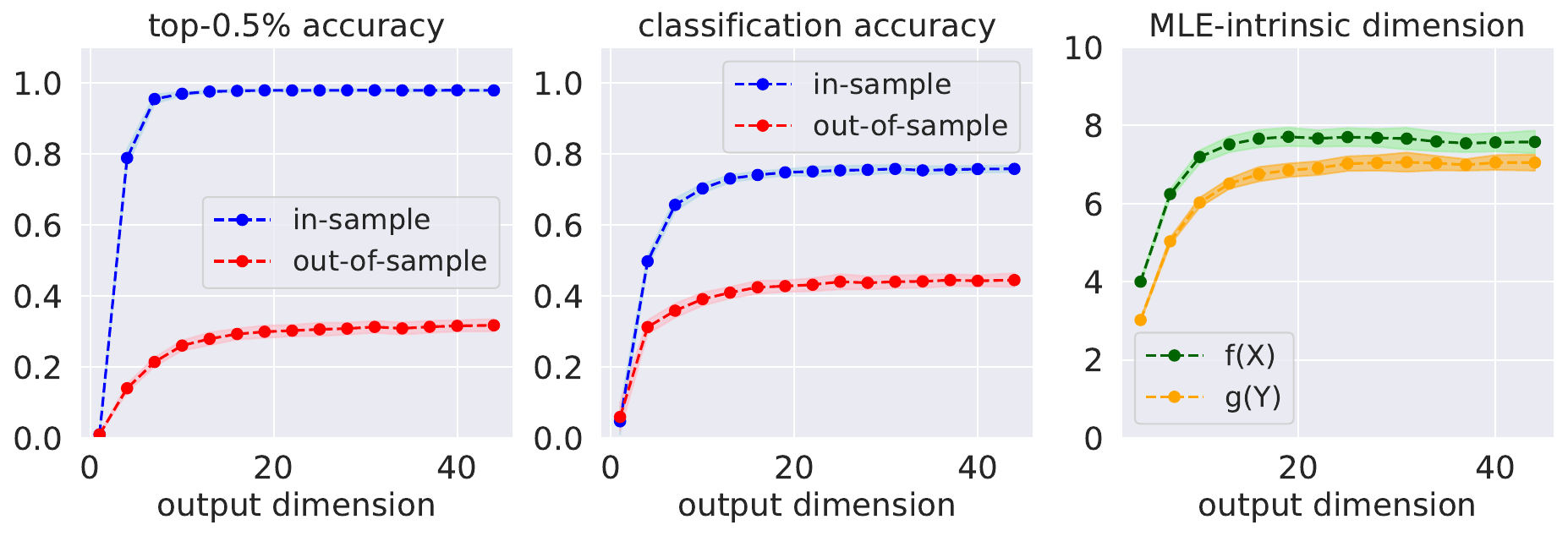}
    \caption{Results with \texttt{ImageNetV2} dataset.}
    \label{fig:imagenetv2-result}
\end{figure}

\subsection{Results with \texttt{YFCC} dataset}\label{sec:yfcc}
\texttt{YFCC} (or \texttt{YFCC100M}) is a multimedia dataset consisting of images and videos as media objects with metadata including title, tags, etc \citep{thomee2016yfcc100m}.
We adopt a subset used by OpenAI\footnote{\url{https://huggingface.co/datasets/dalle-mini/YFCC100M_OpenAI_subset}.}, and focus on two modalities: images and text data with descriptions of images. In addition, each image in \texttt{YFCC} dataset is assigned a $9$-level class label (\texttt{farmid}), which will be adopted in a downstream classification task. We follow the same procedure as that in \texttt{YFCC} experiments.
Similar to the \texttt{CITE-seq} experiments, we randomly sample $10000$ rows without replacement from the preprocessed dataset and randomly split the subset into a training set $\calD_{\tr}$ with $|\calD_{\tr}|=8000$, a test set $\calD_{\te}$ with $|\calD_{\te}|=1000$, and a separate dataset with size $1000$ to estimate the expected norms. With learned representation maps $\hat f \in \calF^{d_1,d}_{\rm NN}, \hat g \in \calF^{d_2,d}_{\rm NN}$, we consider image classification and the top-$\alpha\%$ matching ($\alpha\% = 0.5$) as the downstream tasks on $\calD_{\te}$. We vary the output dimension $d$ from $1$ to $29$, and the results averaged after $50$ repetitions are presented in Figure~\ref{fig:yfcc-result}.
\begin{figure*}[tb]
    \centering
    \includegraphics[width=0.8\linewidth]{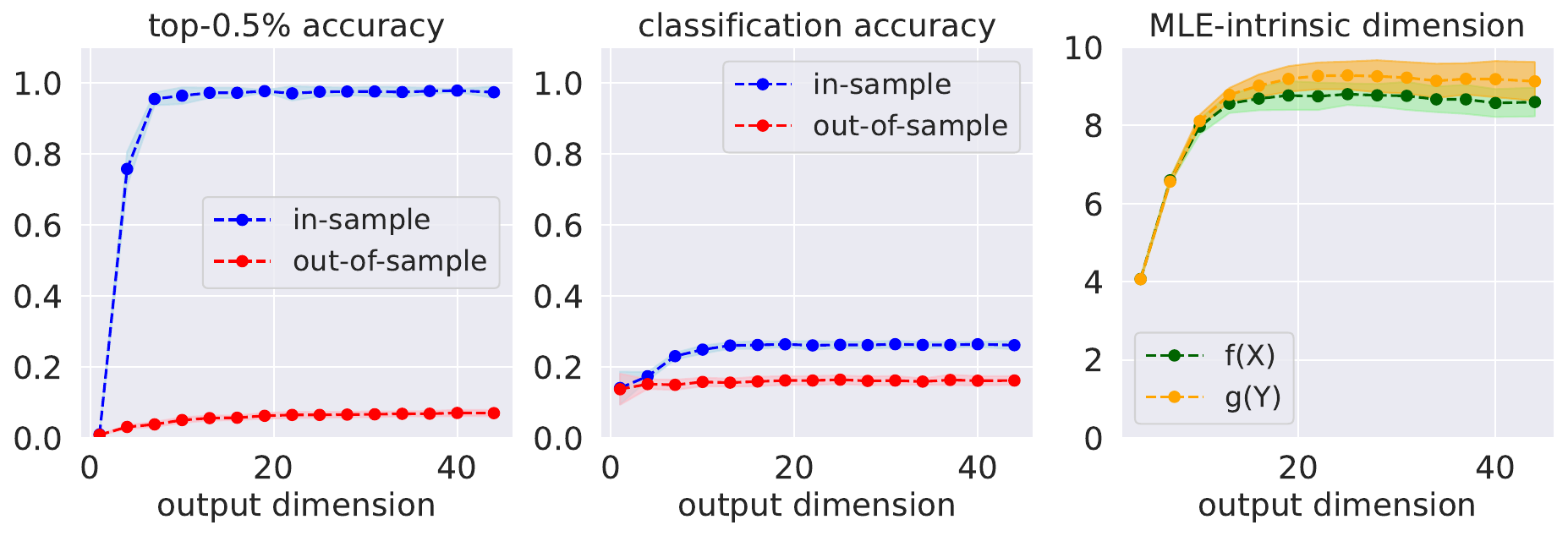}
    \caption{Results with \texttt{YFCC} dataset.}
    \label{fig:yfcc-result}
\end{figure*}
We can see that both accuracies tend to saturate when $d$ is around $20$, and the MLE-based estimation of the intrinsic dimension for the image embeddings is approximately $9$ when $d$ keeps increasing. Similarly, we use a 5-layer ReLU network for CLIP training only for illustration purposes, and better results can potentially be obtained with more sophisticated network architectures.

\end{document}